\documentclass{article}

\usepackage{microtype}
\usepackage{graphicx}
\usepackage{subfigure}
\usepackage{booktabs}
\usepackage{subcaption}
\usepackage{multicol}

\newcommand{\latinabv}[1]{\emph{#1}\,}
\newcommand{\cf}{\latinabv{cf.}}

\newcommand{\Prob}{\mathbb{P}}
\newcommand{\propone}{centroidal}
\newcommand{\proptwo}{cell-scoring}
\newcommand{\naive}{Kernel-WTA } 
\newcommand{\ours}{Voronoi-WTA }
\newcommand{\hist}{Histogram }
\newcommand{\dimY}{\mathrm{dim}(\cY)}
\newcommand{\cZV}{\mathcal{Z}_x^\mathrm{V}}
\newcommand{\cZH}{\mathcal{Z}^\mathrm{H}}
\usepackage{enumitem}
\setlist{nosep}
\newcommand{\myparagraph}[1]{\smallskip\noindent\textbf{#1}}
\usepackage{hyperref}

\usepackage[accepted]{icml2024}

\usepackage{amsmath}
\usepackage{amssymb}
\usepackage{mathtools}
\usepackage{amsthm}
\usepackage{amsfonts}
\usepackage[scr=rsfs]{mathalpha}
\usepackage{dsfont}
\usepackage{stmaryrd}
\usepackage{hyperref}
\usepackage{xparse}
\usepackage{graphicx}
\usepackage{afterpage}
\usepackage[capitalize,noabbrev]{cleveref}

\theoremstyle{plain}
\newtheorem{theorem}{Theorem}[section]
\newtheorem{proposition}[theorem]{Proposition}

\newtheorem*{definition}{Definition}
\newtheorem{assumption}[theorem]{Assumption}

\newcommand{\bx}{x}
\newcommand{\by}{y}
\newcommand{\bz}{z}
\newcommand{\bg}{g}
\newcommand{\bs}{s}
\newcommand{\cY}{\mathcal{Y}}
\newcommand{\cYk}{\mathcal{Y}^k}
\newcommand{\cX}{\mathcal{X}}
\newcommand{\cZ}{\mathcal{Z}}
\newcommand{\cN}{\mathcal{N}}
\newcommand{\cG}{\mathcal{G}}
\newcommand{\cL}{\mathcal{L}}
\newcommand{\cF}{\mathcal{F}}
\newcommand{\bN}{\mathbb{N}}
\newcommand{\bR}{\mathbb{R}}
\newcommand{\bI}{\mathds{1}}
\newcommand{\bP}{\mathbb{P}}

\newcommand{\hP}{\hat{\mathbb{P}}}

\newcommand{\ph}{\mathbb{P}_x^{\;\mathrm{H}}}

\newcommand{\pdwta}{\mathbb{P}_x^{\;\mathrm{D-WTA}}}
\newcommand{\puwta}{\mathbb{P}_x^{\;\mathrm{U-WTA}}} 
\newcommand{\pkwta}{\mathbb{P}_x^{\;\mathrm{K-WTA}}} 
\newcommand{\ptwta}{\mathbb{P}_x^{\;\mathrm{V-WTA}}}

\newcommand{\dx}{\mathrm{d}x}
\newcommand{\dy}{\mathrm{d}y}

\newcommand{\nhyp}{K} 
\newcommand{\bn}{\llbracket1,\nhyp\rrbracket}
\newcommand{\distortion}{\mathcal{R}}

\newcommand{\Ykx}{\mathcal{Y}_\theta^k(x)} 
\newcommand{\Ykg}{\mathcal{Y}_k(g)} 
\newcommand{\Ykz}{\mathcal{Y}_k(z)}
\newcommand{\f}{f_{\theta}} 
\newcommand{\fk}{f_{\theta}^k} 
\newcommand{\fkfirst}{f_{\theta}^1} 
\newcommand{\fklast}{f_{\theta}^\nhyp} 
\newcommand{\gk}{\gamma_{\theta}^k} 
\newcommand{\gkfirst}{\gamma_{\theta}^1} 
\newcommand{\gklast}{\gamma_{\theta}^{\nhyp}} 
\newcommand{\fkz}{z_k} 
\newcommand{\fkx}{f_\theta^k(x)} 
\newcommand{\pyx}{\rho_x(\by)} 
\newcommand{\px}{\rho_x} 
\newcommand{\hpyx}{\hat{\rho}_x(\by)} 
\newcommand{\hpx}{\hat{\rho}_x} 
\newcommand{\zk}{z_k} 
\newcommand{\zkx}{z_k(x)} 

\newcommand{\AYk}{\mathcal{Y}^k}  
\newcommand{\AYkn}{\mathcal{Y}^{k}_{\nhyp}}
\newcommand{\Azk}{z_k}  

\newcommand{\rkn}{r_\nhyp^k}
\newcommand{\Azkn}{z^{k}_{\nhyp}}

\usepackage[textsize=tiny]{todonotes}
\usepackage{minitoc}

\begin{document}

\twocolumn[
\icmltitle{Winner-takes-all learners are geometry-aware conditional density estimators}

\icmlsetsymbol{equal}{*}

\begin{icmlauthorlist}
\icmlauthor{Victor Letzelter}{equal,valeo,telecom}
\icmlauthor{David Perera}{equal,telecom}
\icmlauthor{Cédric Rommel}{meta}\\
\icmlauthor{Mathieu Fontaine}{telecom}
\icmlauthor{Slim Essid}{telecom}
\icmlauthor{Gaël Richard}{telecom}
\icmlauthor{Patrick Pérez}{kyutai}
\end{icmlauthorlist}

\icmlaffiliation{valeo}{Valeo.ai, Paris, France}
\icmlaffiliation{meta}{Meta AI, Paris, France}
\icmlaffiliation{telecom}{LTCI, Télécom Paris, Institut Polytechnique de Paris, France}
\icmlaffiliation{kyutai}{Kyutai, Paris, France}

\icmlcorrespondingauthor{Victor Letzelter}{victor.letzelter@telecom-paris.fr}

\icmlkeywords{Conditional density estimation, Multiple choice learning, Voronoi tessellation}

\vskip 0.3in
]

\renewcommand{\icmlEqualContribution}{\textsuperscript{*}Equal contribution.}
\printAffiliationsAndNotice{\icmlEqualContribution}

\begin{abstract}
Winner-takes-all training is a simple learning paradigm, which handles ambiguous tasks by predicting a set of plausible hypotheses. Recently, a connection was established between Winner-takes-all training and centroidal Voronoi tessellations, showing that, once trained, hypotheses should quantize optimally the shape of the conditional distribution to predict. However, the best use of these hypotheses for uncertainty quantification is still an open question.
In this work, we show how to leverage the appealing geometric properties of the Winner-takes-all learners for conditional density estimation, without modifying its original training scheme. We theoretically establish the advantages of our novel estimator both in terms of quantization and density estimation, and we demonstrate its competitiveness on synthetic and real-world datasets, including audio data.
\end{abstract}

\section{Introduction}

Machine-learning-based predictive systems are faced with a fundamental limitation when there is some ambiguity in the data or in the task itself. This results in a non-deterministic relationship between inputs and outputs, which is challenging to cope with. Characterizing this inherent uncertainty is the problem of conditional distribution estimation. 

The recently introduced Winner-takes-all (WTA) training scheme \cite{guzman2012multiple, lee2016stochastic} is a novel approach addressing ambiguity in machine learning. This scheme leverages several models, generally a neural network equipped with several heads, to produce multiple predictions, also called \textit{hypotheses}.
It trains these hypotheses competitively, updating only the hypothesis that yields the current best prediction. Experimental evidence has demonstrated that this approach enhances the diversity of predictions, with each head gradually specializing in a subset of the data distribution. 

At the same time, a limited body of work has tried to theoretically elucidate the appealing characteristics of Winner-takes-all learners. 
Specifically, \citet{rupprecht2017learning} described the geometrical properties of the trained WTA learners using the formalism of \textit{centroidal Voronoi tessellations}. This approach is linked to the field of quantization, where the objective is to represent an arbitrary distribution optimally using a finite set of points \cite{zador1982asymptotic}. 

Being able to quantize a distribution in an input-dependent manner, WTA learners have the potential to model the \textit{geometric} information of a distribution. This raises the following question: can WTA learners be used to make accurate \textit{probabilistic} predictions? This paper affirms this possibility.

We build upon the recent findings of \citet{letzelter2023resilient}, which proposed modeling uncertainty from WTA predictions, using either Dirac or uniform mixtures. We extend this idea by proposing a kernel-based density estimator for WTA predictors. This enables the computation of uncertainty metrics, such as the negative log-likelihood, from trained WTA models. This development introduces a novel method for the probabilistic evaluation of WTA predictions. Notably, it can be used even when only a single target from the conditional distribution is available for each input.

The key contributions of this work are as follows:

\begin{enumerate}
    \item We introduce an estimator that provides a comprehensive probabilistic interpretation of WTA predictions while retaining their appealing geometric properties.
    \item We mathematically validate the competitiveness of our estimator, both in terms of geometric quantization properties and probabilistic convergence, as the number of hypotheses increases.
    \item We empirically substantiate our estimator through experiments on both synthetic and real-world data, including audio signals.\footnote{Code at \href{https://github.com/Victorletzelter/VoronoiWTA}{https://github.com/Victorletzelter/VoronoiWTA}.}
\end{enumerate}

\section{Background}
\label{sec:background}

\subsection{Winner-takes-all training}

The Winner-takes-all training scheme is the basic building block of the \textit{Multiple Choice Learning} family of approaches \cite{guzman2012multiple, lee2016stochastic, lee2017confident, tian2019versatile}.
It was introduced to deal with inherently ambiguous prediction tasks. Specifically, we are not only interested in predicting a single output $f_{\theta}(\bx) \in \cY$ from a given input $\bx \in \cX$ ($f_\theta$ can typically be a deep neural network with parameters $\theta$).
Instead, we want to perform several predictions $f_{\theta}^1(\bx), \dots, f_{\theta}^\nhyp(\bx)$ accounting for $K$ potential outcomes.

More precisely, let $f_{\theta} \triangleq (\fkfirst, \dots, \fklast) \in \mathcal{F}(\cX,\cY^\nhyp)$, which could be for instance a multi-head deep neural network, and let $(\bx,\by) \in \cX \times \cY$ be 
a pair sampled from a joint distribution $\bP$ (with density $\rho(\bx,\by)$).

In a supervised setting, the WTA training consists in:
\begin{enumerate}
    \item performing a forward pass through the loss $\ell$ for all predictors,
    \item then backpropagating the loss gradients for the selected \textit{winner} hypothesis:
\begin{equation}
   \mathcal{L}^{\mathrm{WTA}}(\theta) \triangleq \min _{k \in \bn} \ell\left(\fk\left(\bx\right), \by\right)\,. \label{eq:loss}
\end{equation}
\end{enumerate}

This two-step approach, originally proposed by \citet{lee2016stochastic}, makes it possible to use gradient-based optimization, despite the non-differentiability of the $\min$ operator in \eqref{eq:loss}.

\subsection{Desirable geometrical properties}
\label{sec:geometrical-properties}

\citet{rupprecht2017learning} have shown that, once trained, the output of the set of predictors $(\fkfirst(\bx), \cdots, \fklast(\bx))$ can be interpreted as an \emph{input-dependent centroidal Voronoi tessellation}, thereby providing a geometrical probabilistic interpretation of WTA.
As done by \citet{rupprecht2017learning}, we study the case where $\ell(\hat{\by}, \by) = \|\hat{\by}-\by\|^2$ is the the $\mathrm{L}^2$ loss.
In a standard machine learning setting, where a single prediction is provided, one can prove that the risk
\begin{equation}
   \mathbb{E}_{(\bx,\by) \sim \rho(\bx,\by)}[ \ell(f_{\theta}(\bx),\by)]\,,
    \label{eq:vanilla-risk}
\end{equation}
is minimized when $\; \forall x\in\cX, \; \f(\bx) = \mathbb{E}[Y_x]$,
noting $Y_x\sim\bP_x$ the conditional distribution
and $\rho_x$ its density. The proof of this result is based on the customary assumption that the predictor $f_{\theta}$ is 
sufficiently expressive, so that minimizing the risk \eqref{eq:vanilla-risk} is equivalent to minimizing the input-dependent risk, $\mathbb{E}_{\by \sim \pyx}[ \ell(f_{\theta}(\bx),\by)]$, for each fixed input $\bx$.
When multiple predictors are used, as in the WTA case, the situation is more complex.
In this case, after defining Voronoi cells as:
\begin{equation}
    \Ykg \triangleq\left\{\by \in \mathcal{Y}\;|\;\ell\left(\bg_k, \by\right)<\ell\left(\bg_r, \by\right), \forall r \neq k\right\}\,,
\end{equation}
for some arbitrary set of generators $(\bg_1,\cdots,\bg_\nhyp)$,
the input-dependent risk writes, for each $\bx \in \cX$, as 
\begin{equation}
    \sum_{k=1}^{\nhyp} \int_{\mathcal{Y}_{\theta}^k(\bx)} \ell(f^k_{\theta}(\bx),\by) \rho_{x}(y) \mathrm{d}\by\,,
    \label{eq:wta-risk}
\end{equation}
where for ease of notation ${\mathcal{Y}^k_{\theta}(\bx) \triangleq \mathcal{Y}_k(f_{\theta}(\bx)})$.
Note that \eqref{eq:wta-risk} is not differentiable with respect to the parameters $\theta$, which are involved both in the integrand
and in the integration domain.
This issue can be alleviated by uncoupling the two variables and defining:
\begin{equation}
   \mathcal{K}(\bg,\bz) \triangleq \sum_{k=1}^\nhyp \int_{\Ykg} \ell\left(\bz_k, \by\right)\rho_{x}(y)\mathrm{d} \by\,,
    \label{problm}
    \end{equation}
where ${\bz = (\bz_1,\cdots,\bz_\nhyp)}$.
Note that \eqref{eq:wta-risk} corresponds to $\mathcal{K}(f_\theta(\bx), f_\theta(\bx))$.

For the purpose of the next proposition, let us define a centroidal Voronoi tessellation.
\begin{definition}[Centroidal Voronoi Tessellation]
    We say that $\{\Ykz\}$ forms a centroidal Voronoi tessellation with respect to a density function $\rho_x$ and a loss function $\ell$ if, for each cell $k$, the generator $\zk$ minimizes the weighted loss over its region: 
$$
\int_{\mathcal{Y}_{k}(z)} \rho_x(\by) \ell\left(z_k, \by\right) \mathrm{d}\by=\inf _{z^{\prime} \in \mathcal{Y}^{\star}_k(z)} \int_{\mathcal{Y}_k(z)} \rho_x(\by) \ell(z^{\prime}, \by)\mathrm{d}\by\,,
$$
where $\mathcal{Y}^{\star}_k(z)$ is the closure of $\mathcal{Y}_k(z)$.
\end{definition}

Based on formulation \eqref{problm}, \citet{rupprecht2017learning} show that one can adapt the following result.
\begin{proposition}[\citeauthor{du1999centroidal}, \citeyear{du1999centroidal}]
A necessary condition for minimizing \eqref{problm} is that $\Ykg$ are the Voronoi regions generated by the $\zk$, and simultaneously, $\{\Ykz\}$ forms a centroidal Voronoi tessellation generated by $\{\zk\}$.
\label{th:du_et_al}
\end{proposition}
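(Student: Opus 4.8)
The plan is to minimize $\mathcal{K}(\bg,\bz)$ block-by-block, exploiting the fact that \eqref{problm} was deliberately written so that the generators $\bg$ enter only through the integration domains $\{\Ykg\}$ while the points $\bz$ enter only through the integrands. A minimizer $(\bg^\star,\bz^\star)$ of $\mathcal{K}$ must be optimal in each block given the other, so I would prove two matching optimality statements and combine them; this is the ``Lloyd-type'' first-order argument underlying \citet{du1999centroidal}. Throughout I would assume $\px$ is absolutely continuous (and $\ell$ the squared Euclidean loss, as in the excerpt), so that the boundaries $\partial\Ykg$ are $\px$-negligible and the strict inequalities in the definition of $\Ykg$ are immaterial.

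\textbf{Optimality in $\bg$ given $\bz$.} Fix $\bz$. For any measurable partition $\{A_k\}_{k\in\bn}$ of $\cY$ one has the pointwise inequality $\sum_k \bI_{A_k}(\by)\,\ell(\bz_k,\by) \ge \min_k \ell(\bz_k,\by)$, hence $\sum_k \int_{A_k}\ell(\bz_k,\by)\px(\by)\dy \ge \int_\cY \min_k \ell(\bz_k,\by)\,\px(\by)\dy$, with equality ($\px$-a.e.) exactly when $A_k$ coincides $\px$-a.e.\ with the Voronoi region of $\bz_k$. In particular the admissible choice $\bg=\bz$ attains the lower bound, so $\mathcal{K}(\bg,\bz)\ge \mathcal{K}(\bz,\bz)$ for every $\bg$. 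Applying this at $\bz=\bz^\star$ and using that $(\bg^\star,\bz^\star)$ is a minimizer forces equality, which in turn forces $\{\mathcal{Y}_k(\bg^\star)\}$ to be ($\px$-a.e.) the Voronoi regions generated by $\bz^\star$; otherwise replacing $\bg^\star$ by $\bz^\star$ would strictly decrease $\mathcal{K}$.

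\textbf{Optimality in $\bz$ given $\bg$, and conclusion.} Fix $\bg=\bg^\star$, hence fix the cells. Then $\mathcal{K}(\bg^\star,\bz)=\sum_k \int_{\mathcal{Y}_k(\bg^\star)}\ell(\bz_k,\by)\px(\by)\dy$ separates: the $k$-th summand depends on $\bz$ only through $\bz_k$, so minimality of $(\bg^\star,\bz^\star)$ means each $\bz_k^\star$ minimizes $z'\mapsto\int_{\mathcal{Y}_k(\bg^\star)}\ell(z',\by)\px(\by)\dy$. Substituting the identification $\mathcal{Y}_k(\bg^\star)=\Ykz$ from the previous step, and noting that for the squared loss the unique minimizer (the $\px$-weighted centroid) lies in the convex hull, hence in the closure $\mathcal{Y}^\star_k(z^\star)$, of the convex Voronoi cell, we obtain exactly the centroidal condition of the Definition: $\int_{\Ykz}\px(\by)\ell(\bz_k^\star,\by)\dy=\inf_{z'\in\mathcal{Y}^\star_k(z^\star)}\int_{\Ykz}\px(\by)\ell(z',\by)\dy$ for each $k$. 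Together with the Voronoi property from the first step, this is the claimed necessary condition.

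\textbf{Main obstacle.} The conceptual engine is the two-block inequality $\mathcal{K}(\bg,\bz)\ge\mathcal{K}(\bz,\bz)$; the delicate points are purely measure-theoretic and topological bookkeeping: (i) showing the Voronoi partition is essentially unique, i.e.\ $\partial\Ykg$ is $\px$-null, which is where the regularity hypothesis on $\px$ is used; (ii) ensuring the per-cell minimizer in the second step exists and sits in the \emph{closure} of its own cell — this is precisely why the Definition is stated with $\mathcal{Y}^\star_k$, since the centroid may fall on the cell boundary; and (iii) discarding degenerate cells of zero $\px$-mass, for which $\bz_k$ is unconstrained and the corresponding condition is vacuous. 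I would collect (i)--(iii) as standing assumptions and then run the argument above, citing \citet{du1999centroidal} for the original statement and \citet{rupprecht2017learning} for its adaptation to the WTA setting.
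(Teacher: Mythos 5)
The paper does not actually prove Proposition~\ref{th:du_et_al}: it imports it from \citet{du1999centroidal} (via \citet{rupprecht2017learning}) without proof, so there is no in-paper argument to compare against. Your proposal is a correct reconstruction of the standard two-block (Lloyd-type) argument that Du, Faber and Gunzburger themselves use: the lower bound $\mathcal{K}(\bg,\bz)\ge\mathcal{K}(\bz,\bz)$ with its equality case pins the partition to the Voronoi regions of $\bz^\star$, and the separability of $\mathcal{K}(\bg^\star,\cdot)$ over $k$ yields the per-cell centroid condition; your handling of the equality case (via $\px$-null cell boundaries) and of zero-mass cells is the right bookkeeping. Two small caveats worth making explicit if you write this up: the claim that the cell boundaries are $\px$-null needs the generators $\bz_k$ to be pairwise distinct (absolute continuity alone does not cover coinciding generators, though such configurations are easily excluded or handled separately), and the step placing the centroid in $\mathcal{Y}^{\star}_k(z^\star)$ uses convexity of the cells, hence implicitly convexity of $\cY$ — an assumption consistent with the paper's setting ($\cY=[-1,1]^d$) but not stated in the proposition.
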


In particular, if $\ell$ is the $L^2$-loss, this condition implies that, for each non-zero probability cell, the optimal hypotheses placements correspond to cell-restricted conditional expectations as stated in Theorem 1 of \citet{rupprecht2017learning}:
\begin{equation}
    \fk(\bx) = \mathbb{E}[Y_{\bx} \mid Y_{\bx} \in \Ykx]\,.
    \label{eq:expectation}
\end{equation}

This necessary condition, which offers a geometrical interpretation of the Winner-takes-all optimum, has been verified experimentally in previous works \cite{rupprecht2017learning, letzelter2023resilient}, thus demonstrating the method's potential to predict \textit{input-dependent} centroidal Voronoi tessellations using deep neural networks.

\subsection{Probabilistic interpretation as a mixture model}\label{sec:rMCL}
Proposition \ref{th:du_et_al} highlights the geometric advantages of WTA, but it does not provide a full probabilistic interpretation of this method.
First, \eqref{eq:expectation} is valid only in
Voronoi cells with strictly positive mass,
\textit{i.e.}, containing at least one sample from the ground-truth empirical distribution.
Furthermore, the WTA predictor from \citet{rupprecht2017learning} implicitly affects equal probability to all Voronoi cells, regardless of their ground-truth probability mass.

As a possible solution, \citet{tian2019versatile} and \citet{letzelter2023resilient}
propose to additionally train \textit{score} heads ${\gkfirst, \ldots, \gklast \in \mathcal{F}(\cX,[0,1])}$, estimating the probability mass of each cell $\Prob(Y_{\bx} \in \Ykx)$ by jointly optimizing in $\theta$ the WTA loss \eqref{eq:loss} with the cross-entropy 
\begin{equation}
\mathcal{L}^{\mathrm{scoring}}(\theta) \triangleq \sum_{k=1}^\nhyp\mathrm{BCE}
\left(\mathds{1}\left[\by\in\Ykx\right],\gamma_\theta^k(\bx)\right)\,,
\end{equation}
between the predicted assignation probability $\gk(x)$ and the actual assignation, where $\mathrm{BCE}(p, q) \triangleq - p\log(q) - (1-p)\log(1-q)$. The full training objective is therefore defined as a compound loss $\mathcal{L}^{\mathrm{WTA}}+\beta \mathcal{L}^{\mathrm{scoring}}$. Mirroring Proposition \ref{th:du_et_al}, one can show that a necessary condition to minimize the scoring objective is that each $\gk(\bx)$ is an unbiased estimator of the probability mass of its cell:
\begin{equation}
    \gk(\bx) = \Prob(Y_{\bx} \in \Ykx)\,.
    \label{eq:score}
\end{equation}

Assuming now that \eqref{eq:expectation} and \eqref{eq:score} are verified after training, through the minimization of the combined objective, \citet{letzelter2023resilient} argued that it is possible to interpret the outputs of such a model probabilistically, as a Dirac mixture:
\begin{equation}
    \label{eq:mixturedelta}
    \hpyx = \sum_{k=1}^{\nhyp} \gamma^{k}_{\theta}(\bx) \delta_{f^{k}_{\theta}(\bx)}(\by)\,.
\end{equation}

Let $\hat{Y}_{\bx} \sim \hpx$ denote the random variable sampled from this estimated conditional distribution.
The Dirac mixture interpretation \eqref{eq:mixturedelta} has at least two desired properties:
\begin{enumerate}
    \itemsep0em
    \item \textbf{[\propone\ property]} the cell-restricted expectation with respect to the estimated distribution matches the ground truth:
    \begin{equation}
        \mathbb{E}[\hat{Y}_{\bx} \mid \hat{Y}_{\bx} \in \Ykx]=\mathbb{E}[Y_{\bx} \mid Y_{\bx} \in \Ykx]\,,
        \label{eq:centroidal-prop}
    \end{equation}
    \item \textbf{[\proptwo\ property]} the predicted probability mass of the Voronoi cells is unbiased:
    \begin{equation}
        \Prob(\hat{Y}_{\bx} \in \cYk_{\theta}(\bx))=\Prob(Y_{\bx} \in \cYk_{\theta}(\bx))\,.
        \label{eq:scoring-prop}
    \end{equation}
\end{enumerate}

This interpretation is hence appealing as it captures the global shape of the distribution.
However, it presents a major caveat: it does not
capture local variations of mass within the Voronoi cells.\\

\section{Limitations of current estimators}
\label{sec:unifying}

We illustrate hereafter the main limitations of the
probabilistic interpretation of
the score-based WTA proposed in \citet{letzelter2023resilient}. To this end, we consider a toy example inspired from \citet{rupprecht2017learning}.

Our goal is to predict an input-dependent distribution ${\hpyx}$, where
$x$ lives in the unit-segment $\cX = [0,1]$, and $y$ is restricted to the 2D-square $\cY = [-1,1]^2$.
The latter is split into four quadrants:
$S_1=[-1,0) \times[-1,0), S_2= [-1,0) \times[0,1], S_3=[0,1] \times[-1,0)$ and $S_4=[0,1] \times[0,1] $.
The target distribution for each $\bx$ is then generated by first sampling one of the four quadrants with probabilities $p\left(S_1\right)=p\left(S_4\right)=\frac{1-x}{2}$ and $p\left(S_2\right)=p\left(S_3\right)=\frac{x}{2}$.
Once a region is sampled, a point is then drawn from a predetermined distribution restricted to that region:
uniform distributions in $S_1$ and $S_4$, and Gaussian distributions in $S_2$ and $S_3$ (with different standard deviations, respectively, $\sigma_2 \gg \sigma_3$). 

We trained a $20$-hypothesis scoring-based WTA model, consisting of a three-layer MLP, on this dataset. 
The predictions for three inputs $x \in \{0.01,0.6,0.9\}$ are shown in Figure \ref{fig:uni-to-gaussians}. 

For small values of $x$, the ground-truth conditional distribution is piece-wise uniform.
In this situation, the mass does not vary much inside the predicted Voronoi cells, and the whole distribution is hence well summarized by the predicted hypotheses and scores alone.
However, the same cannot be said as $x$ increases. 
Indeed, when $x \in \{0.6,0.9\}$, we see on the bottom-right quadrant ($S_3$) that the small-variance Gaussian is modeled by a single hypothesis and Voronoi cell.
Although the hypothesis seems to be well-positioned at the true Gaussian mean and its corresponding cell seems to verify both \propone\ and \proptwo\ properties, the local mass variations within the cell are not well-described.
More precisely, neither a Dirac delta, as in \eqref{eq:mixturedelta}, nor a cell-restricted uniform distribution seem like good estimations of the underlying conditional probability density within $S_3$.

This problem of intra-cell density approximation can be partially mitigated by increasing the number of hypotheses and using uniform mixtures. However, we argue that more accurate estimators can be built from the adaptive grid provided by WTA, even when the number of hypotheses is low.

This example highlights the need for
WTA-based conditional density estimators taking into account the data distribution geometry through optimal hypotheses placement.

\begin{figure}[ht]
    \centering
    \includegraphics[width=1.0\columnwidth]{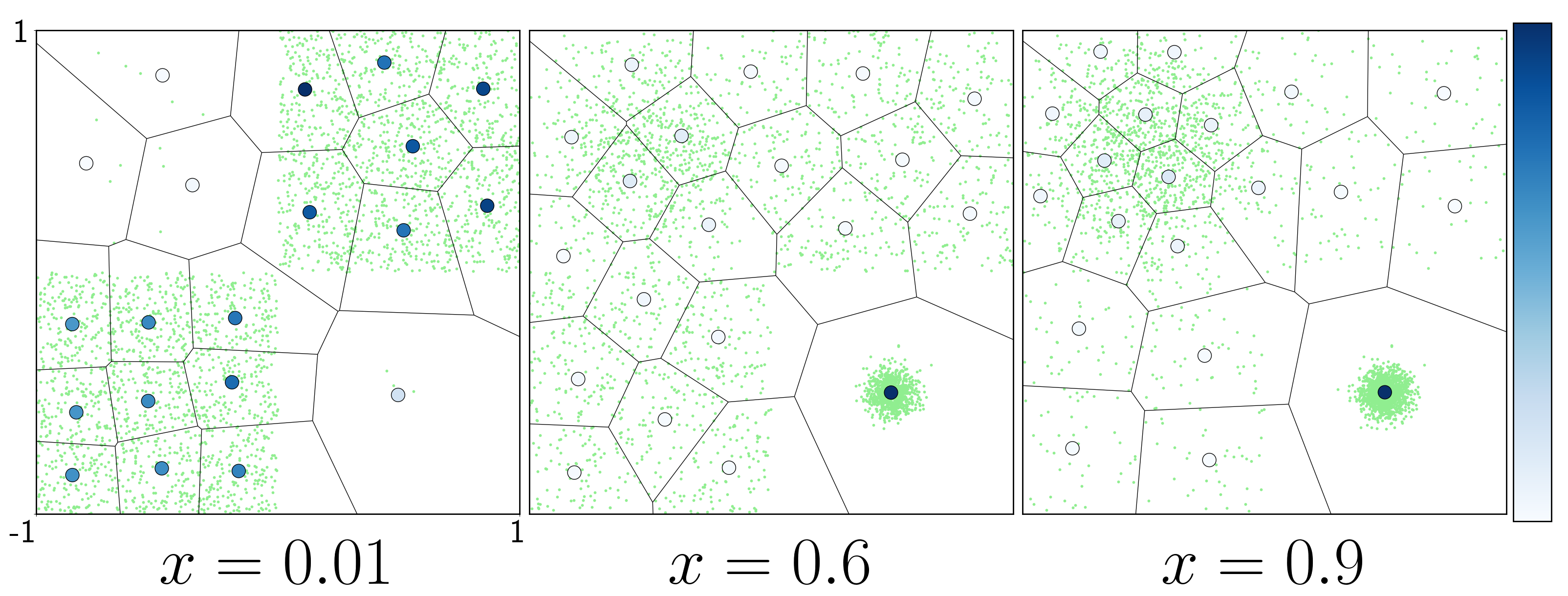}
     \caption{\textbf{Limitations of Dirac Mixtures.} Model predictions for different inputs $x$ (columns) are shown with blue-shaded circles; the colorbar indicates hypothesis scores. Green points depict the target distribution for each input. Black lines mark the boundaries of the Voronoi tessellation associated with the predictions.
    }
    \label{fig:uni-to-gaussians}
\end{figure}

\section{Conditional density approximation}
\label{sec:method}

The goal of this work is to propose a probabilistic interpretation of the Winner-takes-all predictions as a conditional density estimator that preserves the global geometric properties of the predictions (centroidal Voronoi tessellation) and captures the local variations of the probability density, including inside Voronoi cells. Ideally, we would also like our estimators to verify both centroidal \eqref{eq:centroidal-prop} and cell-scoring properties \eqref{eq:scoring-prop}.

\subsection{Kernel WTA}
\label{sec:Kernel-WTA}
A straightforward way to model intra-cell density variations is
to place a kernel $K_{h}(\cdot, \cdot): \cY \times \cY \rightarrow \mathbb{R}_+$ on each hypothesis $k$ as in a traditional Parzen estimator ~\cite{rosenblatt1956remarks,parzen1962estimation}.
This procedure defines the following conditional density estimator, called \mbox{\emph{Kernel-WTA}} hereafter:
\begin{equation}
    \hpyx = \sum_{k=1}^{\nhyp} \gk(\bx) K_h(\fk(\bx),\by)\,,
    \label{eq:kde}
\end{equation}
where $h \in \mathbb{R}_{+}^{*}$ is the scaling factor of the kernel. 
In this work, we consider only isotropic kernels, \textit{e.g.,}
Gaussian, assumed to integrate to $1$ in their second variable. 
Despite being simple and allowing intra-cell variations to be modeled, this method has drawbacks.
Indeed, we see from \eqref{eq:kde} that, whenever $h$ is too large, the kernels begin to diffuse density out of their Voronoi cells.
As a result, \emph{neither} the \propone\ \eqref{eq:centroidal-prop} nor the \proptwo\ \eqref{eq:scoring-prop} properties hold anymore, meaning that the geometric advantages offered by the Winner-takes-all predictions are not fully preserved.
As a second drawback, the convergence of Kernel-WTA is highly dependent on the choice of $h$, as discussed in Section \ref{sec:theory}.

\subsection{Voronoi WTA}
\label{sec:truncated-kernels}
As mentioned in the previous section, the straightforward Kernel-WTA fails to preserve the geometric properties when $h$ is too large.
Inspired by \citet{polianskii2022voronoi}, we propose to alleviate this problem using truncated kernels.
More precisely, let $V(g_k, K_h) \triangleq \int_{\cY_k(g)} K_h \left(g_k, \tilde{\by}\right) \mathrm{d} \tilde{\by}$ be the volume of the Voronoi cell defined by generator $g_k$, under the metric induced by kernel $K_h$.
We define the \mbox{\emph{Voronoi-WTA}} estimator as:
\begin{equation}
    \hpyx=\sum_{k=1}^\nhyp \gamma_\theta^k(\bx) \frac{K_h\left(f_\theta^k(\bx), \by\right)}{V(f_\theta^k(\bx), K_h)} \mathds{1}\left(\by \in \cYk_\theta(\bx)\right)\,.
    \label{eq:voronoi-wta}
\end{equation}

Unlike Kernel-WTA, the density estimations derived from \eqref{eq:voronoi-wta} are designed to fulfill the \proptwo\ property, \textit{i.e.}, if ${\hat{Y}_{x} \sim \hpx}$, $\mathbb{P}(\hat{Y}_{x} \in \Ykx) = \gk(\bx) = \mathbb{P}(Y_{\bx} \in \Ykx)$. Using this property, the convergence in distribution of Voronoi-WTA as $K$ approaches infinity, shown in Section \ref{sec:convergence}, is independent of the choice of $h$. As in \eqref{eq:kde}, $h$ remains constant by design for each input $x$, and in each cell $k$. Note that increasing $h$ causes a slight shift in the barycenter of the predicted distribution from the ground-truth expectation in each cell. Nonetheless, in our setup, $h$ will be optimized after the optimization of $\theta$, ensuring that \eqref{eq:expectation} is still verified.

\subsection{Likelihood computation and sampling}
\label{sec:likelihood}

In practice, the use of the Voronoi-WTA defined in \eqref{eq:voronoi-wta} raises three main questions: (1) how to sample from this estimator, (2) how to compute likelihoods, and (3) how to choose the scaling factor $h$.

\myparagraph{Sampling.} \emph{Rejection sampling} is a simple way of sampling from \eqref{eq:voronoi-wta}.
In practice, one can first draw a Voronoi cell $k \in \bn$ from the discrete distribution of predicted scores $\{\gamma^l_\theta(x)\}$, then sample from the kernel $K_h(f^k_\theta(\bx),\cdot)$ until a sample falls in cell $k$. This approach was efficient enough for all our experiments. Note that whenever the number of hypotheses or the dimension is large, the more efficient \emph{hit-and-run} sampling method from \citet{polianskii2022voronoi} may be adapted for this setup.

\myparagraph{Likelihood computation.}
The difficulty in computing \eqref{eq:voronoi-wta} comes from the normalization term $V(\fk(\bx), K_h)$.
In practice, it can be computed efficiently by re-writing $V(\fk(\bx), K_h)$ as a double-integral in spherical coordinates as explained in \citet{polianskii2022voronoi}.
As the inner integral often has a closed-form solution for usual kernels, a simple Monte Carlo approximation of the outer integral allows us to efficiently estimate $V(\fk(\bx), K_h)$.

\myparagraph{Choosing the scaling factor.}
We limit ourselves in this work to the case where the kernels in all Voronoi cells share the same scaling factor $h$.
The latter has to be tuned, which can be done in practice based on the likelihood obtained by the model
in a validation set.
Note that when $h \rightarrow 0$,
both Kernel-WTA  \eqref{eq:kde} and Voronoi-WTA estimators \eqref{eq:voronoi-wta} are equivalent to the Dirac mixture \eqref{eq:mixturedelta} from \citet{letzelter2023resilient}.
However, when $h$ increases, Kernel-WTA loses the geometry captured by the hypotheses $\{\fk(\bx)\}$, while Voronoi-WTA preserves it, converging to a piece-wise uniform distribution defined on the Voronoi tessellation.
This is verified experimentally in Section \ref{sec:experiments}.

\section{Theoretical properties}
\label{sec:theory}

In this section, we present informally the two main theoretical results of this work.
These propositions are made precise in the appendix, together with other complementary results and their corresponding proofs.

\subsection{Convergence in distribution independent of $h$}
\label{sec:convergence}

As a first theoretical contribution, we show in the following proposition that Voronoi-WTA is an effective conditional density estimator, in the sense that it converges to the ground-truth underlying distribution:
\begin{proposition} \label{th:convergence-paper}
    Under mild assumptions on $\cY$ and the data distribution, Voronoi-WTA (seen as a density estimator) converges in probability towards the conditional distribution $\bP_{\bx}$ when the number \nhyp~of hypotheses grows to infinity.
\end{proposition}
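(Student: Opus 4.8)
The plan is to reduce the Voronoi-WTA density \eqref{eq:voronoi-wta} to a \emph{histogram estimator} over the centroidal Voronoi partition, to prove that this histogram converges, and then to show that the kernel truncation-and-renormalization only adds a vanishing correction — which is exactly why the limit will not depend on $h$. Throughout I would work under the post-training conditions recalled above: by \eqref{eq:score} the scores satisfy $\gk(\bx)=\bP(Y_{\bx}\in\Ykx)$, and by Proposition~\ref{th:du_et_al} the family $\{\Ykx\}_k$ is the Voronoi partition of an optimal (centroidal) $\nhyp$-point quantizer of $\bP_{\bx}$ for the $L^2$ cost; the ``mild assumptions'' amount to $\cY$ being bounded and $\rho_x$ being bounded and regular enough.

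\textbf{Step 1 (the cells collapse).} First I would show that the partition $\{\Ykx\}_k$ becomes arbitrarily fine on the bulk of $\mathrm{supp}(\rho_x)$. Classical optimal-quantization asymptotics \cite{zador1982asymptotic} give that the optimal $L^2$ quantization error tends to $0$; the mechanism I would make explicit is a variance lower bound, namely that for a centroidal cell $\int_{\Ykx}\|\by-\fk(\bx)\|^2\rho_x(\by)\,\dy=\gk(\bx)\,\mathrm{Var}(Y_{\bx}\mid Y_{\bx}\in\Ykx)\gtrsim \gk(\bx)^{1+2/d}$ (using boundedness of $\rho_x$, with $d=\dim\cY$, by a rearrangement argument against the uniform law on a ball), so that summability of the left-hand sides forces $\max_k \gk(\bx)\to0$. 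Together with $\mathrm{vol}(\Ykx)\le \gk(\bx)/\inf_{\Ykx}\rho_x$ on cells where $\rho_x$ is bounded away from $0$, and the geometric regularity (bounded eccentricity) of optimal quantization cells, this gives that the mesh of the partition vanishes there, while the mass carried by the remaining low-density or boundary cells can be made uniformly small.

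\textbf{Step 2 (histogram convergence and kernel removal).} Let $\tilde\rho_x^{(\nhyp)}\triangleq\sum_k \frac{\gk(\bx)}{\mathrm{vol}(\Ykx)}\,\bI(\by\in\Ykx)$, which by \eqref{eq:score} has cellwise mass $\bP(Y_{\bx}\in\Ykx)$, i.e.\ it is the conditional expectation of $\rho_x$ given the partition $\sigma$-algebra. Since the cells are not nested in $\nhyp$, rather than a martingale argument I would invoke the Lebesgue differentiation theorem: using the bounded-eccentricity regularity from Step~1, the cell averages $\mathrm{vol}(\Ykx)^{-1}\int_{\Ykx}\rho_x$ converge to $\rho_x(\by)$ at Lebesgue-a.e.\ $\by$, and dominated convergence (boundedness of $\rho_x$) upgrades this to $\|\tilde\rho_x^{(\nhyp)}-\rho_x\|_{L^1(\cY)}\to0$, the ``bad'' cells being absorbed via their small total mass. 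Finally, on a cell of vanishing diameter an isotropic kernel continuous and positive at the origin satisfies $K_h(\fk(\bx),\by)=K_h(\fk(\bx),\fk(\bx))(1+o(1))$ uniformly over the cell, hence $V(\fk(\bx),K_h)=K_h(\fk(\bx),\fk(\bx))\,\mathrm{vol}(\Ykx)(1+o(1))$ and $\frac{K_h(\fk(\bx),\by)}{V(\fk(\bx),K_h)}=\frac{1}{\mathrm{vol}(\Ykx)}(1+o(1))$ on the cell; multiplying by $\gk(\bx)$ and summing gives $\|\hpyx-\tilde\rho_x^{(\nhyp)}\|_{L^1(\cY)}\to0$ for every fixed $h>0$. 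Combining, $\hpyx\to\rho_x$ in $L^1(\cY)$, hence $\hat Y_{\bx}\sim\hpx$ converges in distribution to $Y_{\bx}$, independently of $h$; the probabilistic wording of the statement accommodates the case where \eqref{eq:expectation}--\eqref{eq:score} hold only approximately, or where convergence is asked in probability over the input $\bx$.

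\textbf{Main obstacle.} I expect the crux to be the interplay of Steps~1 and~2: controlling the geometry of the centroidal Voronoi cells of an optimal $\nhyp$-point quantizer — ruling out, or making negligible, thin or elongated cells near $\partial\,\mathrm{supp}(\rho_x)$ and in low-density regions — so that the Lebesgue differentiation argument genuinely applies. This is precisely where the precise ``mild assumptions'' of the appendix (compactness of $\cY$, boundedness/regularity of $\rho_x$, and mild conditions on the boundary of its support) do the real work; the kernel step, by contrast, is elementary and is what makes the limit manifestly $h$-independent.
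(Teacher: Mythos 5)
Your overall architecture differs from the paper's, and the place where you yourself locate the ``main obstacle'' is a genuine gap that the paper's proof is specifically designed to avoid. You try to prove the strong statement that the density converges in $L^1$, which forces you to (a) control the \emph{shape} of the optimal Voronoi cells (``bounded eccentricity''), both to upgrade vanishing cell volume to vanishing cell diameter and to make the Lebesgue differentiation theorem applicable over a non-nested family of cells, and (b) show the truncated kernel is asymptotically uniform on each cell. Neither of these is established in your sketch, and (a) in particular is not a known consequence of the stated assumptions: cells of an optimal quantizer can in principle be elongated, so $\mathrm{vol}(\Ykx)\to 0$ does not give $\Delta(\Ykx)\to 0$ without further work. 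The paper proves the vanishing-diameter statement (its Proposition~\ref{th:radius}) by a completely different mechanism: the second part of Zador's theorem says the empirical measure of the optimal centroids converges weakly to a density proportional to $\rho_x^{d/(d+2)}$, which is bounded below under the positivity assumption; a Bolzano--Weierstrass/contradiction argument then shows the centroids become dense in $\cY$, so no cell can keep a radius bounded away from zero. This bypasses eccentricity entirely.

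The second structural difference is that the paper never compares densities pointwise. It proves only weak convergence via the Portmanteau lemma: for a set $E$ with negligible boundary, split $E$ into the cells fully contained in $E$ and those meeting $\partial E$. On the interior cells, the truncated-and-renormalized kernel puts \emph{exactly} mass $\gk(\bx)=\bP(Y_\bx\in\Ykx)$ on each cell (Propositions~\ref{th:score} and~\ref{th:unbiased}), so $\bP_\nhyp$ and $\bP$ agree there identically, whatever the kernel and whatever $h$; the boundary cells lie in $\partial E+B(0,\varepsilon_\nhyp^+)$ and carry vanishing mass by the diameter result. Your Step~2 ``kernel removal'' estimate is therefore unnecessary for the statement actually claimed, and the $h$-independence falls out of the exact cell-scoring property rather than from an asymptotic flattening of the kernel. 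If you repaired the eccentricity issue, your route would yield a stronger (total-variation) conclusion under stronger hypotheses; but as written, the proof of the proposition as stated is incomplete precisely at the step the paper resolves with Zador's limiting distribution of centroids.
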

A formal version of this result can be found in the appendix (Theorem \ref{th:convergence}).

This result is similar to Theorem 4.1 by \citet{polianskii2022voronoi} on the convergence of the Compactified Voronoi Density Estimator.
Note, however, that our setting differs on two main points:
\begin{enumerate}
    \item we study the more general problem of \emph{conditional} densities,
    \item our cell centroids correspond to the hypotheses predicted by a WTA estimator $\fkz=\fkx$, while \citet{polianskii2022voronoi} assume random generators $\fkz \sim \px$.
\end{enumerate}

To deal with the last point, we assume that the underlying WTA estimator predicting $\{\fk(\bx), \gk(\bx)\}$ has converged towards a global minimum of its WTA and scoring objectives.
We give a sketch of the proof of this result below. 

\textit{Proof.} The convergence relies on a useful property of the Voronoi cells obtained when we minimize the WTA objective \eqref{eq:loss}:
    their diameter vanishes as the number $\nhyp$ of hypotheses grows to infinity. This observation is then used to prove the convergence.

    Let $\bP_\nhyp$ be the trained Voronoi-WTA estimator with $\nhyp$ hypotheses. By the Portmanteau Lemma \cite{van2000asymptotic}, it is sufficient to show that $\mathbb{P}_K(E) \to {\bP}(E)$ as $K \to +\infty$
    for any measurable set $E \subseteq \cY$ such that $\lambda(\partial E) = 0$, where $\lambda$ denotes the Lebesgue measure.
    If we fix $K$, the Voronoi tiling $(\cY_\theta^k)_{k\in\llbracket1,K\rrbracket}$ induces a partition of $E$. Accordingly, 
    we split $E$ as the disjoint union $E=E^{\text{int}}\cup E^{\text{ext}}$, where $E^{\text{int}}$ denotes the Voronoi cells included in $E$, and $E^{\text{ext}}$ the Voronoi cells intersecting its border $\partial E$.
    Now, since the radius of each cell $\cY_\theta^k$ asymptotically vanishes (Proposition \ref{th:radius}), $E^{\text{ext}}$ is concentrated on the border $\partial E$ and is therefore negligible. The proof is concluded by observing that $\mathbb{P}_K$ and $\mathbb{P}$ coincide on $E^{\text{int}}$ (\proptwo\ property \eqref{eq:scoring-prop}).

Note that this last argument does not hold for Kernel-WTA, as the \proptwo\ property does not hold for this model. Also, note that Proposition \ref{th:convergence-paper} requires no assumptions on the choice of the scaling factor $h$.
This is another advantage of Voronoi-WTA, this time in terms of uncertainty modeling. 

\subsection{Better asymptotic quantization}
\label{sec:quantization}

Voronoi-WTA estimators model the conditional distribution using an adaptive grid. To measure how well a finite set of points $\cZ=\{\fkz\}_{k\in\bn}$ approximates a data 
distribution $\bP_x$, it is customary to use the \emph{quantization error}, also called quadratic risk or quadratic distortion
\cite{pages2003optimal}:
\begin{align}
    \distortion(\mathcal{Z}) = \int_\cY \min_{z\in\cZ} \|y-z\|_2^2 \;\rho_x(\by) \mathrm{d}\by\,.
    \label{eq:quantization_error}
\end{align}
A natural baseline that we can use to evaluate the advantage of the adaptative grid provided by Voronoi-WTA is the regular grid, which we call
\emph{Histogram} hereafter
(\textit{e.g.,} \citet{imani2018improving}).
Note that the regular grid is a particular case of a Voronoi tessellation.

The quantization error is notoriously hard to study in the general case
\cite{graf2007foundations}.
However, things become amenable to analysis in the asymptotic regime.
With this in mind, Zador's theorem \cite{zador1982asymptotic}, a powerful result from quantization theory, can be used to describe the asymptotic optimal quantization error.
We sum up our observations in the following statement (see Propositions \ref{th:wta_risk} and \ref{th:histogram_risk} 
for a complete formulation).

\begin{proposition}\label{th:wta_histogram_risk}
    Under mild regularity assumptions, denoting $d=\dimY$, $J_d$ a constant depending only on the dimension, $\mathrm{vol}(\cY)$ the volume of $\cY$, and
    ${\cZV=\{\fkx\}_{k\in\bn}}$,
    the quantization error has the following asymptotic equivalent as $\nhyp \rightarrow +\infty$:
    \begin{equation}\label{eq:zador_risk}
        \distortion(\cZV) \sim J_d \; \left(\int_\cY \pyx^\frac{d}{d+2} \dy\right)^\frac{d+2}{d} \frac{1}{\nhyp^{2/d}}\,.
    \end{equation}
    Denoting $\cZH$ the fixed grid points defining the Histogram baseline, we also have
    \begin{equation}\label{eq:histogram_risk}
        \distortion(\cZH) \sim \frac{d}{12}\frac{\mathrm{vol}(\cY)^{2/d}}{K^{2/d}}\,.
    \end{equation}
\end{proposition}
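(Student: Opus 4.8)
The plan is to handle the two claims separately: \eqref{eq:zador_risk} will follow from Zador's theorem once the WTA hypotheses are recognized as an optimal quantizer, while \eqref{eq:histogram_risk} is an elementary direct computation on the regular grid. For the first, note that for the $L^2$ loss the input-dependent WTA risk \eqref{eq:wta-risk} evaluated on the Voronoi cells coincides with the quantization error \eqref{eq:quantization_error} of the point set $\cZV=\{\fkx\}_{k\in\bn}$, since $\Ykx$ is by definition the set of $\by$ closer to $\fkx$ than to any other hypothesis; hence $\sum_{k}\int_{\Ykx}\|\fkx-\by\|_2^2\,\rho_x(\by)\,\dy = \int_\cY \min_{z\in\cZV}\|\by-z\|_2^2\,\rho_x(\by)\,\dy = \distortion(\cZV)$. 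Under the standing assumption that the network is expressive enough and has reached a global minimum of the WTA objective, the conditional risk is minimized for each fixed $x$; since expressiveness lets the $K$ heads realize an arbitrary $K$-tuple of points, this forces $\distortion(\cZV)=\inf_{|\cZ|\le K}\distortion(\cZ)$, i.e. $\cZV$ is an optimal $K$-point quantizer of $\bP_x$.

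\textbf{Applying Zador's theorem.} With this identification, \eqref{eq:zador_risk} is precisely Zador's theorem for $\bP_x$: under the regularity hypotheses — $\bP_x$ absolutely continuous with density $\rho_x$, and a finite moment of order $2+\delta$ for some $\delta>0$, which is automatic when $\cY$ is bounded — one has $K^{2/d}\,\inf_{|\cZ|\le K}\distortion(\cZ)\to J_d\big(\int_\cY \rho_x(\by)^{d/(d+2)}\,\dy\big)^{(d+2)/d}$ as $K\to+\infty$, with $J_d$ the dimensional Zador constant; this is the claimed asymptotic equivalent for $\distortion(\cZV)$.

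\textbf{The Histogram baseline.} For \eqref{eq:histogram_risk}, realize $\cZH$ as the set of centers of $K$ congruent cubes of side $h_K=(\mathrm{vol}(\cY)/K)^{1/d}$ tiling $\cY$ (partial cubes meeting $\partial\cY$ carry a vanishing fraction of the total mass and are absorbed into the error term). A coordinatewise integration shows that for a cube $C$ of side $h_K$ with center $c$ one has $\int_C\|\by-c\|_2^2\,\dy=\frac{d}{12}h_K^{d+2}$. By uniform continuity of $\rho_x$ on the compact $\cY$, $\rho_x(\by)=\rho_x(c)\,(1+o(1))$ uniformly over $\by$ in each cell as $K\to+\infty$, so $\distortion(\cZH)=\sum_C\int_C\|\by-c\|_2^2\,\rho_x(\by)\,\dy=\frac{d}{12}h_K^{2}\big(\sum_C\rho_x(c)\,h_K^{d}\big)(1+o(1))$; the sum is a Riemann sum converging to $\int_\cY\rho_x=1$, whence $\distortion(\cZH)\sim\frac{d}{12}h_K^{2}=\frac{d}{12}\,\mathrm{vol}(\cY)^{2/d}/K^{2/d}$.

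\textbf{Main obstacle.} The delicate part is the Voronoi-WTA statement. Two points need care: (i) justifying rigorously that a global minimizer of the non-differentiable WTA objective over the chosen function class is genuinely an optimal quantizer — this is where the expressiveness assumption and the treatment of zero-mass or degenerate cells enter; and (ii) checking that $\bP_x$ satisfies the hypotheses of Zador's theorem, which is what the ``mild regularity assumptions'' are for. By contrast the Histogram estimate is mostly bookkeeping, the only subtlety being to control the partial cubes near $\partial\cY$ when $\cY$ is not itself a box and show their contribution is asymptotically negligible.
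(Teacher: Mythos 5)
Your proof is correct and follows essentially the same route as the paper: the WTA bound is obtained by invoking the optimality assumption to identify $\cZV$ with an optimal $K$-point quantizer and then applying Zador's theorem, and the Histogram bound comes from the per-cube computation $\int_C\|\by-c\|_2^2\,\dy=\frac{d}{12}h_K^{d+2}$ followed by a Riemann-sum argument. The only cosmetic difference is that the paper works on $\cY=[0,c]^d$ and uses the Lipschitz assumption to get quantitative error terms $\mathcal{O}(K^{-3/d})$, where you use uniform continuity and a qualitative $(1+o(1))$, which suffices for the stated asymptotic equivalent.
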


Note that, in the first order, the quantization error of the Histogram baseline only depends on the volume of the support of $\bP_x$, whereas Voronoi-WTA takes into account
local density information provided by $\rho_x$. This gives an insight into how the adaptative grid underlying Voronoi-WTA fits the geometry of the data distribution. 

Furthermore, one can also observe that Voronoi-WTA and the Histogram have the same asymptotic rate of convergence, differing only by the leading constant.
However, it can be proved that this constant is smaller for Voronoi-WTA than for the Histogram baseline in most cases (Proposition \ref{th:leading_constant}).
Therefore, although the gap between 
$\distortion(\cZV)$ and $\distortion(\cZH)$ closes
as the number $\nhyp$ of hypotheses increases, Voronoi-WTA \emph{always} has a strictly better quantization error, even asymptotically. This constitutes a real advantage of the adaptive grid provided by Voronoi-WTA over a static one and was empirically verified in Section \ref{sec:experiments}.

\section{Empirical study}
\label{sec:experiments}

\begin{figure*}[t]
    \includegraphics[width=1.\linewidth]{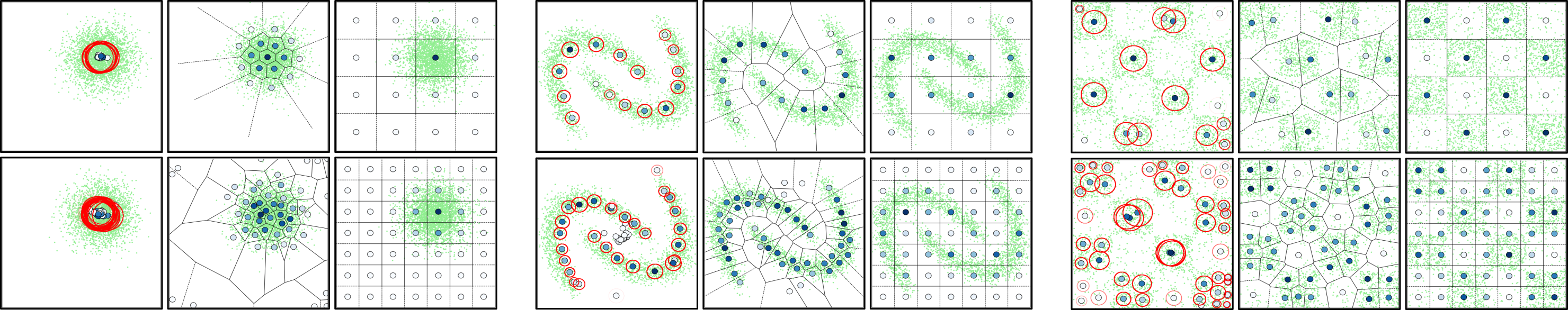}
    \caption{
    \textbf{Qualitative results.}
    Each panel shows a different dataset:
    Single Gaussian, Rotated Two Moons, and Changing Damier.
    Within each panel, columns correspond to predictions made by: MDN, Score-based WTA, and Histogram (left to right).
    Dots represent predicted (or fixed) hypotheses: means, centroids, and bins.
    Their colors encode the predicted score or mixture weight for MDN, where darker blue corresponds to higher scores.
    Red circles represent the MDN's predicted variance for each Gaussian (opacity reflects mixture weight), while WTA figures depict the Voronoi tessellations for predicted hypotheses. 
     \textit{1st row}: 16 hypotheses, \textit{2nd row}: 49 hypotheses.}
    \label{fig:qualitative}
\end{figure*}

The aim of this section is two-fold. First, it empirically justifies the relevance of the Voronoi-WTA-based conditional density estimators, compared to other possible designs based on WTA learners, such as Kernel-WTA. Second, it empirically validates the advantages of the Winner-takes-all training scheme against traditional baselines for conditional density estimation.

\subsection{Experimental setting} \label{sec:exp-setting}

We detail below our experimental settings. A more extensive description of design choices is deferred to Appendix \ref{sec:experimental_details}.

\textbf{Datasets}. We conducted experiments on four synthetic datasets, with $\cX = [0,1]$ and  $\cY = [-1,1]^2$, as well as on the UCI benchmark \cite{hernandez2015probabilistic}.
\begin{itemize}
    \item \emph{Single Gaussian} corresponds to a single, isotropic, non-centered two-dimensional Gaussian which does not move as $x$ varies.
\item \emph{Rotating Two Moons} is based on the two-moon dataset from \textsc{Scikit-Learn}~\cite{scikit-learn}, corresponding to entangled non-convex shapes.
    The target distribution was generated by rotating the latter with an angle $2\pi x$ for each $x \in [0,1]$.
\item \emph{Changing Damier} is an adaptation of the dataset proposed in \citet{rupprecht2017learning}.
    It corresponds to a checkerboard of 16 squares, gradually interpolated towards its complementary checkerboard as $x$ increases. 
\item \emph{Uniform to Gaussians} is the illustrative dataset presented in Section \ref{sec:unifying}.
\item \emph{UCI Regression datasets} \cite{ucidataset} are a standard benchmark \cite{hernandez2015probabilistic} to evaluate conditional density estimators.
\end{itemize}
 
\textbf{WTA training framework}. We used the WTA training scheme with scoring heads from Section \ref{sec:rMCL}. The density estimation was performed following the methodology described in Section \ref{sec:method}, with uniform kernels and Gaussian kernels with several scales $h$.

\textbf{Baselines}. Two standard conditional density estimation baselines were considered in our experiments: Mixture Density Networks (MDN)~\cite{bishop1994mixture} and the Histogram \cite{imani2018improving} mentioned in Section \ref{sec:quantization}. More details are given in Appendix \ref{sec:syn-baselines}.

\textbf{Architecture and training details}. In each training setup with synthetic data, we used a three-layer MLP, with 256 hidden units. The Adam optimizer~\cite{kingma2014adam} was used, and the models were trained until convergence of the training loss, using early stopping on the validation loss.

\textbf{Metrics}. To evaluate the performance of each model, we employed the Negative Log-Likelihood (NLL) and, when the target distribution is known, the Earth Mover's Distance (EMD). To assess how well each model preserved the geometry of the data distribution, we used the quantization risk, as defined in \eqref{eq:quantization_error}.

\subsection{Qualitative analysis}

Qualitative results are provided in Figure \ref{fig:qualitative}, where the predictions of Score-based WTA, Histogram, and MDN are compared. `Score-based WTA' refers to both Voronoi-WTA and Kernel-WTA, which share the same predicted hypotheses and scores represented in the figure.
Different behaviors can be observed for each of the three methods.
For instance, in the Gaussian case, MDN predictions of mixture means collapse into a single point.
This well-known \emph{mode collapse} problem~\cite{ hjorth1999regularisation, graves2013generating, rupprecht2017learning, messaoud2018structural, cui2019multimodal} is also observed for Rotated Two Moons and Changing Damier when the number of hypotheses increases.
Concerning Histogram, we see on all datasets, except the Changing Damier, that it requires more hypotheses to reach the same resolution as the score-based WTA, which is able to optimally quantize the shape of all distributions with their predicted hypotheses.

\subsection{Quantitative analysis on synthetic datasets}
\label{sec:quantitative-results}
\begin{figure*}[t]
\centering
    \includegraphics[width=1.0\linewidth]{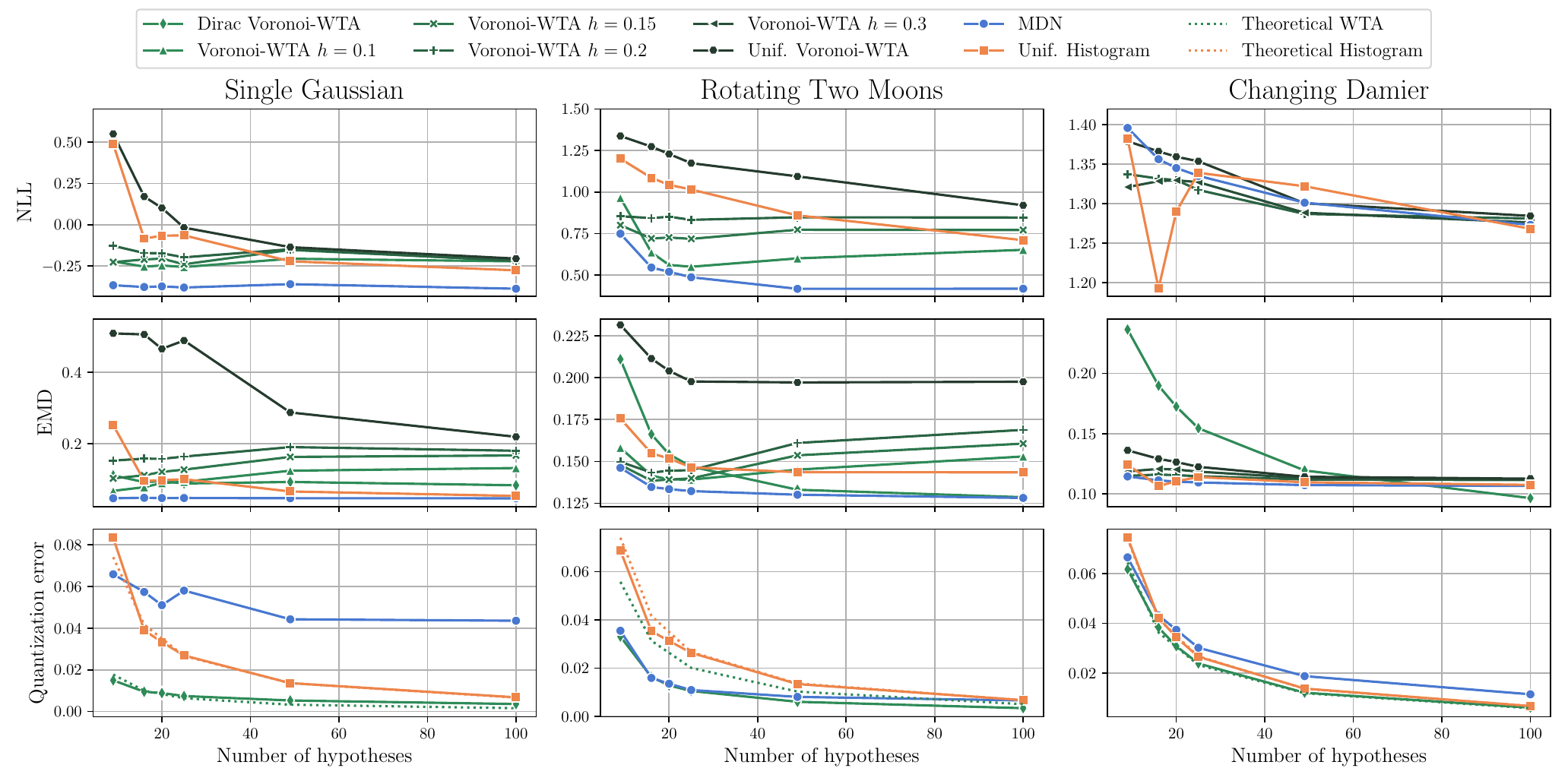}
    \caption{
    \textbf{Quantitative comparison.}
    Each column corresponds to a dataset,
    and each row to a different metric detailed in Section \ref{sec:exp-setting}.
    Dotted lines correspond to theoretical quantization errors from Proposition \ref{th:wta_histogram_risk}.
    Dirac Voronoi-WTA corresponds to the limit when the scaling factor $h \to 0$ \eqref{eq:mixturedelta}, while Unif. Voronoi-WTA is the limit when $h \to \infty$. Results are averaged over three random seeds, with standard deviations given in Appendix, Figure \ref{fig:stds_results}. Detailed discussion is given in Section \ref{sec:experiments}.
    }
    \label{fig:quantitative}
\end{figure*}

Our main quantitative results on the synthetic datasets are depicted in Figure \ref{fig:quantitative}.

\textbf{Comparison to Histogram.} 
One can see in Figure \ref{fig:quantitative} that Histogram generally does not lead to competitive performance with respect to any metric unless a high number of hypotheses is used.
An exception is observed in the case of the Changing Damier dataset where, by design, the Histogram aligns perfectly with the data when it is set to exactly 16 hypotheses (\cf Figure \ref{fig:qualitative} right). When the number of hypotheses is large enough,
the grid is sufficiently fine to represent the distribution geometry and Histogram's performance strongly improves. Note for the quantization error, however, that Histogram always performs worse than Voronoi-WTA, regardless of the number of hypotheses, as predicted by Proposition \ref{th:wta_histogram_risk}.
These results showcase quantitatively the clear advantage of Voronoi-WTA's adaptive grid over Histogram.

\textbf{Comparison to Mixture Density Networks.} MDNs have a bias towards fitting Gaussian distributions, 
and they are trained by minimizing the NLL loss. We observe, as expected, excellent NLL results, especially in the case of the Single Gaussian (Figure \ref{fig:quantitative} top). Note that MDN is the only method for which variable scaling factors are authorized in each hypothesis, giving it an immediate advantage.
Nevertheless, Voronoi-WTA still achieves on par 
performance in terms of EMD and NLL on non-Gaussian datasets, as long as the scaling parameter $h$ is well-tuned.
Furthermore, as MDNs do not benefit from the optimal quantization properties of the WTA-based method, they tend to obtain suboptimal quantization errors in most cases (Figure \ref{fig:quantitative} bottom).

\textbf{Comparison with Kernel-WTA.} We validate here the choice of Voronoi-WTA instead of the more straightforward Kernel-WTA.
Figure \ref{fig:NLLvsh} provides a comparison of both methods in the case of 16 hypotheses, in terms of NLL test performance as a function of the scaling factor $h$.
We notice the expected behavior: in the low $h$-value regime, Voronoi-WTA and Kernel-WTA curves coincide, but as $h$ increases, Voronoi-WTA's performance stabilizes, while Kernel-WTA's diverges.
Results using truncated uniform kernels are also plotted in dashed lines.
As expected, Voronoi-WTA's performance converges to the latter's as $h \to \infty$.

\textbf{Validation of Proposition \ref{th:wta_histogram_risk}}.
We plot at the bottom of Figure \ref{fig:quantitative} both theoretical quantization errors for Voronoi-WTA and Histogram derived in Proposition \ref{th:wta_histogram_risk}. First, we can notice that there is a good match between the theoretical errors and the empirical ones for all considered datasets.
This is especially true in the asymptotic regime, where the theoretical formula becomes more accurate. This validates our assumption that our underlying score-based WTA models are close to the global minimum of their training objectives.

\begin{figure}[th!]
    \centering 
    \includegraphics[width=\linewidth]{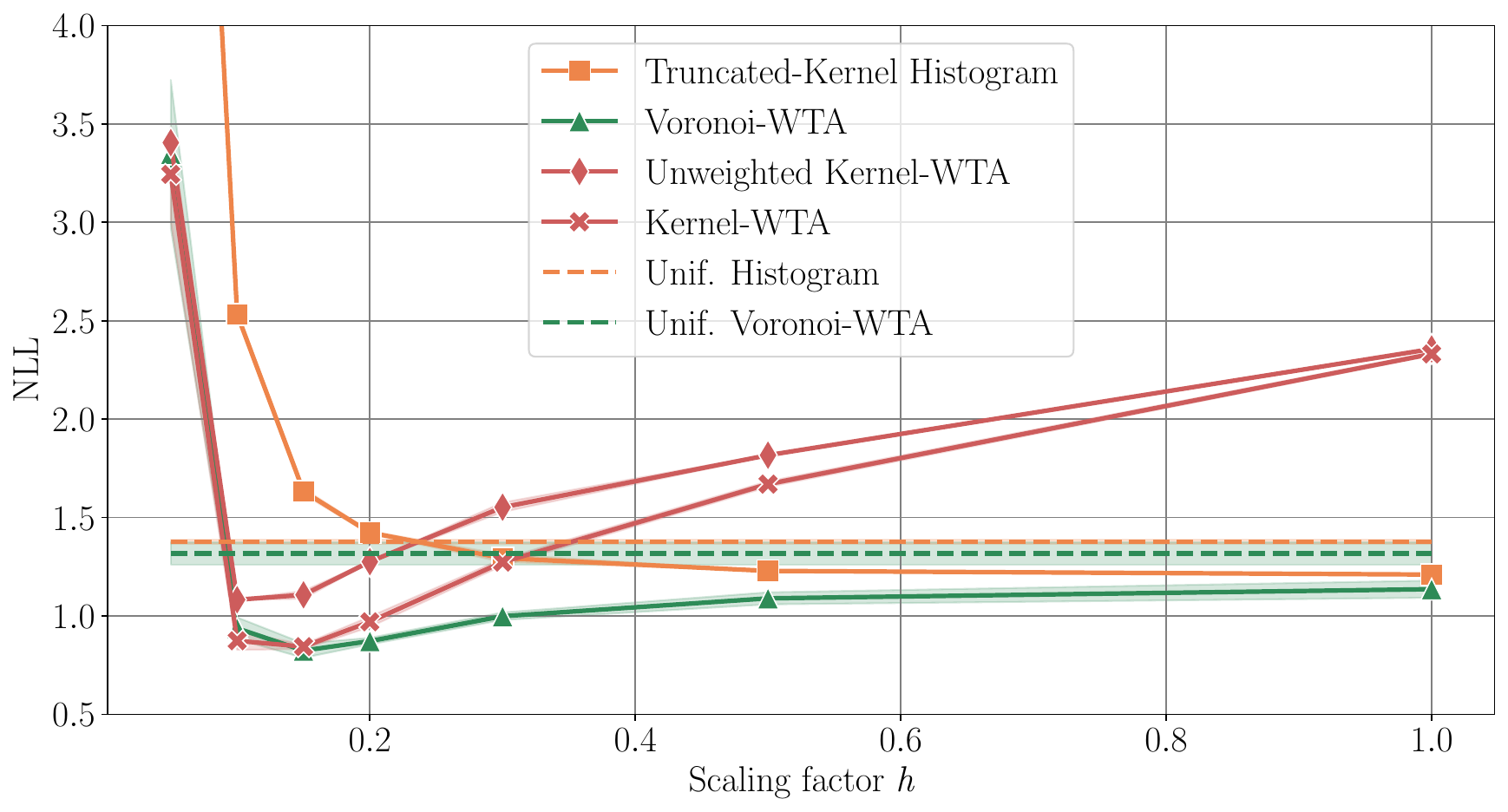}
    \caption{
    \textbf{Impact of the scaling factor.}
    Results on the dataset \textit{Uniform to Gaussians} with $16$ hypotheses, computed over three random seeds. Unweighted Kernel-WTA corresponds to \eqref{eq:kde} with fixed uniform scores $\gk(x)=1/ \nhyp$.
    Truncated-Kernel Histogram is the standard Histogram where truncated kernels are placed on the fixed hypotheses, instead of uniform kernels (Unif. Histogram) used in Figure \ref{fig:quantitative}. 
    See Appendix \ref{app:additional_results_synth} for more results.
    }
    \label{fig:NLLvsh}
\end{figure}

\subsection{Evaluation on UCI Regression Datasets}

In Table \ref{tabmain:uci-nll}, we present additional results for the UCI datasets, adhering to the experimental protocols followed by \citet{hernandez2015probabilistic, lakshminarayanan2017simple}. A more comprehensive analysis of these results is deferred to Appendix \ref{app:uci-datasets}. This appendix includes an extended version of Table \ref{tabmain:uci-nll} and also covers results using the RMSE metric (Table \ref{tab:uci-rmse}). Note that $\mathrm{dim}(\mathcal{Y}) = 1$ here.

In these datasets, the scaling factor $h$ of WTA-based models was optimized using a golden section search \cite{kiefer1953sequential}, based on the average NLL over the validation set. 
Here, this optimization was costly because it was carried out very precisely. As a result, the superior sensitivity of Kernel-WTA to the choice of $h$, when compared to Voronoi-WTA, is not expected to be visible in these results (see the optimized NLL of Voronoi-WTA and Kernel-WTA in Figure \ref{fig:NLLvsh}), particularly as $K$ is small ($K = 5$). Future research will explore how the optimization of $h$ at validation time may lead to a performance disparity between Voronoi-WTA and Kernel-WTA, especially in the context of a distribution shift between validation and test samples. 

These results further highlight the competitiveness of WTA-based density estimators in terms of NLL against Mixture Density Networks (MDN) and Deep ensembles \cite{lakshminarayanan2017simple}, especially when the dataset size is large (\textit{e.g.,} Protein, Year, \cf Table \ref{tab:datasets-description}). This finding is particularly promising given the inherent advantages of the other baselines: indeed, NLL is not directly optimized during training in WTA-based methods. Moreover, we faced stability issues when training MDN (\textit{e.g.,} numerical overflows in log-likelihood computation), that we did not encounter with Voronoi-WTA. We hope that these results will encourage further research into the properties of these estimators.

\begin{table}[h]
    \centering
    \caption{\textbf{UCI regression benchmark datasets comparing NLL with 5 hypotheses.} $^{\star}$ corresponds to reported results from \citet{lakshminarayanan2017simple}. `--' corresponds to cases where MDN has not converged. Best results are in \textbf{bold}. $\pm$ represents the standard deviation over the official splits. 
    }
    \resizebox{\columnwidth}{!}{
    \begin{tabular}{l c|ccc}
    \toprule
     & \multicolumn{4}{c}{NLL ($\downarrow$)}\\
     \cmidrule(l{2pt}r{2pt}){2-5}
    Datasets & Deep Ensembles$^{\star}$ & MDN & K-WTA & V-WTA\\
    \midrule
    Boston & \textbf{2.41 $\pm$ 0.25} & 2.95 $\pm$ 0.31 & \textbf{2.48 $\pm$ 0.16} & \textbf{2.48 $\pm$ 0.19} \\
Concrete & \textbf{3.06 $\pm$ 0.18} & 3.96 $\pm$ 0.24 & \textbf{3.09 $\pm$ 0.10} & \textbf{3.08 $\pm$ 0.12} \\
Energy & \textbf{1.38 $\pm$ 0.22} & \textbf{1.25 $\pm$ 0.25} & 2.27 $\pm$ 1.22 & 2.22 $\pm$ 1.20 \\
Kin8nm & \textbf{-1.20 $\pm$ 0.02} & -0.87 $\pm$ 0.05 & -0.73 $\pm$ 0.03 & -0.85 $\pm$ 0.05 \\
Naval & \textbf{-5.63 $\pm$ 0.05} & \textbf{-5.47 $\pm$ 0.29}  & -1.94 $\pm$ 0.00 & -3.52 $\pm$ 0.38 \\
Power & \textbf{2.79 $\pm$ 0.04} & 3.02 $\pm$ 0.07 & \textbf{2.81 $\pm$ 0.05} & \textbf{2.85 $\pm$ 0.06}\\
Protein & 2.83 $\pm$ 0.02 & -- & \textbf{2.39 $\pm$ 0.03} & \textbf{2.42 $\pm$ 0.04} \\
Wine & 0.94 $\pm$ 0.12 & \textbf{-1.53 $\pm$ 0.76} & 0.42 $\pm$ 0.18 & 0.37 $\pm$ 0.17 \\
Yacht & \textbf{1.18 $\pm$ 0.21} & 2.43 $\pm$ 0.72 & 2.23 $\pm$ 0.52 & 2.05 $\pm$ 0.46 \\
Year & 3.35 $\pm$ NA & -- & \textbf{3.26 $\pm$ NA} & 3.29 $\pm$ NA \\

\bottomrule
    \end{tabular}
    }
    \label{tabmain:uci-nll}
    \end{table}

\subsection{Experimental validation with audio data}
\label{sec:audio}
In this section, we experimentally validate our method on a real-world application, namely on the task of Sound Event Localization (SEL) \cite{adavanne2018direction, grumiaux2022survey} which involves angular localization of sound sources from input audio signals.
This task is intrinsically ambiguous, as there is spatial dispersion in the position of the sound sources to predict, either due to the sound source nature or to label noise.
We considered the audio dataset \href{https://zenodo.org/records/1237703}{ANSYN} \cite{adavanne2018sound}. This dataset is generated using simulated room impulse responses so that the position of the sound sources can be considered free of noise. We injected input-dependent label noise in the source position, conditionally to the class of the input. 
Models are trained using this noisy data to capture this input-dependent uncertainty.   

This task is challenging because it involves real-world data, label noise, and also because of the spherical geometry of the output space which calls for specific angular metrics, that are not Euclidean. Therefore, we depart from the previous theoretical and experimental setting. 
All models are trained using the same setup to ensure fair comparisons. We use the same evaluation framework introduced for the synthetic datasets (\cf Appendix \ref{app:audio} for further details).

We compare in Table \ref{tab:cvde_vs_kde} the performance of Voronoi-WTA to Kernel-WTA using isotropic von Mises-Fisher kernels with scaling factor $h \in \{0.3, 0.5, 1.0\}$, as well as to the Histogram baseline with uniform kernels.
Overall, we see that the performance of all methods tends to improve as the number of hypotheses increases, both in terms of NLL and Quantization Error. 
The competitive advantage of Voronoi-WTA against Kernel-WTA is confirmed, especially for large $h$ values. 
We also notice that this performance gap narrows with fewer hypotheses: in a low-hypothesis regime, the Voronoi tessellation resolution is smaller, reducing the possibility of capturing local geometry through truncated kernels.

\begin{table}
\renewcommand{\arraystretch}{1.4}
\fontsize{20pt}{20pt}\selectfont
\caption{
\textbf{NLL comparison of Voronoi-WTA (V-WTA) vs. Kernel-WTA (K-WTA) on audio data.} `Hist' corresponds to the histogram with a uniform kernel as a baseline. `Distortion' is the quantization error scaled up by $10^{2}$. See Section \ref{sec:audio} for the discussion and Appendix \ref{app:additional-results-appendix} for extended analysis.}
\vskip 0.05in
\centering 
\resizebox{\columnwidth}{!}{%
\begin{tabular}{lcccccccccccc}
\toprule \multicolumn{1}{c}{} & \multicolumn{9}{c}{NLL} & \multicolumn{3}{c}{Distortion} \\
\cmidrule(lr){1-1} \cmidrule(lr){2-10} \cmidrule(lr){11-13}
 $h$ &  \multicolumn{3}{c}{0.3}  &\multicolumn{3}{c}{0.5}  & \multicolumn{3}{c}{1.0} & \multicolumn{3}{c}{$\emptyset$} \\
 \cmidrule(lr){1-1}\cmidrule(lr){2-4}\cmidrule(lr){5-7}\cmidrule(lr){8-10}\cmidrule(lr){11-13}
 $K$ & 9 & 16 & 25 & 9 & 16 & 25 & 9 & 16 & 25 & 9 & 16 & 25 \\
 \midrule
V-WTA & 1.27 & \textbf{1.18} & \textbf{1.15} & \textbf{1.36} & \textbf{1.24} & \textbf{1.18} & \textbf{1.57} & \textbf{1.33} & \textbf{1.22} & \textbf{0.42} & \textbf{0.26} & \textbf{0.17} \\
K-WTA & \textbf{1.26} & 1.24 & 1.23 & 1.50 & 1.51 & 1.51 & 2.08 & 2.09 & 2.09 & \textbf{0.42} & \textbf{0.26} & \textbf{0.17} \\
 Hist & 1.72 & 1.52 & 1.45 & 1.72 & 1.52 & 1.45 & 1.72 & 1.52 & 1.45 & 1.23 & 0.72 & 0.47 \\
\bottomrule
\end{tabular}
}
\label{tab:cvde_vs_kde}
\end{table}

\section{Related work}

\noindent

\textbf{Conditional density estimation}. 
Density estimation can be tackled using
parametric methods (\textit{e.g.,} Gaussian Mixture Models) or non parametric methods (\textit{e.g.,} Kernel Density Estimation \cite{rosenblatt1956remarks}, Histograms). 
Mixture Density Networks \citet{bishop1994mixture} is a standard deep learning extension of
Gaussian Mixtures to the case of conditional densities. Their strong performances across various tasks led them to become
quite popular \cite{zen2014deep,li2019generating}. However, MDNs notoriously suffer from numerical instabilities \cite{makansi2019overcoming}, mode collapse \cite{brando2017mixture}, low contribution to the gradient of points with high predictive variance \cite{seitzer2022on} and large biases depending on
the choice of kernel \cite{polianskii2022voronoi}.
In contrast, Histogram is perhaps the simplest non-parametric alternative but is impractical in high-dimensional settings.

\noindent
\textbf{Multiple Choice  Learning.} First introduced by \citet{guzman2012multiple}, and adapted to deep learning by \citet{lee2016stochastic}, MCL is effective in various applications, notably in computer vision \cite{tian2019versatile, garcia2021distillation}.
It suffers from two main drawbacks: hypotheses collapse and overconfidence. Solutions for the first problem include top-$n$ update rules \cite{makansi2019overcoming} or allowing a small amount of gradient flow to all hypotheses \cite{rupprecht2017learning}. The second problem has been solved by the introduction of scoring models \cite{lee2017confident}. This approach has recently allowed for a probabilistic view of MCL \cite{letzelter2023resilient}. However, MCL predictions are discrete by design. One purpose of the current work is to extend MCL to density estimation, \textit{e.g.,} for improving the evaluation of such models.

\noindent

\textbf{Geometry of the Voronoi tessellations.}
Centroidal Voronoi tessellations \cite{lloyd1982least} are widely used for clustering, vector quantization \cite{gersho1979asymptotically}, and shape approximation \cite{du2003constrained}. Its popularity stems from training stability, and theoretical properties that have been extensively studied \cite{du1999centroidal}, especially in the context of optimal quantization \cite{zador1982asymptotic}. This method has been used to build continuous density estimators based on uniform \cite{okabe2009spatial} or Gaussian \cite{polianskii2022voronoi} distributions. The additional challenges raised by this 
continuous extension, such as volume estimation in high dimensional settings, or density discontinuity at the cell boundaries, have been discussed in the literature~\cite{polianskii2022voronoi,marchetti2023efficient}. However, none of these methods have yet been extended to the conditional setting, which is the topic of our work.

\section{Limitations}

Voronoi-WTA uses the WTA training scheme to estimate the inherent uncertainty in data. However, this approach has limitations and may achieve suboptimal performance. Recent studies have highlighted the sensitivity of WTA initialization in certain scenarios \cite{makansi2019overcoming, narayanan2021divide}. Exploring theoretically grounded solutions to address these issues, such as in \citet{arthur2007k}, could be a promising direction for future research. Additionally, there is no current evidence that the model can assess its own prediction confidence, such as in detecting out-of-distribution samples. Enhancing WTA learners with model uncertainty quantification could expand the abilities of WTA learners.

\section{Conclusion}

In this paper, we introduced \textit{Voronoi-WTA}, a novel conditional density estimator. Voronoi-WTA is a probabilistic extension of traditional WTA learners, leveraging the advantageous geometric properties of the WTA training scheme. Notably, Voronoi-WTA demonstrates greater resilience to the choice of scaling factor $h$ compared to the more straightforward Kernel-WTA. We support our claims with mathematical derivations, discussing the asymptotic performance as the number of hypotheses increases. Both theoretical analysis and experimental comparisons against several baselines highlight the strengths of our estimator. The application of our estimator to more realistic datasets opens up broad possibilities for future work.

\section*{Acknowledgements}

This work was funded by the French Association for Technological Research (ANRT CIFRE contract 2022-1854) and Hi! PARIS through their PhD in AI funding programs. We are grateful to the reviewers for their insightful comments.

\section*{Impact Statement}

This paper presents work whose goal is to advance the field of Machine Learning. There are many potential societal consequences of our work, none which we feel must be specifically highlighted here.

\label{submission}

\bibliography{main_paper}
\bibliographystyle{icml2024}

\newpage
\appendix
\appendix
\onecolumn

\section*{Organization of the Appendix}

The Appendix is organized as follows. Appendix \ref{secapp:notations} outlines the notations employed. Appendix \ref{secapp:theory} presents the theoretical results of the paper, detailing the background in Appendix \ref{sec:thsetup}, demonstrating how our estimator performs distribution estimation in Appendix \ref{secapp:distribution_estimation}, and discussing its geometric properties in Appendix \ref{secapp:geometrical_properties}. Appendix \ref{sec:experimental_details} covers the experimental details and design choices, including the synthetic data experiments in Appendix \ref{secapp:synthetic_data_experiments}, the UCI regression benchmark in Appendix \ref{app:uci-datasets} and the audio experiments in Appendix \ref{app:audio}.

\section{Notations}
\label{secapp:notations}

Let $\cX$ be the set of possible inputs, and by $\cY$ the set of possible targets. We assume that $\cX$ and $\cY$ are finite-dimensional real vector spaces, and we note $d$ as the dimension of $\cY$.
Given $\nhyp$ hypotheses, let $f_{\theta}=(f_{\theta}^1,\dots,f_{\theta}^{\nhyp})\in\cF(\cX,\cY^\nhyp)$ and $\gamma_{\theta}=(\gamma_{\theta}^{1},\dots,\gamma_{\theta}^{\nhyp})\in\cF(\cX,\Delta_\nhyp)$ be the predictions and scoring models, where $\Delta_\nhyp=\{p\in[0,1]^\nhyp \;\; \sum_{k=1}^\nhyp p_k = 1\}$ is the simplex on $\bR^\nhyp$.
These models are described by parameters $\theta$.
When not necessary, we omit this dependency by writing $z_k=\fk$ and $\gamma_k=\gamma_{\theta}^k$. For a given input $x\in\cX$ the set of predictions $\{f_{\theta}^{k}(x)\}_{k\in\bn}$ induces a Voronoi tessellation of the target space $\cY$. We note $\mathcal{Y}^k_{\theta}(x)$, or alternatively $\AYk(x)$, the Voronoi cell $k$, and by $\zkx=\fkx$ its generator. Recall that 
$$\AYk(x)=\left\{y\in\cY \;|\; \ell(y,\Azk(x)) < \ell(y,z_l(x)), \forall l \neq k \right\}\,,$$

where $\ell: \mathcal{Y} \times \mathcal{Y} \rightarrow \mathbb{R}_{+}$ is the underlying loss used, for instance the $\mathrm{L}^2$ loss  $\ell(\hat{\by}, \by) = \|\hat{\by}-\by\|^2$, denoting by $\|\cdot\|$ the Euclidean norm.

We will drop the dependency on $x$ when the context is clear, thus referring to $\AYk$ and $\Azk$. Conversely, when we study asymptotic properties that depend on the number of hypotheses $\nhyp$, we will emphasize this dependency by writing $\AYkn$ and $\Azkn$. Additionally, when $\cY$ is a $d$-dimensional cube, it can be partitioned into a regular grid. We note $\cG_k$ the cubes of this grid, and by $g_k$ the center of each cube. Note that this is a special case of Voronoi tessellation.

We note the set sum by $E+F=\{e+f, e\in E, f \in F \}$, and the border of a set $E$ by $\partial E=\bar{E} \backslash \mathring{E}$ where $\bar{E}, \mathring{E}$ represents the closure and interior of $E$ respectively.
$B(0, r)$ is the ball of radius $r$ with center $0$ associated to $\|\cdot\|$.
We note $|E|$ the cardinal of a set $E$, and by $\Delta(E)=\sup_{x, y \in E}\|x-y\|$ its diameter.

We note $\bP$ a data distribution over $\cX\times\cY$, $\lambda$ the Lebesgue measure, $\delta_y$ the Dirac measure centered on $y$, $\mathds{1}$ the indicator function, $\mathcal{U}$ the uniform distribution, and $\cN(a,b)$ the normal distribution with mean $a$ and variance $b$.
We will always assume that $\bP$ admits a probability density function $\rho(x,y)$. We denote $\bP_x$ and $\rho_x$ as the distribution and density, respectively, conditional on $x$. If $p$ and $q$ denote two densities over a domain $\mathcal{D}$, we define the Kullback–Leibler divergence as
\begin{align*}
  \mathrm{KL}(p || q) &= \int_{\mathcal{D}} \log\left(\frac{p(x)}{q(x)}\right)p(x)\dx\,.
\end{align*}
When the two distributions are discrete, for instance with a support of size $\nhyp$, we will write
$$\mathrm{KL}_{k\in\bn}(p_k \;||\; q_k) = \sum_{k=1}^\nhyp \log\left(\frac{p_k}{q_k}\right)p_k\,.$$
For scalars $a,b \in (0,1)$, we define the binary cross entropy (BCE) as
$$\mathrm{BCE}(a,b) = - a\log(b) - (1-a)\log(1-b)\,,$$
adopting the convention that $0 \log 0 = 0$.

In the following, we define training objectives, which are functions of model parameters $\theta$, using the notation $\mathcal{L} \triangleq \mathcal{L}(\theta)$. For a specific model $\mathrm{M}$, the training objective is denoted by $\mathcal{L}_{\mathrm{M}}(\theta)$. The single-sample version of this objective, which we will denote as $\mathcal{L}^{\mathrm{M}}(\theta)$ for brevity, is expressed for individual data points $(x,y)$ as $\mathcal{L}^{\mathrm{M}}(\theta,x,y)$. This single-sample loss contributes to the overall objective $\mathcal{L}_{\mathrm{M}}(\theta)$ through integration over the data distribution $\rho(x,y)$, with $\mathcal{L}_{\mathrm{M}}(\theta) = \int_{\mathcal{X} \times \mathcal{Y}} \mathcal{L}^{\mathrm{M}}(\theta,x,y) \rho(x,y) \mathrm{d}x \mathrm{d}y$.

\section{Theoretical results}
\label{secapp:theory}
The estimator Voronoi-WTA (V-WTA) introduced in this paper, has two main advantages:
1) it accurately estimates the data distribution,
and 2) the centroids, aligning with the optimal hypotheses according to Proposition \ref{th:du_et_al}, preserve the geometry of the data distribution.
As a result, this method effectively combines the strengths of Mixture density networks and Winner-takes-all models.
In this section, we will study these two claims along two main axes: convergence in distribution, and asymptotic quantization risk.

The section is organized as follows. We will first introduce the necessary definitions as well as our working hypotheses, then focus on distribution estimation, and finally study the geometrical properties of our proposed algorithm. 

\subsection{Theoretical setup}
\label{sec:thsetup}
\subsubsection{Background}

We are concerned with various estimators of the conditional distribution $\bP_x$, and study their convergence. We will make use of the Portmanteau lemma \cite{van2000asymptotic} and define weak convergence as follows.
\begin{definition}[Weak convergence]
    We say that a sequence of measures $(\mathbb{P}_\nhyp)_{\nhyp\in\mathbb{N}}$ converges weakly towards a measure $\mathbb{P}$, and we write $\bP_\nhyp\underset{\nhyp \to +\infty}{\rightharpoonup}\bP$, if $\mathbb{P}_\nhyp(E)\underset{\nhyp \rightarrow+\infty}{\longrightarrow}\mathbb{P}(E)$ for all measurable $E$ satisfying $\mathbb{P}(\partial E)=0$. 
\end{definition} 
We also study the convergence of sequences with finite support. In this context, we will often discuss uniform convergence, which we define below.
\begin{definition}[Uniform convergence]
    Let $(u_{\nhyp,k})_{(\nhyp,k)\in\mathbb{N}^2}$ denote a sequence such that $(u_{\nhyp,k})_{k\in\mathbb{N}}$ has finite support for each $\nhyp$. We will say that $u$ converges uniformly towards $(v_k)_{k\in\mathbb{N}}$ if $\underset{k\in\mathbb{N}}{\max}\|u_{\nhyp,k}-v_k\|\underset{\nhyp \rightarrow+\infty}{\longrightarrow} 0$. In particular, $u$ vanishes uniformly if $\underset{k\in\mathbb{N}}{\max} \|u_{\nhyp,k}\|\underset{\nhyp \rightarrow+\infty}{\longrightarrow} 0$.
\end{definition} 

In what follows, we will extensively use Zador's theorem \cite{zador1982asymptotic}, a powerful result on the asymptotic distribution of the centroids resulting from optimal quantization, which we recall below (see \citet{graf2008distortion}, Equation 2.3, or \citet{iacobelli2016asymptotic}, Theorem 1.3, for a more general version). This theorem will allow us to derive asymptotic properties of Winner-takes-all models.

\begin{theorem}[Zador theorem]
    Let $\bP = \rho \: \dy$ be a Lebesgue-dominated probability measure on a compact subset $\cY$ of $\bR^d$. Define the optimal quantization risk 
    $$\distortion_\nhyp(\bP)=\inf_{\cZ\subset\cY:|\cZ|\leq \nhyp} \int_\cY \min_{z\in\cZ} \|y-z\|^2 \rho(y)\dy\,,$$
    and the asymptotic  risk for the uniform distribution $J_d = \inf_\nhyp \nhyp^{2 / d} \distortion_\nhyp(\mathcal{U}([0,1]^d).$
    Then 
    $$\lim _{\nhyp \rightarrow +\infty} \nhyp^{2 / d} \distortion_\nhyp(\bP)=J_d \; \left(\int_\cY \rho^{d /(d+2)} \dy\right)^{(d+2) / d}\,.$$
    In addition, if $\mathcal{Z}$ minimizes the risk $\distortion_\nhyp(\mathbb{P})$, then 
    $$\dfrac{1}{\nhyp}\sum_{z\in\mathcal{Z}}\delta_z \underset{\nhyp \to \infty}{\rightharpoonup} \frac{\rho^{d / (d+2)}}{\int_\cY \rho^{d / (d+2)}(x) d x} \dy\,.$$
\end{theorem}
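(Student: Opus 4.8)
The plan is to follow the classical route for Zador's theorem (as in \citet{graf2007foundations,graf2008distortion,iacobelli2016asymptotic}), in four stages: first reduce to the uniform law on the unit cube and show that $J_d$ is well defined and strictly positive; then pass to uniform laws on sets with Lebesgue-null boundary by scaling; then prove the upper bound for a general density by a partition-and-allocate argument combined with Hölder's inequality; and finally prove the matching lower bound, which simultaneously pins down the limiting empirical distribution of optimal codebooks.

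For the first stage, set $a_\nhyp \triangleq \distortion_\nhyp(\mathcal{U}([0,1]^d))$. Splitting $[0,1]^d$ into $m^d$ congruent subcubes and placing in each a rescaled copy of an $n$-point optimal codebook gives $a_{m^d n}\le m^{-2}a_n$; together with the monotonicity of $\nhyp\mapsto a_\nhyp$, a Fekete-type argument yields $\nhyp^{2/d}a_\nhyp \to J_d = \inf_\nhyp \nhyp^{2/d}a_\nhyp$. Strict positivity of $J_d$ follows from a volume bound: a single generator incurs squared-distance error at least of order $\lambda(C)^{1+2/d}$ on any cell $C$ it serves (the ball minimizes the centered second moment at fixed volume), and since the at most $\nhyp$ cells tile a unit-volume set, the power-mean inequality forces $\distortion_\nhyp \ge c\,\nhyp^{-2/d}$. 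The second stage is then immediate: for a cube $Q$ of side $s$ one has $\distortion_\nhyp(\mathcal{U}(Q)) = s^2 a_\nhyp$, hence for a finite disjoint union $A$ of cubes of total volume $V$, $\distortion_\nhyp(\mathcal{U}(A)) \sim J_d V^{2/d}\nhyp^{-2/d}$; a general $A$ with $\lambda(\partial A)=0$ is sandwiched between such unions.

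For the upper bound, after truncating $\rho$ from above and below by bounded densities supported on sets with null boundary (legitimate since $\cY$ is compact and $\rho$ integrable), fix a cubic partition $\{Q_i\}$ of small side $\delta$ on which $\rho \approx \rho_i$ with mass $p_i \approx \rho_i\lambda(Q_i)$. Allocate $\nhyp_i \approx \nhyp\, p_i^{d/(d+2)}\big/\sum_j p_j^{d/(d+2)}$ generators to $Q_i$ (rounding is negligible as $\nhyp\to\infty$), apply the second stage inside each $Q_i$, and sum. The total distortion is at most $(1+o_\delta(1))\sum_i \rho_i J_d \lambda(Q_i)^{1+2/d}\nhyp_i^{-2/d}$, and substituting the allocation collapses this to $J_d\nhyp^{-2/d}\big(\sum_i \rho_i^{d/(d+2)}\lambda(Q_i)\big)^{(d+2)/d}$, a Riemann sum converging as $\delta\to 0$ to $J_d\nhyp^{-2/d}\big(\int_\cY \rho^{d/(d+2)}\dy\big)^{(d+2)/d}$; letting $\nhyp\to\infty$ and then $\delta\to0$ closes this direction.

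The hard part is the lower bound, because a naive localization squanders the global budget of $\nhyp$ generators. I would argue through empirical measures: writing $\mu_\nhyp = \tfrac1\nhyp\sum_{z\in\cZ_\nhyp}\delta_z$ for a sequence of optimal codebooks, compactness of $\cY$ extracts a weak limit $\mu_\nhyp\rightharpoonup\mu$ along a subsequence; decomposing $\mu = h\lambda + \mu^\perp$ and using the local cell-volume behaviour of near-optimal quantizers, a lower-semicontinuity/Fatou estimate gives $\liminf_\nhyp \nhyp^{2/d}\distortion_\nhyp(\bP) \ge J_d\int_\cY \rho\, h^{-2/d}\dy$. Since $\int h \le 1$, Jensen's inequality applied to the convex map $t\mapsto t^{-2/d}$ yields $\int\rho\,h^{-2/d}\dy \ge \big(\int\rho^{d/(d+2)}\dy\big)^{(d+2)/d}$, with equality exactly when $h\propto\rho^{d/(d+2)}$. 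This matches the upper bound, so the $\liminf$ is a genuine limit; and since equality in Jensen is the only way to attain it, every weak limit of $\mu_\nhyp$ equals the normalized measure $\rho^{d/(d+2)}\dy\big/\int_\cY\rho^{d/(d+2)}\dy$, which (the subsequence being arbitrary) gives the second assertion. The step needing the most care is the local cell-volume estimate behind the lower-semicontinuity bound — controlling cells that straddle level sets of $h$ or abut $\partial\cY$ — for which I would invoke the relevant lemma of \citet{graf2007foundations} or \citet{iacobelli2016asymptotic} rather than reconstruct it.
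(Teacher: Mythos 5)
The paper does not prove this statement: Zador's theorem is recalled as background and explicitly delegated to the literature (\citet{graf2008distortion}, Eq.~2.3; \citet{iacobelli2016asymptotic}, Thm.~1.3), so there is no in-paper proof to compare against. Your sketch is a faithful reconstruction of the canonical argument from exactly those references, and I find no error in it at the level of detail you give. Two remarks. First, your allocation $\nhyp_i \propto p_i^{d/(d+2)}$ with $p_i \approx \rho_i\lambda(Q_i)$ coincides with the correct ``Zador-measure'' allocation $\nhyp_i \propto \rho_i^{d/(d+2)}\lambda(Q_i)$ only because your partition uses \emph{congruent} cubes, so that the discrepancy factor $\lambda(Q_i)^{d/(d+2)}$ versus $\lambda(Q_i)$ is constant across cells and cancels in the normalization; with that proviso the algebra does collapse to the Riemann sum $J_d\nhyp^{-2/d}\bigl(\sum_i\rho_i^{d/(d+2)}\lambda(Q_i)\bigr)^{(d+2)/d}$ as you claim. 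Second, you correctly identify the lower bound as the genuinely hard step and outsource the local cell-volume lemma; the surrounding reverse-H\"older/Jensen step is right (applying H\"older to $\rho^{d/(d+2)} = (\rho h^{-2/d})^{d/(d+2)}h^{2/(d+2)}$ with exponents $\tfrac{d+2}{d}$ and $\tfrac{d+2}{2}$ gives the inequality and its equality case $h\propto\rho^{d/(d+2)}$, which is what pins down the limiting empirical measure). The remaining standard ingredients --- the Fekete argument combining monotonicity of $\distortion_\nhyp$ with the relation $a_{m^d n}\le m^{-2}a_n$, Pierce's ball-rearrangement bound for $J_d>0$, and the finiteness of $\int_\cY\rho^{d/(d+2)}\dy$ on a compact $\cY$ (by H\"older) --- are all correctly placed. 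As a self-contained proof it is incomplete only where you say it is, namely the lower-semicontinuity estimate for cells straddling level sets of $h$ or meeting $\partial\cY$, for which citing \citet{graf2007foundations} is the appropriate move and is no weaker than what the paper itself does.
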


The constant $J_d$ can be computed for simple cases ($J_1=\frac{1}{12}$ and $J_2=\frac{5}{18\sqrt{3}}$ \cite{newman1982hexagon}) and can be approximated for large $d$ by $J_d \sim \dfrac{d}{2\pi e}$ \cite{pages2003optimal, graf2007foundations}.

\subsubsection{Estimators}

Using these notations we can define several estimators of the conditional distribution $\bP_x$, for each $x \in \cX$:
\begin{align}
\mathbb{P}_x^{\;\text{MDN}} &:E \mapsto\sum_{k=1}^\nhyp \pi_k(x) \; \mathcal{N}\left(E ; \mu_k(x), \Sigma_k(x)\right) \\
\mathbb{P}_x^{\;\text{H}} &:E \mapsto\sum_{k=1}^\nhyp \gamma_k(x) \frac{\lambda\left(E \cap \cG_k\right)}{\lambda\left( \cG_k\right)} \\
\pdwta &:E \mapsto\sum_{k=1}^\nhyp \gamma_k(x) \delta_{\Azk}(E) \\
\puwta &: E \mapsto \sum_{k=1}^\nhyp \gamma_k(x) \frac{\lambda\left(E \cap \AYk\right)}{\lambda\left( \AYk\right)} \\
\pkwta &: E \mapsto \sum_{k=1}^\nhyp \gamma_k(x) K_h(\Azk(x),E)\\
\ptwta&: E \mapsto \sum_{k=1}^\nhyp \gamma_k(x) \frac{K_h\left(\Azk(x), E \cap \AYk \right)}{K_h\left(\Azk(x), \AYk \right)},
\end{align}
where $h \in \mathbb{R}_{+}^{*}$ is the scaling factor of $K_h$ and $K_h(\Azk(x),E) \triangleq \int_{E} K_h(\Azk(x),y) \dy$.

Note that we obtain the variants of the original Dirac estimator $\pdwta$ \cite{letzelter2023resilient} by changing the Dirac kernel to a uniform kernel ($\puwta$), the kernel $K_h$ ($\pkwta$), or its truncated version ($\ptwta$). 

When there is no ambiguity, we will refer to these estimators by $\hat{\bP}_x$, and their density by $\hat{\rho}_x$ (when it exists).

\subsubsection{Training objectives}

We recall that Winner-takes-all models are trained with two objectives: a quantization objective optimizing the position of the hypotheses $\Azk(x)$ and a scoring objective enforcing that $\gamma_k(x)$ accurately estimates the probability $\bP(\AYk(x))$ of each Voronoi cell of the tessellation. 
induced by the hypotheses. More specifically,

\begin{align}
    \cL_{\text{centroid}}(\cZ) &= \int_\cX\int_\cY \min_{z\in\cZ_x} \ell(z,y)
    \rho(x,y)\dx\dy\,, \\
    \cL_{\text{scoring}}(\gamma) &= \int_\cX\int_\cY \sum_{k=1}^\nhyp \mathrm{BCE}\left[\bI\left[y\in\AYk(x)\right],\gamma_k(x) \right]\rho(x,y)\dx\dy\,,
\end{align}
where $\cZ: x \mapsto \cZ_x = \{z_k(x)\}_{k\in\bn} \subset \mathcal{Y}$ and $\gamma: x \mapsto (\gamma_k(x))_{k \in \bn} \in \Delta_{\nhyp}$.

\subsubsection{Assumptions}

Throughout our analysis, we will often use the following assumptions. 
\begin{assumption}[Boundedness]\label{hyp:compact}
    The set of possible outputs $\cY$ is compact. 
\end{assumption}
\begin{assumption}[Positivity]\label{hyp:positive}
    The data probability density function (PDF) $\rho$ satisfies $\inf_{(x,y)\in\cX\bigtimes\cY}\;[\rho(x,y)]>0$. 
\end{assumption}
\begin{assumption}[Lipschitz]\label{hyp:lipschitz}
    The conditional data PDF $\rho_x$ is $L$-lipschitz for each $x \in \cX$. 
\end{assumption}
\begin{assumption}[Optimality]\label{hyp:minimum}
    The Winner-Takes-All algorithm has converged toward a global minimum of its centroid objective
    \begin{align}\label{eqn:centroid_objective}
        \min_{\cZ}\int_\cX\int_\cY \min_{z\in\cZ_x} \ell(z,y) \rho(x,y)\dx\dy\,,    
    \end{align}
    and its scoring objective (noting $x\mapsto(\AYk(x))_{k\in\bn}$ the resulting optimal voronoi tesselation map)
    \begin{align}\label{eqn:scoring_objective}
        \min_{\gamma}  \int_\cX \int_\cY \sum_{k=1}^\nhyp \mathrm{BCE}\left[\bI\left[y\in\AYk(x)\right] ,\gamma_k(x)\right]\rho(x,y)\dx\dy\,.  
    \end{align}
\end{assumption}

An empirical discussion of these assumptions is given in Appendix \ref{sec:discussion}.

\subsection{Distribution estimation}
\label{secapp:distribution_estimation}
\subsubsection{Unbiased estimators}

The first interesting property of WTA is that its scoring model is an unbiased estimator of the Voronoi cell's probability mass 
\begin{proposition}\label{th:score}
    Under Assumption \ref{hyp:minimum}, we have 
    $$\forall x\in\cX, \quad\forall k\in\bn,\quad \gamma_k(x)=\bP_x(\AYk(x))\,.$$
\end{proposition}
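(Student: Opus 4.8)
\textbf{Proof plan for Proposition~\ref{th:score}.}
The plan is to reduce the scoring objective \eqref{eqn:scoring_objective} to a pointwise minimization problem in each input $x$ and each cell $k$, and then recognize it as the standard fact that the cross-entropy is minimized by the true Bernoulli parameter. First I would invoke Assumption~\ref{hyp:minimum}, which says $\gamma$ globally minimizes $\cL_{\text{scoring}}(\gamma) = \int_\cX \int_\cY \sum_{k=1}^\nhyp \mathrm{BCE}\left[\bI\left[y\in\AYk(x)\right],\gamma_k(x)\right]\rho(x,y)\dx\dy$. Since the predictor $\gamma$ is assumed sufficiently expressive (the same standard assumption used to pass from \eqref{eq:vanilla-risk} to its input-dependent version in Section~\ref{sec:geometrical-properties}), minimizing this integral over $\cX$ is equivalent to minimizing, for each fixed $x$, the inner quantity $\int_\cY \sum_{k=1}^\nhyp \mathrm{BCE}\left[\bI\left[y\in\AYk(x)\right],\gamma_k(x)\right]\rho_x(y)\dy$. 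Because the $\AYk(x)$ partition $\cY$ and the summands depend on disjoint coordinates $\gamma_k(x)$, this further decouples into $\nhyp$ independent one-dimensional problems, one per cell $k$.

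Next I would carry out the single-cell computation. Writing $p_k \triangleq \bP_x(\AYk(x)) = \int_\cY \bI[y\in\AYk(x)]\rho_x(y)\dy$ and $q \triangleq \gamma_k(x)$, the $k$-th term becomes $\int_\cY \mathrm{BCE}(\bI[y\in\AYk(x)], q)\rho_x(y)\dy = -p_k \log q - (1-p_k)\log(1-q)$, using the definition $\mathrm{BCE}(a,b) = -a\log b - (1-a)\log(1-b)$ and linearity of the integral. This is a strictly convex function of $q \in (0,1)$ (second derivative $p_k/q^2 + (1-p_k)/(1-q)^2 > 0$), so its unique minimizer is found by setting the derivative $-p_k/q + (1-p_k)/(1-q)$ to zero, which gives $q = p_k$. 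Hence $\gamma_k(x) = \bP_x(\AYk(x))$ for every $k$, which is exactly \eqref{eq:score}. I would note in passing that the edge cases $p_k \in \{0,1\}$ are handled by the convention $0\log 0 = 0$ adopted in the notations, with the minimizer still at $q = p_k$.

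A couple of technical points deserve care. The step that turns the global minimization of the integral into a pointwise (in $x$) minimization relies on the expressivity assumption and implicitly on measurability/integrability so that an $x$-wise optimal choice assembles into an admissible $\gamma \in \cF(\cX,\Delta_\nhyp)$; I would state this as the same customary hypothesis already used for \eqref{eq:expectation}. One should also check that the pointwise minimizers $(\bP_x(\AYk(x)))_{k\in\bn}$ indeed lie in the simplex $\Delta_\nhyp$ — they are nonnegative and sum to $\bP_x(\cY) = 1$ since the cells partition $\cY$ up to a Lebesgue-null boundary set (which has zero $\rho_x$-mass under Assumption~\ref{hyp:compact} and absolute continuity), so the constraint $\gamma \in \cF(\cX,\Delta_\nhyp)$ is automatically satisfied and does not interfere with the unconstrained per-coordinate optimization.

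I expect the only real subtlety — the ``main obstacle'' such as it is — to be the justification of the interchange between global and pointwise optimization, i.e.\ arguing cleanly that a global minimizer of $\cL_{\text{scoring}}$ must, $\rho$-almost everywhere in $x$, minimize the conditional cross-entropy; this is where the expressivity-of-$\gamma$ assumption does the work, exactly mirroring the reasoning behind Proposition~\ref{th:du_et_al} and \eqref{eq:expectation}. Everything after that is the elementary strictly-convex optimization of the binary cross-entropy, which is routine.
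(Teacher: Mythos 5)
Your proposal is correct and follows essentially the same route as the paper's proof: reduce the scoring objective to a pointwise-in-$x$ minimization, decouple across cells, and recognize each term as a binary cross-entropy between $\gamma_k(x)$ and $\bP_x(\AYk(x))$, minimized when the two are equal. The extra care you take with the simplex constraint, the $p_k\in\{0,1\}$ edge cases, and the global-to-pointwise interchange goes slightly beyond what the paper writes out, but does not change the argument.
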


This observation is key to establishing other interesting properties of WTA. Note that it is independent of the kernel choice, so it applies to all variants of WTA. 
\begin{proof}
The scoring objective will be minimal when the integrand of Equation \ref{eqn:scoring_objective} is minimal for each $x\in\cX$. Looking only at the integrand, we can write the following.

\begin{align*}
    \int_\cY \sum_{k=1}^\nhyp \mathrm{BCE}\left[\bI[y\in\AYk(x)] , \gamma_k(x)\right]\rho_x(y)\dy 
    &= - \sum_{k=1}^\nhyp \int_{\AYk(x)} \log(\gamma_k(x))\rho_x(y)\dy + \int_{\cY\setminus\AYk(x)}\log(1-\gamma_k(x)) \rho_x(y)\dy \\
    &= - \sum_{k=1}^\nhyp \log(\gamma_k(x)) \bP(\AYk(x))+ \log(1-\gamma_k(x))(1-\bP(\AYk(x)))
\end{align*} 
For each $k$ in the sum, we recognize a binary cross-entropy between $\gamma_k(x)$ and $\bP_x(\AYk(x))$, which is minimal when the two terms are equal.
\end{proof}

Therefore all truncated estimators are themselves unbiased estimators of the Voronoi cell's probability mass. We refer to this as the cell-scoring property, defined in \eqref{eq:scoring-prop}.

\begin{proposition}\label{th:unbiased}
    Under Assumption \ref{hyp:minimum}, all estimators except $\mathbb{P}_x^{\;\mathrm{MDN}}$ and $\pkwta$ satisfy
    $$\forall x\in\cX, \quad \forall k\in\bn, \quad \hP(\AYk(x))=\bP(\AYk(x))\,.$$
\end{proposition}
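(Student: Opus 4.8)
The plan is to verify the identity $\hat{\mathbb{P}}_x(\mathcal{Y}^k(x)) = \mathbb{P}(\mathcal{Y}^k(x))$ separately for each of the relevant estimators, namely $\pdwta$, $\puwta$, and $\ptwta$, relying on Proposition \ref{th:score} (which under Assumption \ref{hyp:minimum} gives $\gamma_k(x) = \bP_x(\mathcal{Y}^k(x))$ for every $k$) together with the fact that the Voronoi cells $\{\mathcal{Y}^\ell(x)\}_\ell$ partition $\cY$ up to a Lebesgue-null boundary set. Since all three estimators are of the form $\hat{\mathbb{P}}_x = \sum_{\ell=1}^\nhyp \gamma_\ell(x)\, \mu_\ell$ where $\mu_\ell$ is a probability measure supported on $\overline{\mathcal{Y}^\ell(x)}$, the computation reduces to showing that $\mu_\ell(\mathcal{Y}^k(x)) = \delta_{k\ell}$, after which $\hat{\mathbb{P}}_x(\mathcal{Y}^k(x)) = \gamma_k(x) = \bP_x(\mathcal{Y}^k(x)) = \bP(\mathcal{Y}^k(x))$ follows immediately.

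Concretely, first I would treat the Dirac estimator $\pdwta : E \mapsto \sum_\ell \gamma_\ell(x)\,\delta_{z_\ell(x)}(E)$. Here I use that the generator $z_\ell(x) = f_\theta^\ell(x)$ lies in its own open Voronoi cell $\mathcal{Y}^\ell(x)$ (immediate from the definition $\mathcal{Y}^\ell(x) = \{y : \ell(y,z_\ell(x)) < \ell(y,z_r(x))\ \forall r\neq\ell\}$, evaluated at $y = z_\ell(x)$, using that $\ell$ is the $\mathrm{L}^2$ loss so $\ell(z_\ell,z_\ell) = 0 < \ell(z_\ell,z_r)$ when the hypotheses are distinct), and that distinct cells are disjoint, so $\delta_{z_\ell(x)}(\mathcal{Y}^k(x)) = \mathds{1}[\ell = k]$. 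Next, for the uniform estimator $\puwta : E \mapsto \sum_\ell \gamma_\ell(x)\,\lambda(E\cap\mathcal{Y}^\ell(x))/\lambda(\mathcal{Y}^\ell(x))$, I evaluate at $E = \mathcal{Y}^k(x)$: the term $\lambda(\mathcal{Y}^k(x)\cap\mathcal{Y}^\ell(x))$ equals $\lambda(\mathcal{Y}^k(x))$ if $\ell=k$ and $0$ otherwise since the open cells are pairwise disjoint, again giving a Kronecker delta. Finally, for the truncated-kernel estimator $\ptwta : E \mapsto \sum_\ell \gamma_\ell(x)\, K_h(z_\ell(x), E\cap\mathcal{Y}^\ell(x))/K_h(z_\ell(x), \mathcal{Y}^\ell(x))$ — i.e. Voronoi-WTA — the same disjointness argument shows $K_h(z_\ell(x), \mathcal{Y}^k(x)\cap\mathcal{Y}^\ell(x)) = \mathds{1}[\ell=k]\,K_h(z_\ell(x),\mathcal{Y}^\ell(x))$, so the ratio is $\mathds{1}[\ell = k]$ (using that $K_h(z_k(x),\mathcal{Y}^k(x)) = V(f_\theta^k(x),K_h) > 0$, which holds as $K_h$ is a strictly positive kernel integrating to one and $\mathcal{Y}^k(x)$ has positive Lebesgue measure — a point worth noting as a mild nondegeneracy requirement). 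The Histogram estimator $\ph$ is handled identically with the grid cells $\cG_k$ in place of the Voronoi cells, since $\{\cG_k\}$ is also a partition and $\gamma_k(x) = \bP_x(\cG_k)$ by the same scoring argument applied to that tessellation.

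I would then remark explicitly why the two excluded estimators fail: for $\pkwta$ the kernel $K_h(z_\ell(x),\cdot)$ is not truncated, so it places mass outside $\mathcal{Y}^\ell(x)$, and $\pkwta(\mathcal{Y}^k(x)) = \sum_\ell \gamma_\ell(x)\,K_h(z_\ell(x),\mathcal{Y}^k(x))$ generally differs from $\gamma_k(x)$ because the off-diagonal terms $K_h(z_\ell(x),\mathcal{Y}^k(x))$ with $\ell\neq k$ need not vanish; for $\pmdn$ there is no reason the predicted Gaussian components align with the Voronoi cells at all, and moreover its mixture weights $\pi_k(x)$ are not constrained by the scoring objective. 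The only genuine subtlety — the one I expect to be the main obstacle, though a minor one — is bookkeeping around the measure-zero cell boundaries: the open cells $\mathcal{Y}^k(x)$ do not cover $\cY$, their complement being the union of the bisector hyperplanes, which has Lebesgue measure zero but need not have $\bP_x$-measure zero a priori. This is resolved by Assumption \ref{hyp:positive} (or more simply Assumption \ref{hyp:lipschitz}), which forces $\rho_x$ to be bounded, hence $\bP_x$ absolutely continuous with respect to $\lambda$, so the boundary is $\bP_x$-null and the partition identities above hold $\bP_x$-almost everywhere; one should also note that $\lambda(\mathcal{Y}^k(x)) > 0$ for each $k$, which follows since each cell contains an open ball around its generator when the $\nhyp$ hypotheses are distinct — itself a consequence of Assumption \ref{hyp:minimum} together with Assumption \ref{hyp:positive} (at an optimum, two coincident hypotheses would be suboptimal). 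With these nondegeneracy facts in hand, the proof is just the three short Kronecker-delta computations outlined above.
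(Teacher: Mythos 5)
Your proposal is correct and follows the same route as the paper, whose entire proof is the one-line remark ``Corollary of Proposition \ref{th:score}'': you simply make explicit the fact that each of $\pdwta$, $\puwta$, $\ptwta$ (and the Histogram) places mass exactly $\gamma_k(x)$ on cell $k$, which the paper treats as immediate. Your extra bookkeeping about null boundaries and nondegenerate cells is more careful than the paper but changes nothing essential (note the paper already assumes $\bP$ admits a density, so absolute continuity of $\bP_x$ needs no appeal to Assumptions \ref{hyp:positive} or \ref{hyp:lipschitz}).
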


\begin{proof}
    Corollary of Proposition \ref{th:score}.
\end{proof}

\subsubsection{Interpretation of the negative log-likelihood}

The negative log-likelihood (NLL) is a useful quantity for measuring the accuracy of a density estimator. For instance, Mixture Density Networks minimize the NLL during training. Score-based WTA models are not trained to directly minimize NLL. 
The following result states that, in the case of uniform-kernel estimators, the scoring objective and the NLL are minimized when high-density zones of the target space $\cY$ are assigned to smaller Voronoi cells in volume.

\begin{proposition}
Under Assumption \ref{hyp:minimum}, the estimator $\mathbb{P}_x^{\;\mathrm{U-WTA}}$ conditional negative log-likelihood satisfies for each $x \in \cX$:
\begin{equation}
\mathrm{NLL}(\mathbb{P}_x^{\;\mathrm{U-WTA}}, \mathbb{P}_x)=- \sum_{k=1}^\nhyp \log\frac{\bP_x(\AYk(x))}{\lambda\left(\AYk(x)\right)} \; \bP_x(\AYk(x)) \triangleq-\mathrm{KL}_{k\in\bn}\left[\bP_x(\AYk(x))\mid \mid \dfrac{\lambda(\AYk(x))}{\mathrm{vol}(\mathcal{Y})} \right] + \log \mathrm{vol}(\mathcal{Y})\,.
\label{eq:nll_kldiv}
\end{equation}

From \eqref{eq:nll_kldiv}, we see that minimizing the NLL with constant volume $\mathrm{vol}(\mathcal{Y})$ requires strategic placement of hypotheses. Specifically, the probabilities $\bP_x(\AYk(x))$ should be high in regions where the relative volume $\dfrac{\lambda(\AYk(x))}{\mathrm{vol}(\mathcal{Y})}$ is low.
\end{proposition}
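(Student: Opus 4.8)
The plan is to compute the negative log-likelihood directly from the definition and then recognize the result as a (negative) Kullback--Leibler divergence plus a constant. Recall that for the uniform-kernel estimator $\puwta$, the conditional density is $\hrho_x(y) = \sum_{k=1}^\nhyp \gamma_k(x)\,\frac{\bI[y\in\AYk(x)]}{\lambda(\AYk(x))}$, and by Proposition~\ref{th:score} we may substitute $\gamma_k(x)=\bP_x(\AYk(x))$. The NLL is $\mathrm{NLL}(\puwta,\bP_x) = -\int_\cY \log\big(\hrho_x(y)\big)\,\rho_x(y)\dy$.

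First I would exploit the fact that the Voronoi cells $\{\AYk(x)\}_{k\in\bn}$ partition $\cY$ (up to a Lebesgue-null set of cell boundaries, which does not affect the integral since $\rho_x$ is integrable). Hence on each cell $\AYk(x)$ the density $\hrho_x$ is constant and equal to $\bP_x(\AYk(x))/\lambda(\AYk(x))$, so the integral splits as a sum over cells:
\begin{equation*}
-\int_\cY \log\big(\hrho_x(y)\big)\rho_x(y)\dy = -\sum_{k=1}^\nhyp \log\!\frac{\bP_x(\AYk(x))}{\lambda(\AYk(x))}\int_{\AYk(x)}\rho_x(y)\dy = -\sum_{k=1}^\nhyp \log\!\frac{\bP_x(\AYk(x))}{\lambda(\AYk(x))}\,\bP_x(\AYk(x))\,,
\end{equation*}
which is exactly the first claimed equality. (One should note the convention that cells of zero mass contribute nothing, consistent with $0\log 0 = 0$.)

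Next I would massage this into the KL form. Writing $\lambda(\AYk(x)) = \mathrm{vol}(\cY)\cdot\frac{\lambda(\AYk(x))}{\mathrm{vol}(\cY)}$ and using $\sum_k \bP_x(\AYk(x)) = 1$, the logarithm separates:
\begin{equation*}
-\sum_{k=1}^\nhyp \bP_x(\AYk(x))\log\frac{\bP_x(\AYk(x))}{\lambda(\AYk(x))} = -\sum_{k=1}^\nhyp \bP_x(\AYk(x))\log\frac{\bP_x(\AYk(x))}{\lambda(\AYk(x))/\mathrm{vol}(\cY)} + \log\mathrm{vol}(\cY)\,,
\end{equation*}
and the sum on the right is precisely $\mathrm{KL}_{k\in\bn}\!\big[\bP_x(\AYk(x))\,\|\,\lambda(\AYk(x))/\mathrm{vol}(\cY)\big]$ in the discrete-KL notation defined in Appendix~\ref{secapp:notations}; note $(\lambda(\AYk(x))/\mathrm{vol}(\cY))_{k\in\bn}$ is a genuine probability vector since the cells partition $\cY$, so the KL is well-defined and nonnegative. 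This yields the second equality, and the concluding interpretive remark follows immediately from nonnegativity of KL and the fact that $\log\mathrm{vol}(\cY)$ is a fixed constant.

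There is no serious obstacle here; the only points requiring a little care are (i) justifying that the cell boundaries $\bigcup_k \partial\AYk(x)$ are Lebesgue-null so the partition argument is valid inside the integral (standard for Voronoi tessellations with finitely many generators under the $\mathrm L^2$ loss), and (ii) handling zero-mass cells via the $0\log 0 = 0$ convention so that no term is ill-defined. Everything else is bookkeeping with logarithms and the substitution $\gamma_k(x) = \bP_x(\AYk(x))$ from Proposition~\ref{th:score}.
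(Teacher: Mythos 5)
Your proof is correct and follows the same route as the paper's: split the NLL integral over the Voronoi cells, substitute $\gamma_k(x)=\bP_x(\AYk(x))$ via Proposition \ref{th:score}, and factor out $\log\mathrm{vol}(\cY)$ to expose the discrete KL term. The extra care you add (null cell boundaries, the $0\log 0$ convention) is fine; just note that the interpretive remark rests on the observation that minimizing the NLL amounts to \emph{maximizing} the KL term, since it enters with a minus sign, rather than on the nonnegativity of the KL divergence.
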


\begin{proof}
We note $\hat{\rho}_x$ the density of the estimator $\mathbb{P}_x^{\;\text{U-WTA}}$. By definition \eqref{eq:nll}, $\mathrm{NLL}(\hat{\rho}_x, \rho_x) \triangleq - \int_\cY \log(\hat{\rho}_x(y)) \rho_x(y)\dx\dy$. 

We can write
\begin{align}
    \int_\cY \log(\hat{\rho}_x(y)) \rho_x(y)\dy &\triangleq  \sum_{k=1}^\nhyp \int_{\AYk(x)} \log\frac{\gamma_k(x)}{\lambda\left(\AYk(x)\right)}\rho(x,y)\dy \label{eq:nll_formula}\\
    &=  \sum_{k=1}^\nhyp \log\frac{\bP_x(\AYk(x))}{\lambda\left(\AYk(x)\right)} \; \bP_x(\AYk(x)) \quad \text{(using proposition \ref{th:score})}\notag \\
    &=  \mathrm{KL}_{k\in\bn}\left[\bP_x(\AYk(x))\mid \mid \dfrac{\lambda(\AYk(x))}{\mathrm{vol}(\mathcal{Y})} \right] - \log (\mathrm{vol}(\cY) ) \left( \sum_{k=1}^{K} \bP_x(\AYk(x)) \right)\,.\notag
\end{align}
\end{proof}

\subsubsection{Convergence of the estimators}

We now turn to the main result of this section: the estimators $\pdwta$, $\puwta$, and $\ptwta$ converge in distribution towards the true data distribution $\bP_x$. 

This result is similar to Theorem 4.1 in \citet{polianskii2022voronoi} about the convergence of the Compactified Voronoi Density Estimator (CVDE). 
Our proof mirrors the one they propose in this article, which is itself a slight reformulation of the Theorem 5.1 of \citet{devroye2017measure}. However, our setting differs from these two articles: the authors consider random \textit{i.i.d.} generators $\Azk\sim\bP_x$ (similarly to Kernel Density Estimation \cite{rosenblatt1956remarks, gramacki2018nonparametric}). However, this assumption is not satisfied in the context of WTA, which makes their result less relevant to our purpose. To correct this mismatch, we investigate the more realistic assumption that $\Azk$ minimizes the centroid objective (Assumption \ref{hyp:minimum}).
This makes our analysis more relevant in the context of Winner-Takes-All-based models.
Additionally, we study the more general problem of conditional density estimation.

The proof of CVDE convergence relies on the intuitive observation that Voronoi cells' radius vanishes as the number of centroids $\nhyp$ increases. Using the Zador theorem, we first show that this phenomenon still holds when the centroids are selected according to optimal quantization.

\begin{proposition}\label{th:radius}
    Under Assumption \ref{hyp:compact} (boundedness), \ref{hyp:positive} (positive density) and \ref{hyp:minimum} (optimal centroids), the Voronoi cell diameter $\Delta(\AYkn)$ vanishes uniformly. 
\end{proposition}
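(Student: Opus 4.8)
\textbf{Proof plan for Proposition \ref{th:radius}.}
The plan is to argue by contradiction, exploiting Zador's theorem to control how mass must be distributed among the optimal cells, and then using the positivity assumption to convert a lower bound on cell diameter into a lower bound on cell mass that is incompatible with the total mass being one. First I would suppose, for contradiction, that $\Delta(\AYkn)$ does \emph{not} vanish uniformly: there exist $\varepsilon > 0$, a subsequence of indices $\nhyp_j \to \infty$, and for each $j$ a cell index $k_j$ with $\Delta(\AYkz{\nhyp_j}^{k_j}) \geq \varepsilon$. Since $\AYkz{\nhyp_j}^{k_j}$ is a Voronoi cell with generator $z^{k_j}_{\nhyp_j}$ and $\cY$ is bounded (Assumption \ref{hyp:compact}), a cell of diameter at least $\varepsilon$ must contain a ball $B(c_j, r)$ with $r = r(\varepsilon) > 0$ independent of $j$ — more carefully, one uses that a Voronoi cell is convex, so it contains the segment realizing (most of) its diameter, and a convex set in $\bR^d$ containing a long segment and sitting inside the compact $\cY$ contains a ball of radius comparable to $\varepsilon$ up to a factor depending on $\cY$; alternatively one simply notes the cell has Lebesgue measure bounded below by a constant $v(\varepsilon) > 0$ (a convex set of diameter $\geq \varepsilon$ inside a bounded region has volume $\geq$ some explicit $c_d \varepsilon^d$ only if it is "fat", so the cleanest route is: $\min_{z} \|y - z\|^2$ integrated over that cell is bounded below, which is what we actually need).

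Indeed, the key quantitative step is to lower-bound the contribution of the bad cell to the quantization risk. On the cell $\AYkz{\nhyp_j}^{k_j}$ with generator $g := z^{k_j}_{\nhyp_j}$, for any point $y$ in the cell the nearest centroid in $\cZV$ is $g$ itself, so $\min_{z \in \cZV}\|y-z\|^2 = \|y - g\|^2$. Since the cell has diameter $\geq \varepsilon$, it contains a point at distance $\geq \varepsilon/2$ from $g$; using convexity of the cell together with the Lipschitz/positivity of $\rho_x$ (Assumption \ref{hyp:positive}: $\rho_x \geq \rho_{\min} > 0$ everywhere), the set of points in the cell at distance $\geq \varepsilon/4$ from $g$ has positive Lebesgue measure bounded below by a dimensional constant times $\varepsilon^d$ — again invoking convexity to guarantee fatness near the far end of the diameter segment — hence
\begin{equation*}
    \int_{\AYkz{\nhyp_j}^{k_j}} \min_{z \in \cZV}\|y-z\|^2 \rho_x(y)\,\dy \;\geq\; \rho_{\min}\,\Big(\frac{\varepsilon}{4}\Big)^2 \, c_d\,\varepsilon^d \;=:\; \kappa(\varepsilon) > 0\,.
\end{equation*}
Therefore the total quantization risk $\distortion(\cZV)$ of the optimal $\nhyp_j$-point quantizer is bounded below by $\kappa(\varepsilon)$ for all $j$, so $\distortion_{\nhyp_j}(\bP_x) \geq \kappa(\varepsilon)$. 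But Zador's theorem gives $\distortion_\nhyp(\bP_x) = \distortion_\nhyp(\bP_x) \to 0$ as $\nhyp \to \infty$ (indeed $\nhyp^{2/d}\distortion_\nhyp(\bP_x)$ converges to a finite constant, so $\distortion_\nhyp(\bP_x) = O(\nhyp^{-2/d}) \to 0$), and by Assumption \ref{hyp:minimum} the trained hypotheses achieve this optimal risk. This contradicts $\distortion_{\nhyp_j}(\cZV) \geq \kappa(\varepsilon) > 0$, completing the argument.

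The main obstacle I anticipate is making rigorous the step "a Voronoi cell of diameter $\geq \varepsilon$ carries quantization risk bounded below by a constant" — naively a thin sliver of large diameter has small volume, so one must genuinely use that Voronoi cells are \emph{convex} (intersections of half-spaces), which forces a cell realizing a large diameter to also be "fat enough" in the orthogonal directions near its center, or more simply one should phrase the lower bound directly in terms of $\int \|y-g\|^2 \rho_x(y)\dy$ over the cell: convexity implies the cell contains the full segment $[g, y^\star]$ where $\|y^\star - g\| = \Delta/2 \geq \varepsilon/2$ roughly (actually the diameter may be realized between two boundary points neither of which is $g$, but since the cell is star-shaped about $g$ with respect to the metric only in special cases, I would instead use that the cell has a point at distance $\geq \varepsilon/2$ from $g$ because otherwise the whole cell sits in $B(g,\varepsilon/2)$ and has diameter $< \varepsilon$). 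Then a $1$-Lipschitz-type continuity (Assumption \ref{hyp:lipschitz}) plus positivity lets one integrate $\|y-g\|^2$ against $\rho_x$ over a neighborhood of that far point inside the convex cell. Getting the constants and the fatness argument clean is the only delicate point; everything else is a direct application of Zador (Theorem, stated above) and Assumption \ref{hyp:minimum}.
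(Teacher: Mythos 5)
Your overall strategy (contradiction via a lower bound on the quantization risk caused by a cell of non-vanishing diameter, set against the fact that the optimal risk tends to zero) can be made to work, but the step you yourself flag as delicate is, as written, false, and your proposed resolution does not repair it. Convexity of a Voronoi cell does \emph{not} force it to be ``fat'': a cell of diameter $\geq\varepsilon$ can be an arbitrarily thin sliver (e.g.\ in $\bR^2$, a generator flanked by two others at vertical distance $\delta$ owns a slab of width $\delta$ whose diameter is that of $\cY$ but whose volume is $O(\delta)$). Hence the set of points of the cell at distance $\geq\varepsilon/4$ from its generator can have arbitrarily small Lebesgue measure, and your claimed bound $c_d\varepsilon^d$ --- and with it $\kappa(\varepsilon)$ --- does not hold. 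The correct way to salvage your argument is to integrate not over the bad cell but over an \emph{empty ball}: if $y^\star$ lies in the cell of $g$ with $\|y^\star-g\|\geq\varepsilon/2$, then $g$ is the nearest generator to $y^\star$, so $B(y^\star,\varepsilon/2)$ contains no generator at all, and every point of $B(y^\star,\varepsilon/4)\cap\cY$ is at distance at least $\varepsilon/4$ from \emph{every} generator; since $\lambda(B(y^\star,\varepsilon/4)\cap\cY)$ is bounded below by a constant depending only on $\varepsilon$ and $\cY$ (for $\cY$ a cube or any reasonably regular compact set), positivity of $\rho_x$ then yields $\distortion(\cZV)\geq\rho_{\min}(\varepsilon/4)^2\,\lambda(B(y^\star,\varepsilon/4)\cap\cY)>0$ uniformly along your subsequence, contradicting $\distortion_{\nhyp}\to 0$. (Also, Assumption \ref{hyp:lipschitz} is not among the hypotheses of the proposition and is not needed.)

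For comparison, the paper starts from the same empty-ball observation but then takes a different route: it extracts, via Bolzano--Weierstrass, a convergent subsequence of the far points, obtains a single fixed ball $B_\infty$ containing no centroid infinitely often, and contradicts the \emph{second} statement of Zador's theorem (weak convergence of the empirical centroid measure towards a density bounded below). Your route, once repaired as above, only needs the \emph{first} statement --- indeed only the elementary fact that the optimal risk tends to zero, which a regular grid already guarantees at rate $O(\nhyp^{-2/d})$ --- so it is in that sense more elementary and avoids the subsequence extraction; but the missing fatness/empty-ball step is a genuine gap in the proposal as written.
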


\begin{figure}
    \centering
    \includegraphics[width=0.4\textwidth]{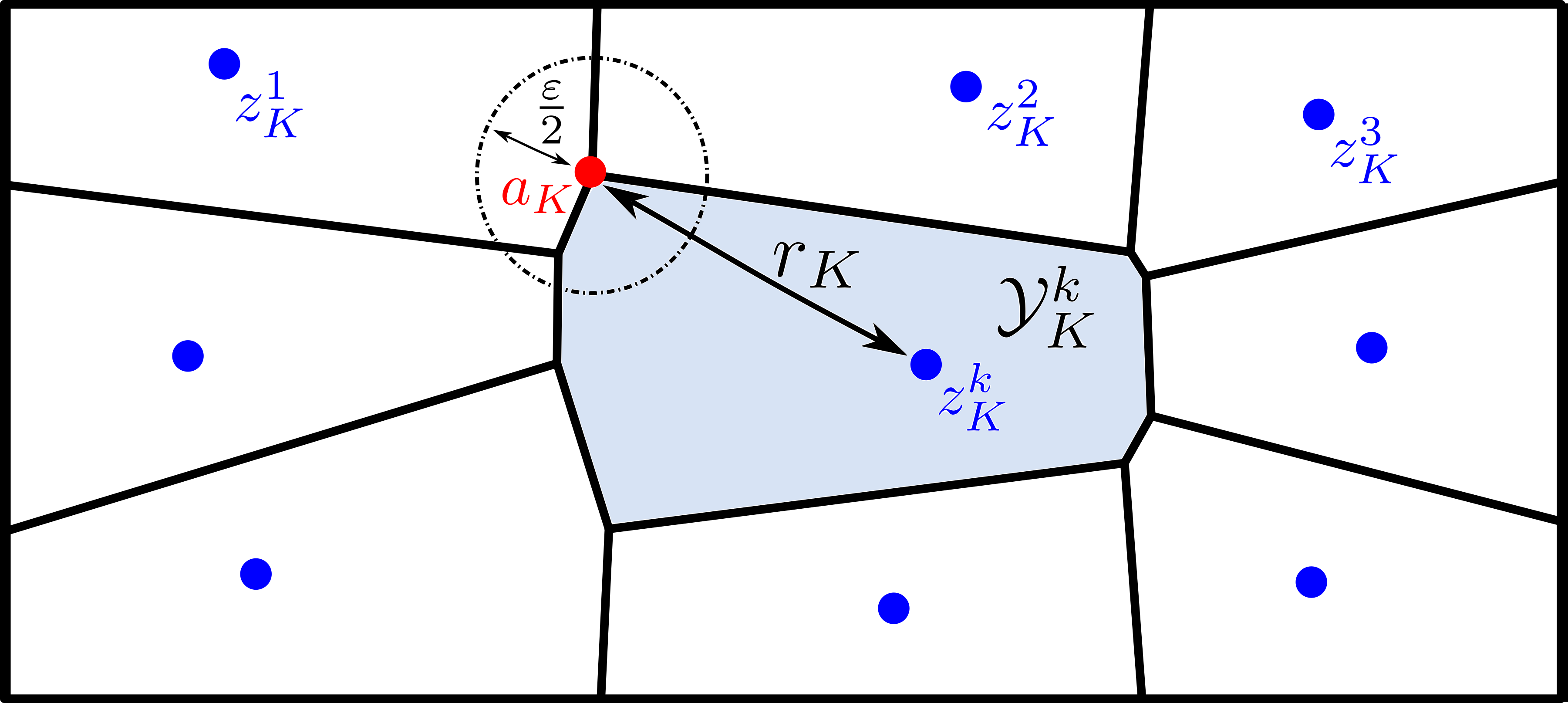}
\hspace{0.5cm}\includegraphics[width=0.2\textwidth]{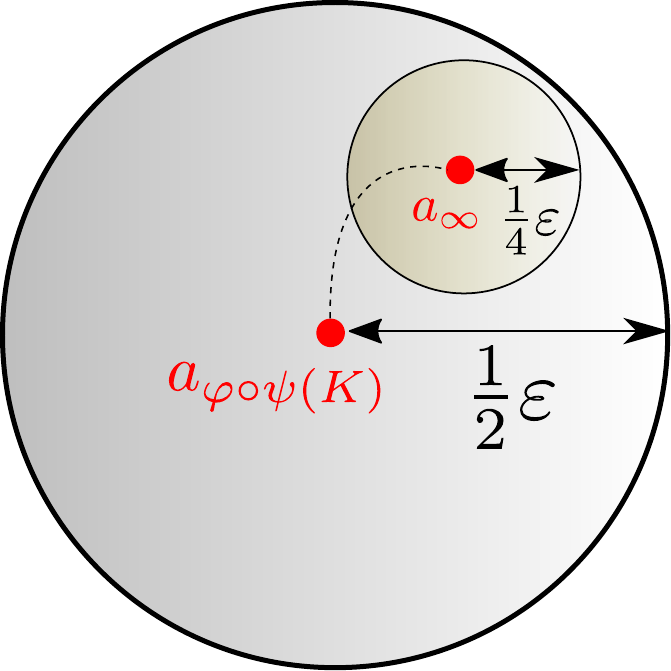}
    \caption{\textbf{Illustration of the proof of Proposition \ref{th:radius}.} On the left, we show that $a_\nhyp$ is exactly $r_\nhyp$ apart from its closest centroid. On the right, we illustrate the sequence $a_{\varphi\circ\psi(\nhyp)}$, from which we define $B_\infty$.}
    \label{fig:radius}
\end{figure}

\begin{proof}
    Suppose that the diameter $\Delta({\AYkn})$ does not vanish. Infinitely often, there are some cells $\AYkn$ that have a diameter greater than some $\varepsilon>0$. Inside these cells, there are points $y$ that are more than $\frac{\varepsilon}{2}$ apart from the closest centroid. In other words, there are balls of radius $\frac{\varepsilon}{2}$ which contain no centroids (see Figure \ref{fig:radius}). This is in contrast with the second statement of Zador's theorem, which stipulates that the centroids $\Azkn$ become dense in $\cY$ as $\nhyp$ increases, hence a contradiction. We make this argument rigorous below.

    We note the cell radius $\rkn=\max_{y\in \AYkn} \|y-\Azkn\|$, and the maximal cell radius $r_\nhyp=\max_k \rkn$. It is more convenient to work with the cell radius than with their diameter. Note that $\Delta(\AYkn)\leq 2\rkn$ (triangular inequality), so that it is enough to show $r_\nhyp \underset{\nhyp \rightarrow \infty}{\longrightarrow} 0$.

    Let's assume the opposite, and let $\varphi$ be a subsequence satisfying $\forall \nhyp\in\mathbb{N}, \; r _{\varphi(\nhyp)} \geqslant \varepsilon$, for some $\varepsilon>0$. We will build a ball $B_\infty$ which contains no centroid infinitely often (see Figure \ref{fig:radius}). 
    
    Let $a_\nhyp \in \underset{y \in \mathcal{Y}}{\arg \max }\left\{\underset{k\leq \nhyp}{\mathrm{min}}\left\|y-z^k_{\nhyp}\right\|\right\}$ be a point in the set of farthest points from their respective centroid. We can see that $B\left(a_{\varphi(\nhyp)}, \frac{\varepsilon}{2}\right)$ does not intersect any centroid at step $\varphi(\nhyp)$ (see Figure \ref{fig:radius}). Indeed, if $a_{\varphi(\nhyp)}\in \mathcal{Y}^k_{\varphi(K)}$ for some index $k$, then $z_{\varphi(\nhyp)}^k$ is its closest centroid and for all other centroid $z_{\varphi(\nhyp)}^l$, we have
    \begin{equation}
        \|a_{\varphi(\nhyp)}-z_{\varphi(\nhyp)}^l\|\geq\|a_{\varphi(\nhyp)}-z_{\varphi(\nhyp)}^k\|=r_{\varphi(\nhyp)}\geq\varepsilon>\frac{\varepsilon}{2}\,. \notag
    \end{equation}

    The sequence $(a_{\varphi(\nhyp)})_{\nhyp\in\bN}$ is bounded (Assumption \ref{hyp:compact}), so by Bolzano–Weierstrass theorem, there is a subsequence $\psi$ and a limiting point $a_\infty$ such that $a_{\varphi\circ\psi(\nhyp)} \underset{\nhyp \rightarrow \infty}{\longrightarrow} a_\infty$. If $\nhyp$ is large enough, $\|a_{\varphi\circ\psi(\nhyp)}-a_\infty\|\leq\frac{\varepsilon}{4}$ and consequently 
    \begin{equation}
        B_\infty\triangleq B\left(a_\infty, \frac{\varepsilon}{4}\right)\subset B\left(a_{\varphi\circ\psi(\nhyp)}, \frac{\varepsilon}{2}\right)\,.\notag
    \end{equation} 
    In particular, $B_\infty$ does not intersect any centroid at each step $\varphi\circ\psi(\nhyp)$, which is exactly what we wanted to achieve.
    
    The proportion of centroids contained in a set is given by the measure $\mathbb{P}_\nhyp=\frac{1}{\nhyp} \sum_{k=1}^\nhyp \delta_{\Azkn}$. The fact that $B_\infty$ does not intersect any centroid can be rewritten $$\forall \nhyp\in \varphi\circ\psi(\bN), \quad \mathbb{P}_{\nhyp}  \left(B_\infty\right)=\frac{1}{\nhyp}\left|\left\{ k \;|\; \Azkn \in B_\infty\right\}\right|=0\,.$$
    
    Assuming optimal centroid placement, we know from Zador's theorem that 
    $$\mathbb{P}_\nhyp \underset{\nhyp \rightarrow \infty}{\rightharpoonup} \frac{\rho^{d / (d+2)}}{\int_\cY \rho^{d / (d+2)}(x) \dx} \mathrm{~d} y \triangleq \rho_{\infty} \mathrm{~d} y \triangleq \mathbb{P}_{\infty}\,.$$
 
    It is clear from our hypotheses that $\inf_{y\in\cY}\rho_{\infty}(y) > 0$. We conclude with the two following contradicting observations.
    $$\forall \nhyp\in\bN \quad \mathbb{P}_{\varphi(\nhyp)}(B_\infty)=0 \quad\Rightarrow\quad \bP_\infty (B_\infty) = 0 \quad \text{(weak convergence)}$$
    $$\mathbb{P}_{\infty}(B_\infty) \geq \inf_{y\in\cY}\rho_{\infty}(y) \lambda(B_\infty)>0\,. \quad \text{(positivity)}$$
\end{proof}

We can now prove the convergence of WTA-based estimators.

\begin{proposition}\label{th:convergence}
    Under Assumption \ref{hyp:compact} (boundedness), \ref{hyp:positive} (positive density) and \ref{hyp:minimum} (optimal centroids), 
    $\pdwta$, $\puwta$, and $\ptwta$ converge weakly towards $\mathbb{P}$.
\end{proposition}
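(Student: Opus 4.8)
The plan is to follow the Portmanteau route outlined in the main text, working at a fixed input $x\in\cX$ and handling the three estimators uniformly. Write $\hP_\nhyp$ for whichever of $\pdwta,\puwta,\ptwta$ is built from $\nhyp$ hypotheses; by Proposition \ref{th:unbiased} (equivalently Proposition \ref{th:score}) all three share the \emph{cell-scoring identity} $\hP_\nhyp(\AYkn)=\gamma_k(x)=\bP_x(\AYkn)$ for every $k\in\bn$, and this is the only structural fact about them that I will use. By the Portmanteau lemma it suffices to show $\hP_\nhyp(E)\to\bP_x(E)$ for every measurable $E\subseteq\cY$ with $\bP_x(\partial E)=0$.

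First I would set up the decomposition of $E$ induced by the Voronoi tessellation $(\AYkn)_{k\in\bn}$ of the assumed-optimal hypotheses. The cells are open, pairwise disjoint, and their complement in $\cY$ is a finite union of hyperplane pieces, hence $\lambda$-null and $\bP_x$-null; moreover no estimator places mass on this skeleton ($\puwta,\ptwta$ are absolutely continuous, and the atom $z_k$ of $\pdwta$ lies in the open cell $\AYkn$). Therefore $\bP_x(E)=\sum_k\bP_x(E\cap\AYkn)$ and likewise for $\hP_\nhyp$. Split the indices into $I^{\mathrm{int}}=\{k:\AYkn\subseteq E\}$ and the rest, and set $E^{\mathrm{int}}=\bigcup_{k\in I^{\mathrm{int}}}\AYkn$ and $E^{\mathrm{ext}}=\bigcup\{\AYkn:\AYkn\cap\partial E\neq\emptyset\}$. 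A short connectedness argument — a Voronoi cell is convex, hence connected, and cannot be partitioned by the open sets $\mathring E$ and its exterior unless it meets $\partial E$ — shows that every cell meeting $E$ but not contained in $E$ meets $\partial E$. Consequently the leftover masses $\bP_x(E)-\bP_x(E^{\mathrm{int}})$ and $\hP_\nhyp(E)-\hP_\nhyp(E^{\mathrm{int}})$ are bounded by $\bP_x(E^{\mathrm{ext}})$ and $\hP_\nhyp(E^{\mathrm{ext}})$ respectively. Invoking the cell-scoring identity gives $\hP_\nhyp(E^{\mathrm{int}})=\sum_{k\in I^{\mathrm{int}}}\gamma_k(x)=\bP_x(E^{\mathrm{int}})$ and $\hP_\nhyp(E^{\mathrm{ext}})=\bP_x(E^{\mathrm{ext}})$, so $|\hP_\nhyp(E)-\bP_x(E)|\le 2\,\bP_x(E^{\mathrm{ext}})$.

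It then remains to prove $\bP_x(E^{\mathrm{ext}})\to0$. Here I use Proposition \ref{th:radius}: the maximal cell diameter $\Delta_\nhyp=\max_k\Delta(\AYkn)$ vanishes, so any cell meeting $\partial E$ is contained in the closed $\Delta_\nhyp$-neighbourhood of $\partial E$, whence $E^{\mathrm{ext}}$ is contained in that neighbourhood as well. Since $\partial E$ is closed, these neighbourhoods decrease to $\partial E$ as $\Delta_\nhyp\downarrow 0$, and $\bP_x$ being a finite measure, continuity from above yields $\bP_x(E^{\mathrm{ext}})\le\bP_x\big((\partial E)^{\Delta_\nhyp}\big)\to\bP_x(\partial E)=0$. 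Combining with the previous bound gives $\hP_\nhyp(E)\to\bP_x(E)$, and the Portmanteau lemma concludes the proof for $\pdwta,\puwta,\ptwta$ at once.

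The individual steps are elementary, so the real work is the bookkeeping: making the additive decomposition over cells genuinely exact (no estimator mass on the null skeleton), and rigorously justifying that a cell which meets $E$ yet is not contained in $E$ must meet $\partial E$. It is also worth emphasising where the argument uses only the cell-scoring property and the vanishing-diameter property — this is precisely why it transfers to the three truncated/Dirac estimators but fails for $\pkwta$, whose untruncated kernels leak mass across cell boundaries and thereby violate $\hP_\nhyp(\AYkn)=\bP_x(\AYkn)$.
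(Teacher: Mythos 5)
Your proof is correct and follows essentially the same route as the paper's: the Portmanteau lemma, the decomposition of $E$ into cells contained in $E$ versus cells meeting $\partial E$, the cell-scoring identity from Proposition \ref{th:unbiased} on the interior part, and Proposition \ref{th:radius} to confine the exterior part to a shrinking neighbourhood of $\partial E$. Your two local variations — a connectedness argument in place of the paper's explicit segment construction to show that a cell meeting $E$ but not contained in it must meet $\partial E$, and the explicit remark that no estimator charges the Lebesgue-null cell skeleton — are sound and, if anything, tidy up points the paper leaves implicit.
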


\begin{figure}
    \centering
        \includegraphics[width=0.47\textwidth]
        {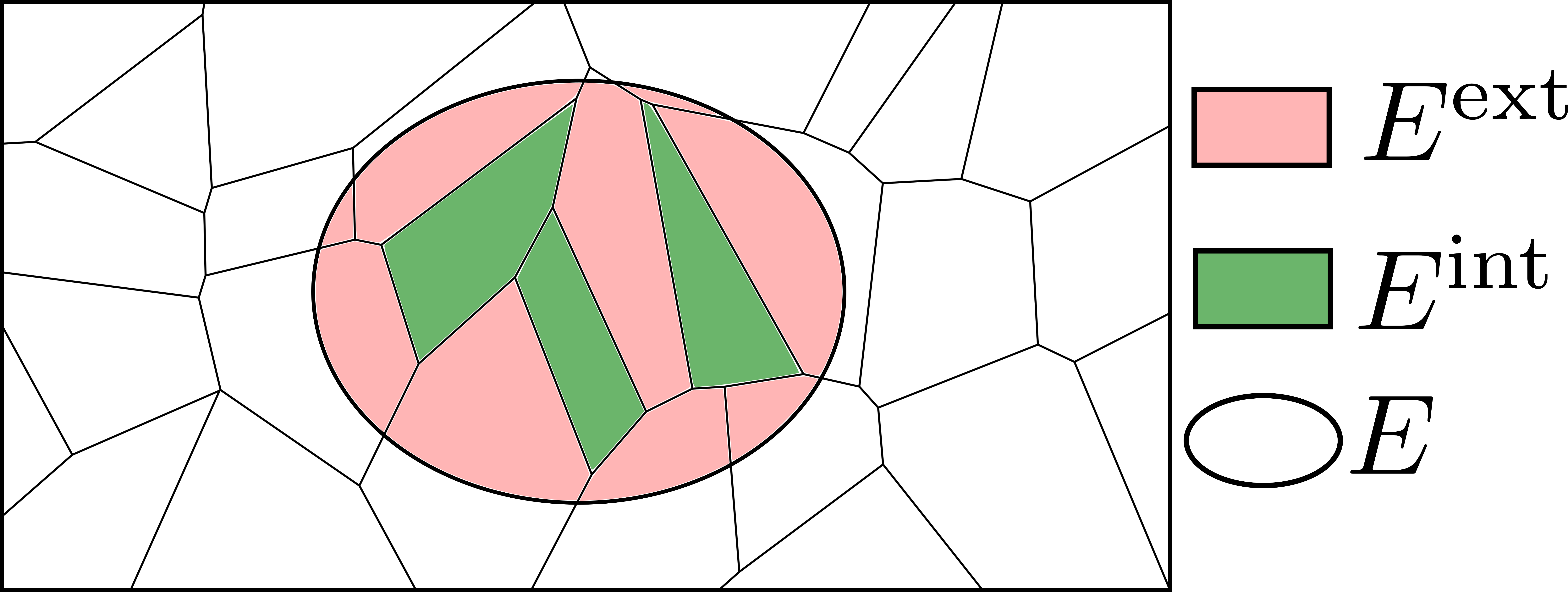}
    \caption{\textbf{Illustration of the partition of $E = E^{\text{int}} \cup E^{\text{int}}$ in the proof of Proposition \ref{th:convergence}.}}
    \label{fig:convergence}
\end{figure}

\begin{proof}
    Let $E$ denote any measurable such that  $\lambda(\partial E) = 0$. We want to show that $\mathbb{P}_\nhyp(E) \underset{\nhyp \rightarrow+\infty}{\longrightarrow} {\bP}(E)$. 
 
    If we fix $\nhyp$, the Voronoi tiling $\AYkn$ induces a partition of $E$. Accordingly, 
    we split $E$ as the disjoint union $E=E^{\text{int}}\cup E^{\text{ext}}$, where $E^{\text{int}}$ denotes the Voronoi cells included in $E$, and $E^{\text{int}}$ the Voronoi cells intersecting its border $\partial E$ (see Figure \ref{fig:convergence}). 
    Using the property that the Voronoi cell diameter $\Delta(\AYkn)$ vanishes uniformly as the number of hypotheses $\nhyp$ increases to infinity, we deduce that $E^{\text{ext}}$ is concentrated on the border $\partial E$ and is therefore negligible. The proof is concluded by observing that $\mathbb{P}_\nhyp$ and $\mathbb{P}$ coincide on $E^{\text{int}}$.
    We give the technical details below. 
    
    Let $$I^{\text{int}}_\nhyp=\left\{k \in \bn \;\mid\; \AYkn \cap E \neq \varnothing,\,\AYkn \backslash E=\varnothing\right\},$$ $$I^{\text{ext}}_\nhyp=\left\{k \in \bn \;\mid\; \AYkn \cap E \neq \varnothing,\, \AYkn \backslash E \neq \varnothing\right\},$$ $E^{\text{int}}_\nhyp = \cup_{k \in I^{\text{int}}} (\AYkn \cap E)$ and $E^{\text{ext}}_\nhyp = \cup_{k \in I^{\text{ext}}} (\AYkn \cap E)$.
    Clearly $E=E^{\text{int}}_\nhyp\cup E^{\text{ext}}_\nhyp$ and $E^{\text{int}}_\nhyp\cap E^{\text{ext}}_\nhyp=\varnothing$.

    Recall that the considered estimators have an interesting property: the estimated density of a Voronoi cell is equal to its true probability mass (Proposition \ref{th:unbiased}). Therefore, $\bP_\nhyp$ and $\bP$ coincide on $E^{\text{int}}_\nhyp$. More precisely,
    \begin{equation}
        \mathbb{P}_\nhyp(E^{\text{int}}_\nhyp)=\sum_{k \in I^{\text {int}}} \bP_\nhyp(\AYkn) =\sum_{k \in I^{\text {int}}} \mathbb{P}\left(\AYkn\right)=\mathbb{P}\left(E^{\text {int}}_\nhyp\right)\,.
        \label{eq:E_int}
    \end{equation}
    We then refer to the maximum cell diameter by $\varepsilon_\nhyp = \max_k \Delta(\AYkn)$, and $\varepsilon_\nhyp^+ = \sup_{k \geq \nhyp} \varepsilon_k$. We know from Proposition \ref{th:radius} that $\varepsilon_\nhyp \underset{\nhyp \rightarrow+\infty}{\longrightarrow} 0$.
    
    We now show that $E^{\text{ext}}_\nhyp \subset \partial E+B(0, \varepsilon_\nhyp^+).$

    Let $k\in I^{\text {ext}}_\nhyp$. We want to show that $\AYkn$ intersects $\partial E$ (see Figure \ref{fig:convergence}). By definition, $\AYkn$ is partially inside and outside $E$. Therefore, we can choose $x\in \AYkn \cap E$ and $y\in \AYkn \backslash E$. By convexity of the Voronoi cells, we can see that its border $\partial E$ will intersect the segment $[x,y]$ on a single point $y^*\in\AYkn$. Formally, let 
    $$t^*= \sup \{t \in[0,1] \; \mid \; (1-t) x+t y \in E\}\,, \quad\text{and}\quad y^*=\left(1-t^*\right) x+t^* y\,.$$ 
    We have $y^*\in\bar{E}$ because there is a sequence converging toward $y^*$ from inside $E$ by definition of $\sup$. Moreover, $y^*\notin E^{\mathrm{o}}$, because there would be $t>t^*$ satisfying the constraint, by definition of open sets. Finally $y^*\in \AYkn$ by convexity of $\AYkn$. Therefore $y^* \in \partial E \cap \AYkn$.
    
    By definition of the maximum diameter $\varepsilon_\nhyp^+$, $$y^*\in \AYkn \quad\Rightarrow\quad \AYkn \subset B(y^*, \varepsilon_\nhyp^+) = y^* + B(0, \varepsilon_\nhyp^+) \subset \partial E+B(0, \varepsilon_\nhyp^+)\,.$$

    Therefore, $E^{\text{ext}}_\nhyp \subset \cup_{k \in I^{\text{ext}}} \AYkn \subset \partial E+B(0, \varepsilon_\nhyp^+)$. We conclude by observing that $\AYkn$ are disjoint, and that $\mathbb{P}_\nhyp(\AYkn\cap E) \leq \mathbb{P}_\nhyp(\AYkn)=\mathbb{P}(\AYkn)$ for the considered estimators (Proposition \ref{th:unbiased}).
    \begin{equation}
        \mathbb{P}_\nhyp(E^{\text{ext}}_\nhyp)=\mathbb{P}_\nhyp(\cup_{k\in I^{\text{ext}}} (\AYkn \cap E))\leq \mathbb{P}(\cup_{k\in I^{\text{ext}}} (\AYkn))\leq\mathbb{P}(\partial E+B(0, \varepsilon_\nhyp^+))\underset{\nhyp \rightarrow+\infty}{\longrightarrow}\mathbb{P}(\partial E)=0\,. \label{eq:pn_ext}
    \end{equation}
    Likewise, 
    \begin{equation}
        \mathbb{P}(E^{\text{ext}}_\nhyp)\leq \mathbb{P}(\partial E+B(0, \varepsilon_\nhyp^+)) \underset{\nhyp \rightarrow+\infty}{\longrightarrow}\mathbb{P}(\partial E)=0\,. \label{eq:p_ext}
    \end{equation}
    In conclusion,
    $$|\mathbb{P}_\nhyp(E)-\mathbb{P}(E)| \leq \underbrace{|\mathbb{P}_\nhyp(E^{\text{int}}_\nhyp)-\mathbb{P}(E^{\text{int}}_\nhyp)|}_{=\;0 \; \text{by Eq. (\ref{eq:E_int})}}+\underbrace{|\mathbb{P}_\nhyp(E^{\text{ext}}_\nhyp)-\mathbb{P}(E^{\text{ext}}_\nhyp)|}_{\rightarrow \;0 \; \text{by (\ref{eq:pn_ext}) and Eq. (\ref{eq:p_ext})}}\underset{\nhyp \rightarrow+\infty}{\rightarrow}0\,.$$
\end{proof}

Note that Eq. (\ref{eq:E_int}) does not necessarily hold for $\pkwta$. Therefore, this proof cannot apply to this estimator. However, it applies to a large family of estimators $\hP$. Indeed, it is independent of the choice of kernel, as long as $\hP$ is an unbiased estimator of the Voronoi cell probability mass.

\subsection{Geometrical properties}
\label{secapp:geometrical_properties}

We will study the geometrical properties of WTA-based models through the lens of quantization risk. Our analysis will focus on the estimators $\pdwta$, $\ptwta$, and $\pkwta$, as these correspond to the cases presented in the main paper. Specifically, we will concentrate on the positions of the hypotheses, an aspect for which the same analysis applies to all three estimators. Indeed, both $\ptwta$ and $\pkwta$ retain the hypotheses positions following WTA training. For the rest of this section, we drop the dependency on $x$ without loss of generality, to lighten the notational burden. 

\subsubsection{Quantization risk}

We are interested in measuring the advantage of an adaptative grid of WTA. To do so, we look at the quantization risk of WTA and Histogram.

\begin{proposition}\label{th:risk_comparison}
    Under Assumption \ref{hyp:minimum}, we have for each $\nhyp \in\bN$

    $$ \sum_{k=1}^{\nhyp} \int_{\AYk} \|\Azk-y\|_2^2 \rho(y) \dy \leq \sum_{k=1}^{\nhyp} \int_{\mathcal{G}^k} \|g_k-y\|_2^2 \rho(y) \dy\,.$$
\end{proposition}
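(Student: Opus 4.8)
The plan is to read the left-hand side as the \emph{optimal} quantization risk among all sets of at most $\nhyp$ points, and the right-hand side as the cost incurred by one specific such set — the grid centers $\{g_k\}$ — so that the inequality becomes a tautology of optimality.

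First I would unpack Assumption \ref{hyp:minimum}. The centroid objective $\cL_{\mathrm{centroid}}$ is an integral over $\cX$ of the input-dependent risk $\cZ_x\mapsto\int_\cY\min_{z\in\cZ_x}\ell(z,y)\rho_x(y)\dy$, and the sets $\cZ_x$ can be chosen independently across $x$ (the usual sufficient-expressivity remark of Section \ref{sec:geometrical-properties}); hence attaining the global minimum forces, for (almost) every $x$, the hypotheses $\{\Azk\}$ to minimize $\cZ\mapsto\int_\cY\min_{z\in\cZ}\|z-y\|_2^2\rho(y)\dy$ over all $\cZ\subset\cY$ with $|\cZ|\le\nhyp$ (dropping the $x$-dependency as announced). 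In Zador's notation this says $\{\Azk\}$ realizes the optimal quantization risk $\distortion_\nhyp(\bP)$. Next, since $\AYk$ is by construction the open Voronoi cell of $\Azk$, the cells cover $\cY$ up to the Lebesgue-null (hence $\bP$-null) set of points equidistant to two generators, and $\min_{j\in\bn}\|z_j-y\|_2^2=\|z_k-y\|_2^2$ on $\AYk$, so
\[
\sum_{k=1}^{\nhyp}\int_{\AYk}\|\Azk-y\|_2^2\,\rho(y)\,\dy=\int_\cY\min_{j\in\bn}\|z_j-y\|_2^2\,\rho(y)\,\dy=\distortion_\nhyp(\bP)\,,
\]
which is exactly the left-hand side.

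Then I would bound this infimum by plugging in the grid centers $\{g_1,\dots,g_\nhyp\}$ as a feasible competitor, $\distortion_\nhyp(\bP)\le\int_\cY\min_{j\in\bn}\|g_j-y\|_2^2\rho(y)\,\dy$, and finally split the right-hand integral over the cubes $\mathcal{G}^k$, which partition $\cY$ (up to a null set): on each $\mathcal{G}^k$ one has $\min_{j\in\bn}\|g_j-y\|_2^2\le\|g_k-y\|_2^2$, so
\[
\int_\cY\min_{j\in\bn}\|g_j-y\|_2^2\,\rho(y)\,\dy=\sum_{k=1}^{\nhyp}\int_{\mathcal{G}^k}\min_{j\in\bn}\|g_j-y\|_2^2\,\rho(y)\,\dy\le\sum_{k=1}^{\nhyp}\int_{\mathcal{G}^k}\|g_k-y\|_2^2\,\rho(y)\,\dy\,,
\]
which is the right-hand side. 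Chaining the three displays gives the claim. (The last inequality is in fact an equality, since the cubes of a regular grid are the Euclidean Voronoi cells of their centers up to boundaries, but only the displayed inequality is needed.)

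The single delicate point is the reduction in the first step — turning Assumption \ref{hyp:minimum}, stated as one global minimization over maps $x\mapsto\cZ_x$, into pointwise optimality of $\{\Azk\}$ for each $x$ — which relies on the measurable-selection / expressivity argument already invoked in Section \ref{sec:geometrical-properties}. Everything else is routine bookkeeping about partitions and null boundary sets, so I do not expect a genuine obstacle there.
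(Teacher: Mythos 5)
Your proof is correct and follows the same idea as the paper's (one-line) proof, namely that the grid is just one particular Voronoi tessellation, so the optimal quantizer guaranteed by Assumption~\ref{hyp:minimum} can only do at least as well; you have simply spelled out the bookkeeping (pointwise optimality, null boundary sets, cubes as Voronoi cells of their centers) that the paper leaves implicit.
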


\begin{proof}
    The histogram is a particular case of Voronoi tesselation.
\end{proof}

\subsubsection{Asymptotic quantization risk}

We know the asymptotic quantization risk of WTA from the Zador theorem.

\begin{proposition}\label{th:wta_risk}
    Under Assumptions \ref{hyp:compact} (boundedness) and \ref{hyp:minimum} (optimality), the asymptotic quantization risk of the estimator $\mathbb{P}_x^{\;\mathrm{D-WTA}}$ has the following asymptotic evolution as $K \rightarrow \infty$
    $$\distortion_\nhyp^{\mathrm{D-WTA}}=J_d \; \left(\int_\cY \rho^{d /(d+2)} \dy\right)^{(d+2) / d}\frac{1}{\nhyp^{2/d}} + o\left(\frac{1}{\nhyp^{2/d}}\right)\,.$$ 
\end{proposition}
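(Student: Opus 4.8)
The plan is to derive the asymptotic quantization risk of $\pdwta$ directly from Zador's theorem, using Assumption~\ref{hyp:minimum} to identify the risk of the estimator with the \emph{optimal} quantization risk $\distortion_\nhyp(\bP_x)$. First I would observe that, by definition, the quantization risk associated with the hypotheses $\cZV = \{\Azk\}_{k\in\bn}$ is exactly $\distortion(\cZV) = \int_\cY \min_{z\in\cZV}\|z-y\|^2 \rho_x(y)\dy$, which coincides with the cell-wise sum $\sum_{k=1}^\nhyp \int_{\AYk}\|\Azk-y\|^2\rho_x(y)\dy$ appearing in $\cL_{\text{centroid}}$ because each $y$ is (almost everywhere) assigned to the cell of its nearest generator. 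Assumption~\ref{hyp:minimum} states that the WTA algorithm has converged to a global minimum of the centroid objective, so for each fixed $x$ the set $\cZV$ achieves the infimum over all $\nhyp$-point configurations, i.e. $\distortion(\cZV) = \distortion_\nhyp(\bP_x)$.

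Next I would invoke Zador's theorem as stated in the excerpt: under Assumption~\ref{hyp:compact} (compact $\cY$) and the fact that $\bP_x$ is Lebesgue-dominated with density $\rho_x$, we have
$$\lim_{\nhyp\to\infty}\nhyp^{2/d}\distortion_\nhyp(\bP_x) = J_d\left(\int_\cY \rho_x^{d/(d+2)}\dy\right)^{(d+2)/d}\,.$$
Rewriting this limit statement as an asymptotic expansion gives exactly
$$\distortion_\nhyp^{\mathrm{D-WTA}} = J_d\left(\int_\cY \rho_x^{d/(d+2)}\dy\right)^{(d+2)/d}\frac{1}{\nhyp^{2/d}} + o\!\left(\frac{1}{\nhyp^{2/d}}\right)\,,$$
which is the claim. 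The passage from the limit to the $o(\cdot)$ form is purely a matter of unpacking the definition of convergence: if $\nhyp^{2/d}\distortion_\nhyp \to c$, then $\distortion_\nhyp = c\,\nhyp^{-2/d} + (\nhyp^{2/d}\distortion_\nhyp - c)\nhyp^{-2/d}$ and the bracketed term tends to $0$.

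I would also need to briefly check the integrability of $\rho_x^{d/(d+2)}$ so that the right-hand side is finite and well-defined: since $\cY$ is compact (finite Lebesgue measure) and $d/(d+2)<1$, Jensen's or Hölder's inequality bounds $\int_\cY \rho_x^{d/(d+2)}\dy \le \mathrm{vol}(\cY)^{2/(d+2)}\big(\int_\cY\rho_x\dy\big)^{d/(d+2)} = \mathrm{vol}(\cY)^{2/(d+2)} < \infty$, so no extra hypothesis beyond compactness is required here. The only genuine subtlety — and the step I expect to be the main point of care rather than a real obstacle — is the identification in the first paragraph: one must be careful that Assumption~\ref{hyp:minimum} applies to each conditional slice $x$ (the objective is an integral over $\cX$, and a global minimizer of the integral need not minimize every slice unless one argues slice-wise, as is done implicitly throughout the paper when reducing to the input-dependent risk). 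Granting that reduction, which the paper has already used repeatedly, the proof is a direct corollary of Zador's theorem and is correspondingly short.
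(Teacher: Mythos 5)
Your proof is correct and follows exactly the route the paper takes: the paper's own proof is the one-liner ``it is a corollary of the Zador theorem, whose conditions are met given our hypotheses,'' and you have simply spelled out the details it leaves implicit (the identification of $\distortion(\cZV)$ with the optimal risk via Assumption~\ref{hyp:minimum}, the limit-to-$o(\cdot)$ unpacking, the integrability of $\rho_x^{d/(d+2)}$, and the slice-wise reduction). Nothing to add.
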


\begin{proof}
    It is a corollary of the Zador theorem, whose conditions are met given our hypotheses.
\end{proof}

It is also possible to compute the risk for the grid estimator $\ph$.

\begin{proposition}\label{th:histogram_risk}
    Under Assumptions \ref{hyp:compact} (boundedness), \ref{hyp:positive} (positivity), \ref{hyp:lipschitz} (lipschitz), \ref{hyp:minimum} (optimality), and assuming moreover that $\cY=[0,c]^d$ , the quantization risk of the estimator $\ph$ has the following asymptotic evolution
    $$\distortion_\nhyp^{\mathrm{H}} =\frac{d}{12}\frac{c^2}{\nhyp^{2/d}} + \mathcal{O}\left(\frac{1}{\nhyp^{3/d}}\right)\,.$$
\end{proposition}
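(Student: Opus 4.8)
The plan is to compute the quantization risk of the regular-grid estimator $\ph$ by decomposing it cell by cell and showing that, up to higher-order terms, each of the $\nhyp$ cubes contributes the same amount, namely the second moment of the uniform distribution on a cube of the appropriate side length. Concretely, write $m = \nhyp^{1/d}$ for the number of subdivisions along each axis (assumed, without loss of generality, to be an integer), so that $\cY = [0,c]^d$ is partitioned into $\nhyp$ cubes $\cG_k$ of side $c/m$ centered at the grid points $g_k$. Then
\begin{equation}
\distortion_\nhyp^{\mathrm{H}} = \sum_{k=1}^{\nhyp} \int_{\cG_k} \|g_k - y\|_2^2 \, \rho(y) \, \dy\,.
\end{equation}

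The key step is to approximate $\rho$ by a constant on each small cube. Fix a cube $\cG_k$ of side $a = c/m$; by Assumption \ref{hyp:lipschitz}, $|\rho(y) - \rho(g_k)| \le L \|y - g_k\|_2 \le L \sqrt{d}\, a / 2$ for $y \in \cG_k$. Hence
\begin{equation}
\int_{\cG_k} \|g_k - y\|_2^2 \, \rho(y) \, \dy = \rho(g_k) \int_{\cG_k} \|g_k - y\|_2^2 \, \dy + R_k\,, \qquad |R_k| \le \tfrac{L\sqrt d\, a}{2} \int_{\cG_k} \|g_k - y\|_2^2 \, \dy\,.
\end{equation}
A direct computation gives $\int_{\cG_k} \|g_k - y\|_2^2 \, \dy = d \cdot a^{d} \cdot \tfrac{a^2}{12} = \tfrac{d}{12} a^{d+2}$ (the integral of one coordinate's squared deviation over a cube of side $a$ is $a^{d-1}\cdot \int_{-a/2}^{a/2} t^2 \, dt = a^{d-1} a^3/12$, times $d$ coordinates). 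Therefore the main term sums to $\tfrac{d}{12} a^{d+2} \sum_k \rho(g_k) = \tfrac{d}{12} a^{2} \sum_k \rho(g_k) a^d$, and $\sum_k \rho(g_k) a^d$ is a Riemann sum for $\int_\cY \rho = 1$; controlling its error against $1$ again by the Lipschitz bound gives $\sum_k \rho(g_k) a^d = 1 + \mathcal{O}(a)$. Since $a = c \nhyp^{-1/d}$, the main term is $\tfrac{d}{12} c^2 \nhyp^{-2/d}(1 + \mathcal{O}(\nhyp^{-1/d}))$. For the remainder, $\sum_k |R_k| \le \tfrac{L\sqrt d\, a}{2} \cdot \tfrac{d}{12} a^{d+2} \cdot \nhyp = \tfrac{L\sqrt d}{2}\cdot \tfrac{d}{12} a^{3} \cdot (a^d \nhyp) = \mathcal{O}(a^3) = \mathcal{O}(\nhyp^{-3/d})$, using $a^d \nhyp = c^d$. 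Collecting the two error contributions yields $\distortion_\nhyp^{\mathrm{H}} = \tfrac{d}{12}\tfrac{c^2}{\nhyp^{2/d}} + \mathcal{O}(\nhyp^{-3/d})$, as claimed.

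The main obstacle is bookkeeping the two separate sources of error so that both land at order $\nhyp^{-3/d}$: the pointwise replacement of $\rho$ by $\rho(g_k)$ inside each integral (which is $\mathcal{O}(a)$ relative to a term of size $a^{d+2}$, summed over $\nhyp = (c/a)^d$ cells, giving $\mathcal{O}(a^3)$), and the replacement of the Riemann sum $\sum_k \rho(g_k)a^d$ by its limit $1$ (also $\mathcal{O}(a)$ by Lipschitzness, but this multiplies a term already of order $a^2$, so it only contributes at order $a^3$ as well — one must check it does not inflate to order $a^2$). A secondary, cosmetic issue is the integrality of $\nhyp^{1/d}$; one handles non-perfect-$d$-th-powers either by restricting to such $\nhyp$ (implicit in "regular grid") or by noting the nearest achievable grid changes $a$ by a factor $1 + \mathcal{O}(\nhyp^{-1/d})$, which is absorbed into the stated error term. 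Assumptions \ref{hyp:compact} and \ref{hyp:minimum} are not essential to this particular computation — boundedness is built into $\cY = [0,c]^d$ and optimality of the WTA centroids is irrelevant for the fixed Histogram grid — but they are listed for uniformity with the companion Proposition \ref{th:wta_risk}; positivity (Assumption \ref{hyp:positive}) is likewise not needed here, only the Lipschitz regularity.
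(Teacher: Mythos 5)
Your proposal is correct and follows essentially the same route as the paper's proof: exact computation of the second moment $\tfrac{d}{12}a^{d+2}$ of a cube of side $a$, a Lipschitz-based freezing of $\rho$ on each cell contributing $\mathcal{O}(a^3)$ in total, and a Riemann-sum/Lipschitz argument showing $\sum_k \rho(g_k)a^d = 1 + \mathcal{O}(a)$, whose error is absorbed at order $a^3$ since it multiplies a term of order $a^2$. The only cosmetic difference is that you bound $|\rho(y)-\rho(g_k)|$ by the cell diameter directly, whereas the paper pulls the extra factor $\|y-g_k\|$ into the integral and bounds the resulting third moment by enclosing the cube in a ball; both yield the same $\mathcal{O}(\nhyp^{-3/d})$ remainder.
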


\begin{proof}
Consider a $d$-dimensional grid of $\nhyp = M^d$ points. Admitting for now that the risk over a $d$-dimensional cube of size $\frac{c}{M}$ is equal to $\frac{d}{12}\left(\frac{c}{M}\right)^{d+2}$, we can rewrite the risk as follows. 
$$\distortion_{M^d}^{\mathrm{H}}=\sum_{k=1}^{M^d} \int_{\AYk} \rho(y) \|y-\Azkn\|^2 \dy \approx\sum_{k=1}^{M^d} \rho(\Azkn)\int_{\AYk} \|y-\Azkn\|^2 \dy = \frac{d}{12}\frac{c^2}{M^2}\left(\frac{c^d}{M^d} \sum_{k=1}^{M^d} \rho(\Azk)\right) \approx 
\frac{d}{12}\frac{c^2}{M^2}\,.$$
The first approximation says that $\rho$ is essentially constant over a cube cell $V_k$ if $\nhyp$ is large enough. The second approximation is a Monte Carlo integration. If we substitute $M$ by $\nhyp$, we obtain the announced result.

We now justify these two approximations formally.
We first compute the risk over a $d$-dimensional cube of side $a > 0$ centered in $0$ (considering a uniform distribution for $\rho$).
$$ \int_{[-\frac{a}{2}, \frac{a}{2}]^d} \left(x_1^2+\dots+x_d^2\right) \mathrm{d} x_1 \dots \mathrm{d} x_d = d \int x_1^2 \mathrm{d} x_1 \dots \mathrm{d} x_d = d a^{d-1} \int_{-a / 2}^{a / 2} x_1^2 \mathrm{d} x_1 = \frac{d}{12} a^{d+2}\,.$$

Now we compute an upper bound of the $\alpha$-distortion (defined below) over the same cube. This upper bound relies on enclosing the cube in the smallest ball containing it (which has for diameter the largest diagonal of the cube  $\|(a,\ldots,a)-(0,\ldots,0)\|=\sqrt{d}a$).
\begin{align*}
    \int_{[-\frac{a}{2}, \frac{a}{2}]^d}\left(\sqrt{x_1^2+\cdots+x_d^2}\right)^\alpha \mathrm{d} x_1\cdots \mathrm{d} x_d &\leqslant \int_{B\left(0, \sqrt{d} a/2\right)} \left(\sqrt{x_1^2+\cdots+x_d^2}\right)^\alpha \mathrm{d} x_1\cdots \mathrm{d} x_d \\ 
    &= \int_0^{\sqrt{d} \frac{a}{2}} r^\alpha \frac{2\pi^{d/2}}{\Gamma\left(\frac{d}{2}\right)} r^{d-1} \mathrm{d} r \\
    &=\frac{2 \pi^{d/2}}{\Gamma\left(\frac{d}{2}\right)}  \int_0^{\sqrt{d} \frac{a}{2}} r^{d-1+\alpha} \mathrm{d} r \\
    &=\mathcal{O}\left(a^{d+\alpha}\right),
\end{align*}
Where $\Gamma: x \in \mathbb{R}_{+}^{*} \mapsto \int_{0}^{\infty} t^{x-1} e^{-t} \mathrm{d}t$ is the gamma function. Equipped with this result, we can prove the first approximation using the assumption that $\rho$ is $L$-lipschitz. 
\begin{align*}
    \left|\distortion_\nhyp^{\mathrm{H}} - \sum_{k=1}^\nhyp \rho(z^k_{\nhyp})\int_{\mathcal{Y}^k_{\nhyp}} \|y-z^k_{\nhyp}\|^{2} \dy \right| &\leqslant \sum_{k=1}^\nhyp \int_{\mathcal{Y}^k_{\nhyp}}|\rho(y)-\rho(z^k_{\nhyp})|\;\|y-z^k_{\nhyp} \|^2 \dy \\
    &\leqslant L \sum_{k=1}^\nhyp \int_{\mathcal{Y}^k_{\nhyp}}\|y-z^k_{\nhyp}\|^3 \dy \\
    &= L \sum_{k=1}^{\nhyp} \mathcal{O}\left(\frac{1}{M^{d+3}}\right)\\
    &= \mathcal{O}\left(\frac{1}{M^3}\right)\,.
\end{align*}
The second approximation is similar.
\begin{align*}
    \left|\frac{1}{M^d}\sum_{k=1}^\nhyp \rho(z^k_{K}) - \int_{\cY} \rho(y) \dy \right| 
    &= \left|\sum_{k=1}^\nhyp \rho(z^k_{K})\int_{\mathcal{Y}^k_{K}} 1 \dy - \sum_{k=1}^N \int_{\mathcal{Y}^k_{K}} \rho(y) \dy \right| \\
    &\leqslant \sum_{k=1}^\nhyp \int_{\mathcal{Y}^k_{K}} |\rho(y)-\rho(z^k_{K})| \\
    &\leqslant L \sum_{k=1}^\nhyp \int_{\mathcal{Y}^k_{K}} \|y-z^k_{K}\| \mathrm{d} y\\
    &= \sum_{k=1}^{\nhyp} \mathcal{O}\left(\frac{1}{M^{d+1}}\right) \\
    &= \mathcal{O}\left(\frac{1}{M}\right)\,.
\end{align*}
Combining both, we obtain the desired result (note that $\int_{\mathcal{Y}} \rho(y) \dy = 1$).
$$\distortion_{\nhyp}^{\mathrm{H}} = \frac{d}{12}\frac{c^2}{M^2} + \mathcal{O}\left(\frac{1}{M^3}\right)\,.$$
We can replace $\nhyp$ in this formula. 
$$\distortion_\nhyp^{\mathrm{H}} =\frac{d}{12}\frac{c^2}{\nhyp^{2/d}} + \mathcal{O}\left(\frac{1}{\nhyp^{3/d}}\right)\,.$$
\end{proof}

While the above proof was carried out with $\cY=[0,c]^d$ with a volume of $\lambda(\cY)=c^d$, one can show the more general expression:
\begin{align}
    \distortion_\nhyp^{\mathrm{H}} =\frac{d}{12}\frac{\lambda(\cY)^{2/d}}{\nhyp^{2/d}} + \mathcal{O}\left(\frac{1}{\nhyp^{3/d}}\right)\,. \label{eq:general_histogram_risk}
\end{align}

Note that in the first order, the asymptotic quantization risk of the histogram does not depend on the distribution of the probability mass $\rho$ but only on the width of its support.

Both WTA and Histogram have the same asymptotic risk $\mathcal{O}\left(\nhyp^{-\frac{2}{d}}\right)$. However, the leading constant is always larger for Histogram (for all densities $\rho$) for large $\nhyp$. This means that WTA is strictly better than Histogram even in the asymptotic regime (Proposition \ref{th:leading_constant}). 

\begin{proposition}\label{th:leading_constant}
    Under the assumptions of Proposition \ref{th:histogram_risk}, the leading constant of $\mathcal{R}_\nhyp^{\mathrm{H}}$ is greater than that of $\mathcal{R}_\nhyp^{\mathrm{D-WTA}}$ for dimension $d\in\{1,2\}$ and for large $d$.  
\end{proposition}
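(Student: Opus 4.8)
The plan is to reduce the claim to two inequalities: one purely analytic (an application of Hölder's inequality) and one about the dimensional constant $J_d$ from Zador's theorem. By Proposition \ref{th:wta_risk} the leading constant of $\distortion_\nhyp^{\mathrm{D-WTA}}$ is $J_d\,\bigl(\int_\cY \rho^{d/(d+2)}\,\dy\bigr)^{(d+2)/d}$, and by Proposition \ref{th:histogram_risk} together with the general expression \eqref{eq:general_histogram_risk} the leading constant of $\distortion_\nhyp^{\mathrm{H}}$ is $\frac{d}{12}\,\lambda(\cY)^{2/d}$. So the statement amounts to
\[
J_d\,\Bigl(\int_\cY \rho^{d/(d+2)}\,\dy\Bigr)^{(d+2)/d}\;\le\;\frac{d}{12}\,\lambda(\cY)^{2/d},
\]
with strict inequality in the relevant cases.

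First I would handle the density-dependent factor. Applying Hölder's inequality to the product $\rho^{d/(d+2)}\cdot 1$ with conjugate exponents $p=\frac{d+2}{d}$ and $q=\frac{d+2}{2}$, and using $\int_\cY\rho\,\dy=1$, gives $\int_\cY \rho^{d/(d+2)}\,\dy \le \bigl(\int_\cY\rho\,\dy\bigr)^{d/(d+2)}\lambda(\cY)^{2/(d+2)} = \lambda(\cY)^{2/(d+2)}$. Raising to the power $\frac{d+2}{d}$ yields $\bigl(\int_\cY \rho^{d/(d+2)}\,\dy\bigr)^{(d+2)/d}\le \lambda(\cY)^{2/d}$, with equality if and only if $\rho$ is $\lambda$-almost-everywhere constant on $\cY$. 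Hence it suffices to prove $J_d\le \frac{d}{12}$.

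Next I would dispatch the remaining inequality $J_d\le \frac{d}{12}$ case by case using the known values of $J_d$. For $d=1$, $J_1=\frac{1}{12}=\frac{d}{12}$, so this factor is an equality, and combined with the Hölder step we get a strict inequality $\distortion_\nhyp^{\mathrm{D-WTA}}<\distortion_\nhyp^{\mathrm{H}}$ whenever $\rho$ is non-uniform (the equality case $\rho$ constant being exactly the situation where the histogram is already asymptotically optimal). For $d=2$, $J_2=\frac{5}{18\sqrt{3}}\approx 0.1604<\frac16=\frac{2}{12}$, so the inequality is strict for every $\rho$. For large $d$, the asymptotic estimate $J_d\sim \frac{d}{2\pi e}$ together with $\frac{1}{2\pi e}\approx 0.0585<\frac{1}{12}$ gives $J_d<\frac{d}{12}$ for all sufficiently large $d$, hence $\distortion_\nhyp^{\mathrm{D-WTA}}<\distortion_\nhyp^{\mathrm{H}}$ in that regime as well.

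The main obstacle is precisely the dimensional constant: $J_d$ has no closed form for intermediate dimensions $3\le d<\infty$, so the argument genuinely covers only $d\in\{1,2\}$ and the asymptotic regime, which is why the statement is restricted accordingly. The inequality $J_d\le\frac{d}{12}$ is conjectured to hold for all $d$ (it expresses that no quantizer is asymptotically worse than the cubic lattice), but settling the intermediate dimensions would require resolving open questions in optimal quantization theory. Everything else — the Hölder step and the quoted values and asymptotics of $J_d$ — is routine.
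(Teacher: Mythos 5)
Your proof is correct and follows essentially the same route as the paper: your Hölder step is equivalent to the paper's Jensen inequality applied to $x\mapsto x^{(d+2)/d}$ against the normalized uniform measure on $\cY$, and the case analysis on $J_d$ ($J_1=\tfrac{1}{12}$, $J_2=\tfrac{5}{18\sqrt{3}}<\tfrac{2}{12}$, $J_d\sim\tfrac{d}{2\pi e}$) is identical. Your explicit handling of the $d=1$ equality case (strictness only for non-uniform $\rho$) is in fact slightly more careful than the paper's, which glosses over it.
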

\begin{proof}
    We show that the leading constant of $\mathcal{R}_\nhyp^{\mathrm{H}}$ is greater than that of $\mathcal{R}_\nhyp^{\mathrm{WTA}}$ for large $d$. Using Eq. (\ref{eq:general_histogram_risk}), we can write the following.
    \begin{align*}
        \frac{d}{12}\lambda(\cY)^{2/d} &> \frac{d}{2\pi e} \left(\int_\cY\dy\right)^{2/d} \\
        &= \frac{d}{2\pi e} \left(\int_\cY\dy\right)^{(2+d)/d} \frac{\int_{\mathcal{Y}}\rho(y) \dy}{\int_\cY \dy} & \text{(since $\int_\cY \rho(y)\dy=1$)}\\
        &= \frac{d}{2\pi e} \left(\int_\cY\dy\right)^{(2+d)/d} \frac{\int_{\mathcal{Y}} \left(\rho(y)^{d/(d+2)}\right)^{(d+2)/d} \dy}{\int_\cY \dy} & \text{(since $(x^r)^{1/r}=x$)}\\
        &\geq \frac{d}{2\pi e} \left(\int_\cY\dy\right)^{(2+d)/d} \left(\frac{\int_{\mathcal{Y}}\rho(y)^{d/(d+2)} \dy}{\int_\cY \dy}\right)^{(d+2)/d} & \text{(Jensen inequality applied to $x\mapsto x^{(d+2)/2}$)} \\
        &= \frac{d}{2\pi e} \left(\int_{\mathcal{Y}}\rho(y)^{d/(d+2)} \dy\right)^{(d+2)/d}\,.
    \end{align*}
    We recognize the leading constant of the Zador theorem on the last term (using the asymptotic equivalent $J_d\sim \frac{d}{2\pi e}$). The argument is the same for $d=1$ (in which case $J_d=\frac{1}{12}$), and $d=2$ (in which case $J_d=\frac{5}{18\sqrt{3}}<\frac{2}{12}$).
\end{proof}

\subsection{Discussion of the assumptions}
\label{sec:discussion}
We discuss in this section the validity of our assumptions.

\textbf{Boundedness assumption \ref{hyp:compact}}. The boundedness assumption is a necessary assumption to ensure that the volume of each cell is well-defined, such as in the case of a piece-wise uniform distribution. This is a customary assumption in theoretical analysis of K-Means algorithm \cite{emelianenko2008nondegeneracy}, as well as in centroidal Voronoi tessellations \cite{du1999centroidal}. Moreover, this assumption is valid in all realistic settings of pragmatic interest.

\textbf{Positivity assumption \ref{hyp:positive}}. The positivity assumption asserts that the data PDF is nonzero everywhere. This technical assumption is also reasonable for practical applications. For instance, if we assume that $\mathcal{X} \times \mathcal{Y}$ is bounded, modifying the distribution to $\tilde{\rho}(x,y) = (1+\varepsilon)^{-1} (\rho(x,y) + \varepsilon)$ with a sufficiently small $\varepsilon > 0$ does not alter the experimental results, and ensures that $\tilde{\rho} > 0$ everywhere.

\textbf{Lipschitz assumption \ref{hyp:lipschitz}}. The Lipschitz assumption of the data distribution was applied in Proposition \ref{th:histogram_risk} and \ref{th:leading_constant}. Note that no further assumption was made on the value of $L > 0$.

\textbf{Optimality assumption \ref{hyp:minimum}}. The assumption of quantization optimality is strong. There are two main arguments for its validity:
\begin{itemize}
\item The convergence of WTA has not been directly studied in the literature. However, we can see WTA as a \textit{conditional} gradient descent version of K-means \cite{pages2003optimal}. The convergence of K-means has been extensively studied \cite{sabin1986global, emelianenko2008nondegeneracy, bourne2015centroidal}, and some results could, as further work, be ported to WTA \cite{pages2003optimal}. Even though few results concern the convergence towards a global minimum of the quantization objective, theoretical evidence points toward the surprising effectiveness of this approach in most cases \cite{blomer2016theoretical}.
\item Our justification for Assumption \ref{hyp:minimum} is also empirical. We confirmed experimentally that the empirical quantization risk closely follows the theoretical optimal quantization risk given by Proposition \ref{th:wta_histogram_risk}. Indeed, we can see in Figure \ref{fig:quantitative} that the solid and dotted green curves of the quantization risk converge in the asymptotic regime, which is when the theoretical formula is valid. 
\end{itemize}
This gives us confidence that Assumption \ref{hyp:minimum} holds in practice, possibly in an approximate manner. Further research could explore in greater detail how the initialization of the Winner-Takes-All (WTA) training scheme influences the quality of the optimal solution. This investigation could build upon the findings of studies such as those by \citet{arthur2007k} and \citet{aggarwal2009adaptive}. 

\section{Experimental details}\label{sec:experimental_details}
\subsection{Synthetic data experiments}
\label{secapp:synthetic_data_experiments}
\subsubsection{Baselines.} 
\label{sec:syn-baselines}
Two standard conditional density estimation baselines were considered in our experiments: Mixture Density Networks (MDN)~\cite{bishop1994mixture} and the Histogram \cite{imani2018improving} mentioned in Section \ref{sec:quantization}.
We took care to assess all methods fairly, by using the same backbone architecture and number of hypotheses for all of them.
These baselines differ however from WTA-based methods on two main axes, output format and training loss, as explained hereafter:

\begin{itemize}
    \item \emph{MDN} uses a multi-head neural network to directly predict the parameters of a mixture of Gaussians. It is trained using the negative log-likelihood loss:
    \begin{equation}
        \mathcal{L}^{\mathrm{MDN}}(\theta) = - \log \hat{\rho}_{\theta}(\by \mid \bx)\,,
        \label{eq:mdn}
    \end{equation}
    where $\hat{\rho}_{\theta}(\by \mid \bx)$  is a mixture of Gaussians with parameters $\{\pi_k(\bx),\mu_k(\bx),\sigma_k(\bx)\}$, respectively denoting mixture weights, means and standard deviations. We considered isotropic Gaussians here. We enhanced the original MDN training scheme by incorporating the findings from \citet{brando2017mixture} to improve the numerical stability of the training. In particular, we employed the \emph{Log-sum-exp} trick \cite{blanchard2019accurate}, and modified the neural network's output to predict $(\mu,\log \sigma^2)$ rather than $(\mu,\sigma)$.
    \item \emph{\hist} is purely non-parametric, unlike MDN.
    It uses a multi-head neural network to predict scores $\gk(x) \in [0, 1]$ for each point in a fixed regular grid, defining the histogram bins.
    It is trained through backpropagation using the following loss: 
    \begin{equation}
        \mathcal{L}^{\mathrm{H}}(\theta) = - \log\gamma_{\theta}^{k^{\star}}(\bx) - \sum_{k \neq k^{\star}} \log(1 - \gk(\bx))\,,
    \end{equation}
    where $k^{\star} = \mathrm{argmin}_k \ell(\bg_k,\by)$ is the bin index in which the target falls. Note that the Histogram baseline can be seen as a specific instance of hypothesis-scores architecture used in the WTA setup, where each hypothesis is static and represents the central point of a histogram bin. In the context of the synthetic data experiments of Section \ref{sec:experiments}, the output space is $\mathcal{Y} = [-1,1]^2$ and we employed a regular grid defined by row $i \in [\![1,N_{\mathrm{rows}}]\!]$ and by column $j \in [\![1,N_{\mathrm{cols}}]\!]$. For each $x \in \cX$, if $k$ is the hypothesis index associated to bin $(i,j)$, we therefore have: $$f_{\theta}^{k}(x) = \left(-1 + \left(i - \frac{1}{2}\right) \frac{2}{N_{\mathrm{rows}}}, -1 + \left(j - \frac{1}{2}\right) \frac{2}{N_{\mathrm{cols}}}\right)\,,$$ with $K = N_{\mathrm{rows}} N_{\mathrm{cols}}$. In our comparisons described in Section \ref{sec:experiments}, we used $N_{\mathrm{rows}} = N_{\mathrm{cols}}$ in our experiments, except when $K = 20$ where we set $N_{\mathrm{rows}} = 5$ and $N_{\mathrm{cols}} = 4$.
\end{itemize}

\subsubsection{Architectures and training details} 

\textbf{Architectures.} In each training setup with synthetic data, we employed a two-hidden-layer multilayer perceptron. Each layer contained 256 hidden units and used ReLU activation functions. In the final layer, we utilized tanh activations for the hypotheses and sigmoid activations for the scores, where applicable. The last layer is duplicated depending on the number of outputs to produce: three times the number of modes in the case of MDN (mixture coefficients, means, and variances), twice the number of hypotheses in the WTA setup (scores and hypotheses are predicted), and the product of the number of rows and columns for the 2-dimensional histogram. Note that if the normalization of the scores is not inherently implemented in the architecture (\textit{e.g.,} with a softmax activation), they must be normalized when computing metrics by considering $\frac{\gamma_k(x)}{\sum_k \gamma_k(x)}$.

\textbf{Training details.} The Adam optimizer~\cite{kingma2014adam} was used with a constant learning rate of $0.001$ in each setup. The models were trained until convergence of the training loss, using early stopping to select the checkpoint for which the validation loss was the lowest. Each of the synthetic datasets consists of $100,000$  training points, and $25,000$ validation points. Each of the models was trained for $100$ epochs, with a batch size of $1024$.  

In each setup that involves WTA training, we used the compound loss $\mathcal{L}^{\mathrm{WTA}}+\beta \mathcal{L}^{\mathrm{scoring}}$ with $\beta = 1$. Note that we observed that the WTA training scheme leads to a fast convergence of the predictions $f_\theta(x)$, while the scoring heads $\gamma_\theta(x)$ are slightly slower to train. Indeed, each $\gamma^k_\theta(x)$ solves a binary classification task that evolves as the position of $f_\theta(x)$ is updated during training. Therefore, this objective is untractable at the beginning of the training, because the prediction $f_\theta(x)$ moves too quickly, and it only becomes feasible near the end of training, once the prediction has stabilized. This warrants further research on the scheduling of $\beta$ during training. 

\subsubsection{Metrics} 

For assessing the quality of the predictions, we used the following metrics for each input $x \in \cX$.
\begin{itemize}
    \item The Negative Log-Likelihood (NLL), which assesses the probabilistic quality of the predictions
    \begin{equation}
        \mathrm{NLL}(\hpx,\rho_x) = - \int_\cY\log \hpyx\pyx \mathrm{d}y\,,
        \label{eq:nll}
    \end{equation} 
    where $\hpyx$ is the estimated density, which is assumed to integrate to 1.
    \item The Earth Mover's Distance (EMD): 
    \begin{equation}
        \mathrm{EMD}(\hpx,\rho_x) = \min _{\psi \in \Psi} \sum_{\by_s \sim \px} \sum_{\hat{\by}_k \sim \hat{\rho}_x} \psi_{s, k} \lVert \by_s - \hat{\by}_k \rVert\,,  
        \label{eq:emd}
    \end{equation}
    where $\psi \in \Psi$ is a transport plan belonging to the set of valid transport plans \cite{kantorovich1942translocation}. 
    \item The Quantization Error, as defined in \citet{pages2003optimal}:
\begin{align}
    \distortion(\mathcal{Z}) = \int_\cY \min_{z\in\cZ} \|y-z\|^2 \;\rho_x(\by) \mathrm{d}\by\,.
    \label{eq:quantization}
\end{align} 
\end{itemize}

Note that equations \eqref{eq:nll}, \eqref{eq:emd} and \eqref{eq:quantization} assume that the target distribution $\rho_x$ is available. However, this is not usually the case for real-world tasks, for which typically one sample $y \sim \rho_x$ is available for each input $x$. In this case, the EMD loses its interpretation as a distance between distributions. Nevertheless, the NLL and Distortion errors can still be computed on an average basis over the test set, with the NLL and Quantization Errors defined as 
\begin{align}
\mathrm{NLL} &= -\frac{1}{N} \sum_{i=1}^{N} \log \hat{\rho}_{x_i}(y_i)\,,\label{eq:nll_empirical}\\
\distortion &= \frac{1}{N} \sum_{i=1}^{N} \min_{z \in \mathcal{Z}_{i}} \|y_i - z\|^2\,,\label{eq:distortion_empirical}
\end{align}
where $N$ is the number of pairs $(x_i,y_i)$ in the test set, and $\mathcal{Z}_{i} = \left\{f_{\theta}^l(x_i)\right\}_{l \in \bn}$.  

\subsubsection{Evaluation details}

The results of Figure \ref{fig:quantitative} were computed according to the following details. Note that each evaluation was performed with $N = 2,000$ test points. The results are averaged over three random seeds (see Figure \ref{fig:stds_results}). 

\textbf{NLL computation.} The NLL was computed following \eqref{eq:nll_empirical}, with $N$ the number of points on each test set, and $\hat{\rho}_{x}$ is for instance given in \eqref{eq:voronoi-wta} in Voronoi-WTA. The Volume 
\begin{equation}
V(f_{\theta}^k(x),K_h) = \int_{\mathcal{Y}_k(x)} K_h(f_{\theta}^k(x),\tilde{y})\mathrm{d}\tilde{y}\,,
\label{app:volume}
\end{equation}
was computed with the normalized Gaussian kernel: 
\begin{equation}K_h(f_{\theta}^k(x),y) = \frac{1}{(2 \pi)^{\frac{d}{2}} h^{d}} \mathrm{exp} \left( \dfrac{\lVert y - f_{\theta}^k(x) \rVert^{2}}{2 h^2} \right)\,,
\label{eq:normalized_gaussian_kernel}
\end{equation}
where $d = \mathrm{dim}(\mathcal{Y})$. In particular, $d = 2$ for the synthetic data experiments and $d = 1$ for the UCI datasets experiments.

In practice, the Volume \eqref{app:volume} can be computed efficiently, rewriting each $\tilde{\by}$ in the integral as
\begin{equation} \label{eq:change-var}
    \tilde{\by} = \fk(\bx) + t \bs\,, 
\end{equation}
where $\bs \in \mathbb{S}^{d-1}$ is a direction on the unit sphere and ${t \in [0,l_{\fk(\bx)}(\bs)]}$ a scalar step. Here
$l_{\fk(\bx)}(\bs)$ is the so-called \emph{directional radius}, defined as the maximum $u \in \mathbb{R}_+$ such that $\fk(\bx) + u \bs \in \Ykx$ if it exists, and $l_{\fk(\bx)}(\bs) = \infty$ otherwise. In practice, we computed each directional radius in $\mathcal{O}(\nhyp d)$ operations by following Equations 7 and 8 from \citet{polianskii2022voronoi}, leveraging the structure of Voronoi tesselation as detailed in \citet{polianskii2019voronoi}. The case of unbounded Voronoi cells did not appear here since the output is restricted to the square $[-1,1]^2$.   
As explained by \citet[Sec.3.1]{polianskii2022voronoi}, \eqref{eq:change-var} allows writing $V(\fk(\bx), K_h)$ as a double-integral in spherical coordinates:
$$V(f_{\theta}^k(x),y)=\int_{s \in \mathbb{S}^{d-1}} \int_{t \in \left[0, l_{f_{\theta}^k(x)}(s)\right]} K(\frac{t}{h} s) t^{d-1} \mathrm{~d} t \mathrm{~d}s\,,$$
where: $K(x) \triangleq \mathrm{exp}(-\frac{\|x\|^{2}}{2})$.

As the inner integral has a closed-form solution for the Gaussian kernel \cite{polianskii2022voronoi}, a Monte Carlo approximation of the outer integral allows us to estimate $V(\fk(\bx), K_h)$:
\begin{equation}
V(f_{\theta}^k(x),y) \simeq \frac{2 \pi^{\frac{d}{2}}}{N^{\prime} \Gamma\left(\frac{n}{2}\right)} \sum_{j = 1}^{N^{\prime}} \int_{\left[0, l_{f_{\theta}^k(x)}(s_j)\right]} K(\frac{t}{h} s_j) t^{d-1} \mathrm{~d} t
\simeq \frac{1}{N^{\prime}} \sum_{j=1}^{N^{\prime}} (2\pi h^{2})^{\frac{d}{2}} \bar{\gamma}\left(\frac{d}{2}, \frac{l_{f_{\theta}^k(x)}(s_j)^{2}}{2 h^{2}}\right)\,,
\label{eqapp:mc_approx}
\end{equation}

where $\Gamma$ is the gamma function, $N^{\prime}$ is the number of points $\{s_j\}$ sampled on the unit sphere $\mathbb{S}^{d-1}$ (or \textit{versors}) and $\bar{\gamma}(a, z) \triangleq \frac{1}{\Gamma(a)} \int_0^z t^{a-1} e^{-t} \mathrm{~d} t$ is the incomplete gamma function. When $d=2$, \eqref{eqapp:mc_approx} simplifies to:  
$$V(f_{\theta}^k(x),y) \simeq \frac{1}{N^{\prime}} \sum_{j=1}^{N^{\prime}} 2 \pi h^{2} \left(1-\mathrm{exp}\left(- \frac{l_{f_{\theta}^k(x)}(s_j)^{2}}{2 h^{2}}\right)\right)\,.$$
In practice, we used $N^{\prime} = 40$ for our experiments. 

\textbf{EMD computation.} The EMD was computed as $$\mathrm{EMD} = \frac{1}{N} \sum_{i=1}^{N} \mathrm{EMD}(\hat{\rho}_{x_i},\rho_{x_i})\,,$$
where $N = 2,000$ and $\mathrm{EMD}(\hat{\rho}_{x},\rho_{x})$ is defined in \eqref{eq:emd}. Computing the EMD for each input requires sampling from both the predicted and target distributions, with the assumption that sampling from the target distribution is feasible. For the predicted distribution, a rejection sampling procedure was implemented. This involved initially selecting a cell $k$ based on the distribution of scores $\left\{ \gamma_{\theta}^l(x) \right\}$.
Samples were then repeatedly drawn from the distribution $K_h(f_{\theta}^k(x),\cdot)$ until a sample falling within the cell $\mathcal{Y}_{\theta}^k(x)$ was obtained. 
In our experiments, we matched empirical measures by taking $500$ samples each from both the predicted and target distributions.
Additionally, to reduce computational complexity, especially when dealing with a large number of hypotheses, one can employ the hit-and-run sampling technique as outlined in Alg. 2 by \citet{polianskii2022voronoi}.   

\textbf{Theoretical curves.} The theoretical curves of Figure \ref{fig:quantitative} were computed according to \eqref{eq:zador_risk} and \eqref{eq:histogram_risk}. The calculation of \eqref{eq:zador_risk} was performed through Monte-Carlo integration using $10,000$ samples across the output space for each input $x$ (averaging over $10$ inputs here), for $K \in \{9, 16, 25, 49, 100\}$. 
The target density $\rho$ is explicitly defined in the \textit{Changing Damier} and \textit{Single Gaussian} datasets, while in the \textit{Rotating Moons} dataset, it was approximated through Kernel Density Estimation (with Gaussian kernel and bandwidth of $0.2$). 

\subsubsection{Additional results}
\label{app:additional_results_synth}

\begin{figure}
\centering
    \includegraphics[width=0.95\linewidth]{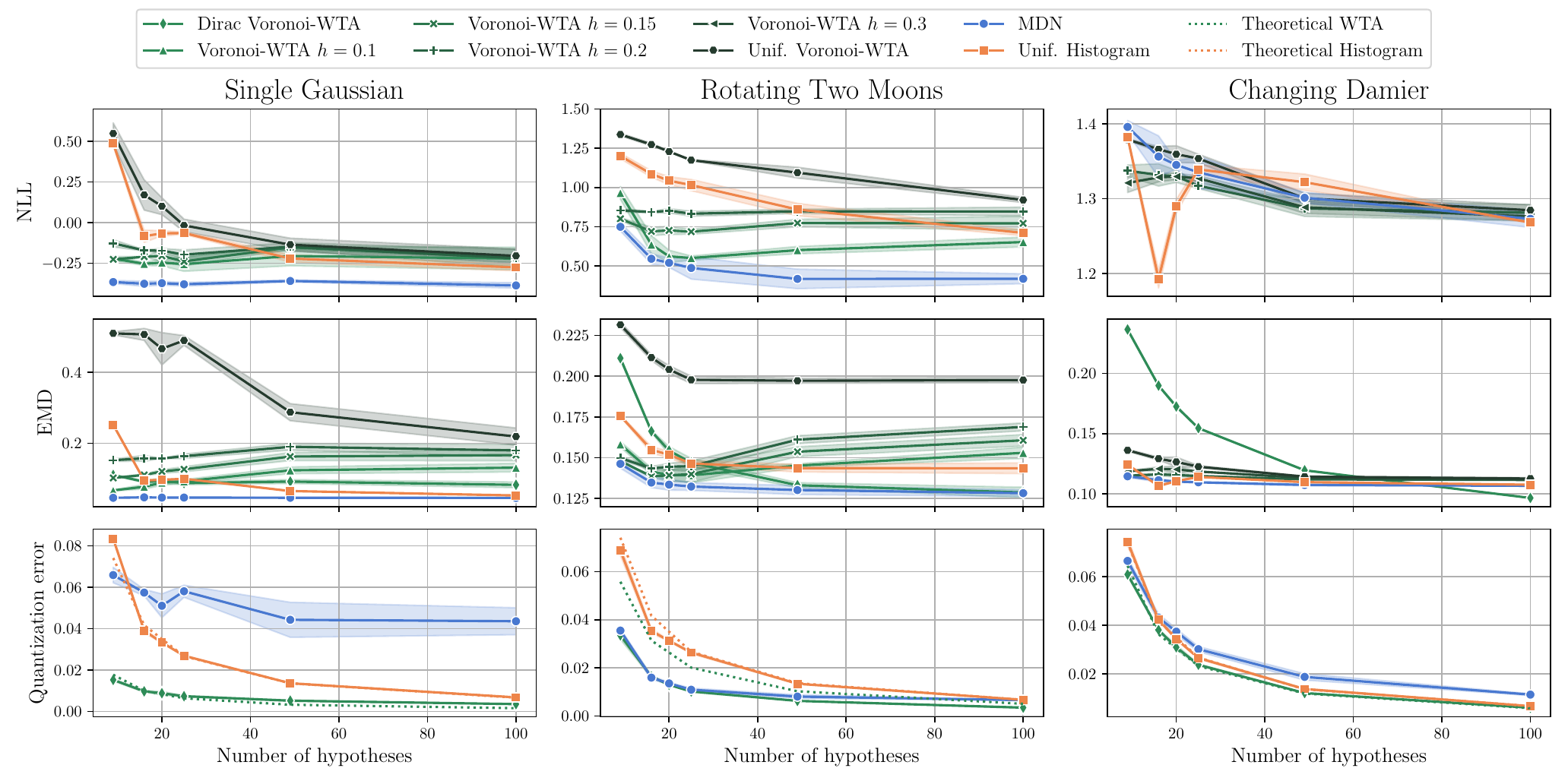}
    \caption{\textbf{Standard deviations across three random seeds in the results of Figure \ref{fig:quantitative}.} To simplify the presentation, only the following Voronoi-WTA curves are displayed: $h = 0.1, 0.15, 0.2$ for the Single Gaussian and Rotating Moons datasets, and $h = 0.2, 0.3$ for the Changing Damier dataset. Additionally, the y-axis of the EMD plot for the Changing Damier was cropped in Figure \ref{fig:quantitative} to enhance the scale readability.
    }
    \label{fig:stds_results}
\end{figure}

\begin{figure}
    \centering
\includegraphics[width=0.92\linewidth]{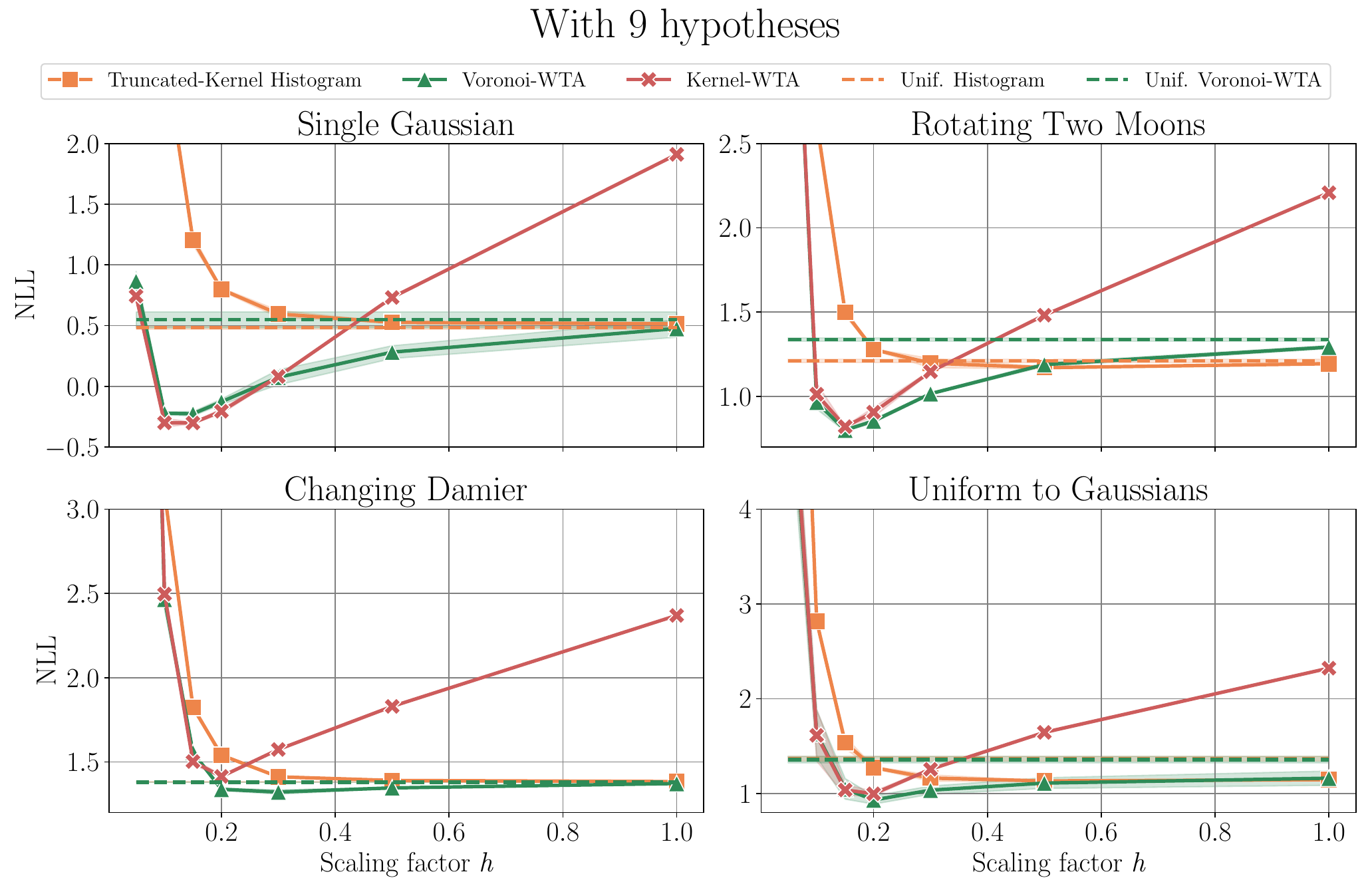}
    \caption{\textbf{NLL \textit{vs. h} with 9 hypotheses.}}
    \label{fig:nll_vs_h_9_hyps_multiple_plots}
\end{figure}

\begin{figure}
    \centering
\includegraphics[width=0.92\linewidth]{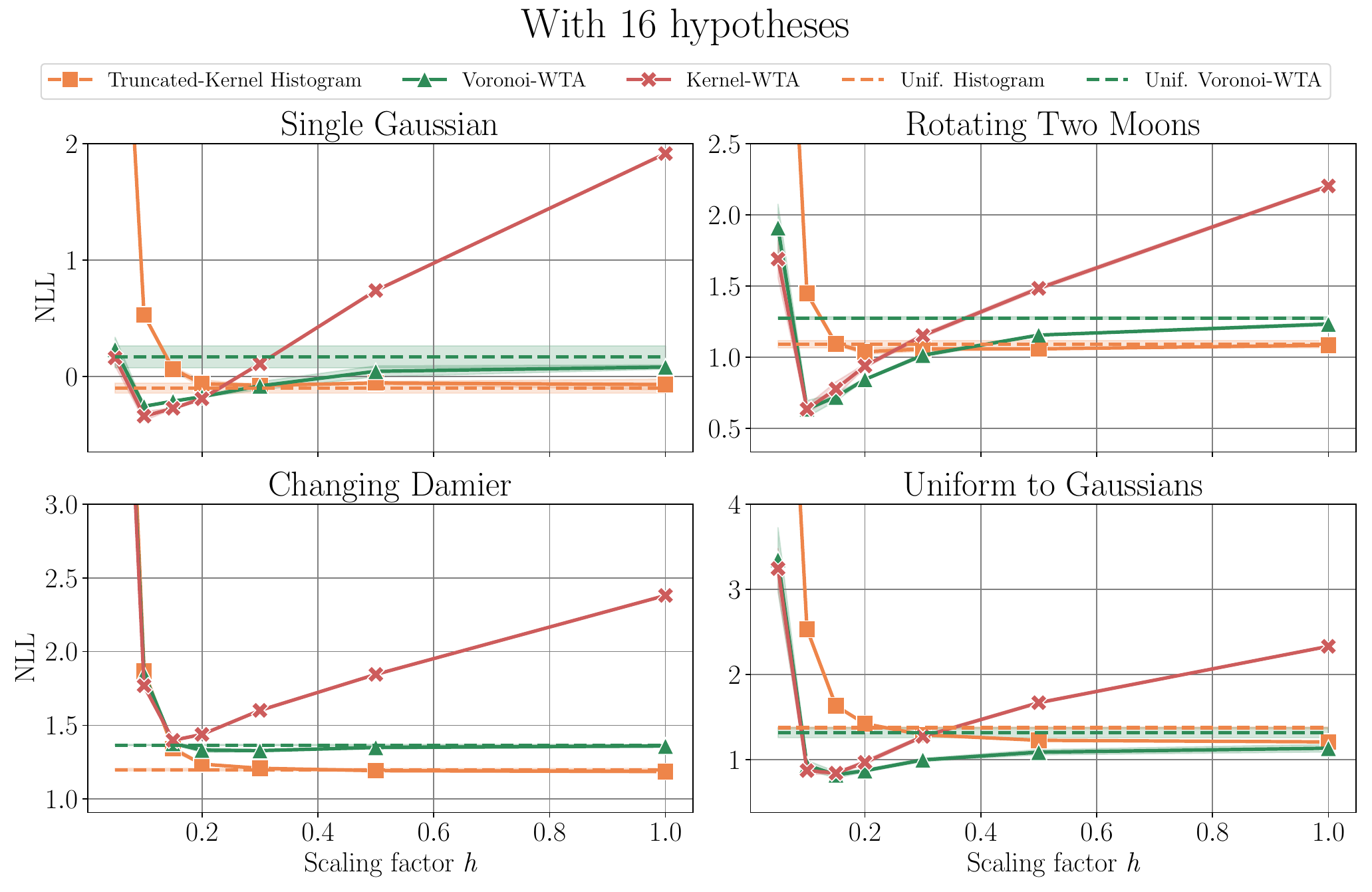}
    \caption{\textbf{NLL \textit{vs. h} with 16 hypotheses.}}
    \label{fig:nll_vs_h_16_hyps_multiple_plots}
\end{figure}

\begin{figure}
    \centering
\includegraphics[width=0.92\linewidth]{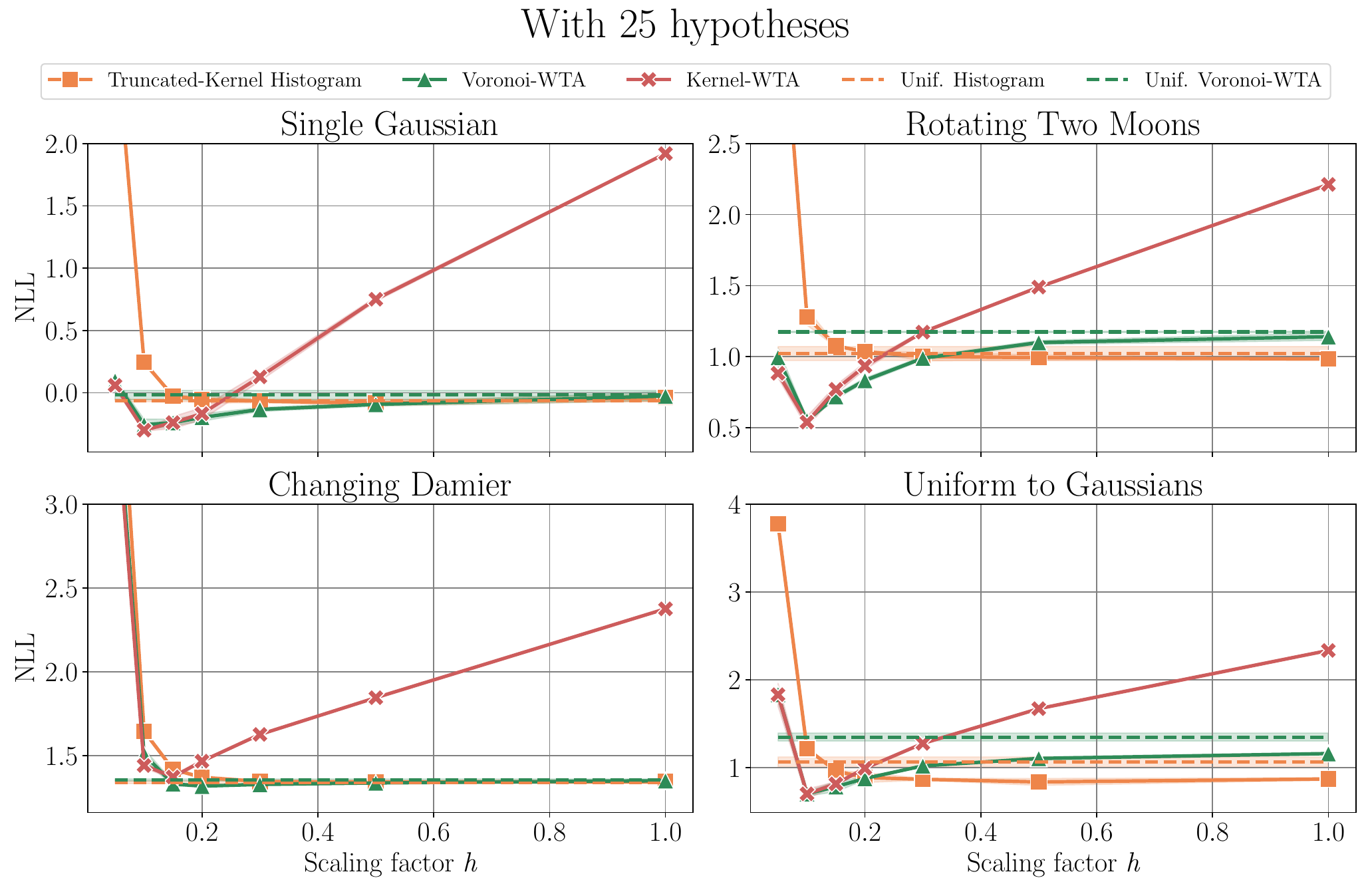}
    \caption{\textbf{NLL \textit{vs. h} with 25 hypotheses.}}
    \label{fig:nll_vs_h_25_hyps_multiple_plots}
\end{figure}

\begin{figure}
    \centering
\includegraphics[width=0.92\linewidth]{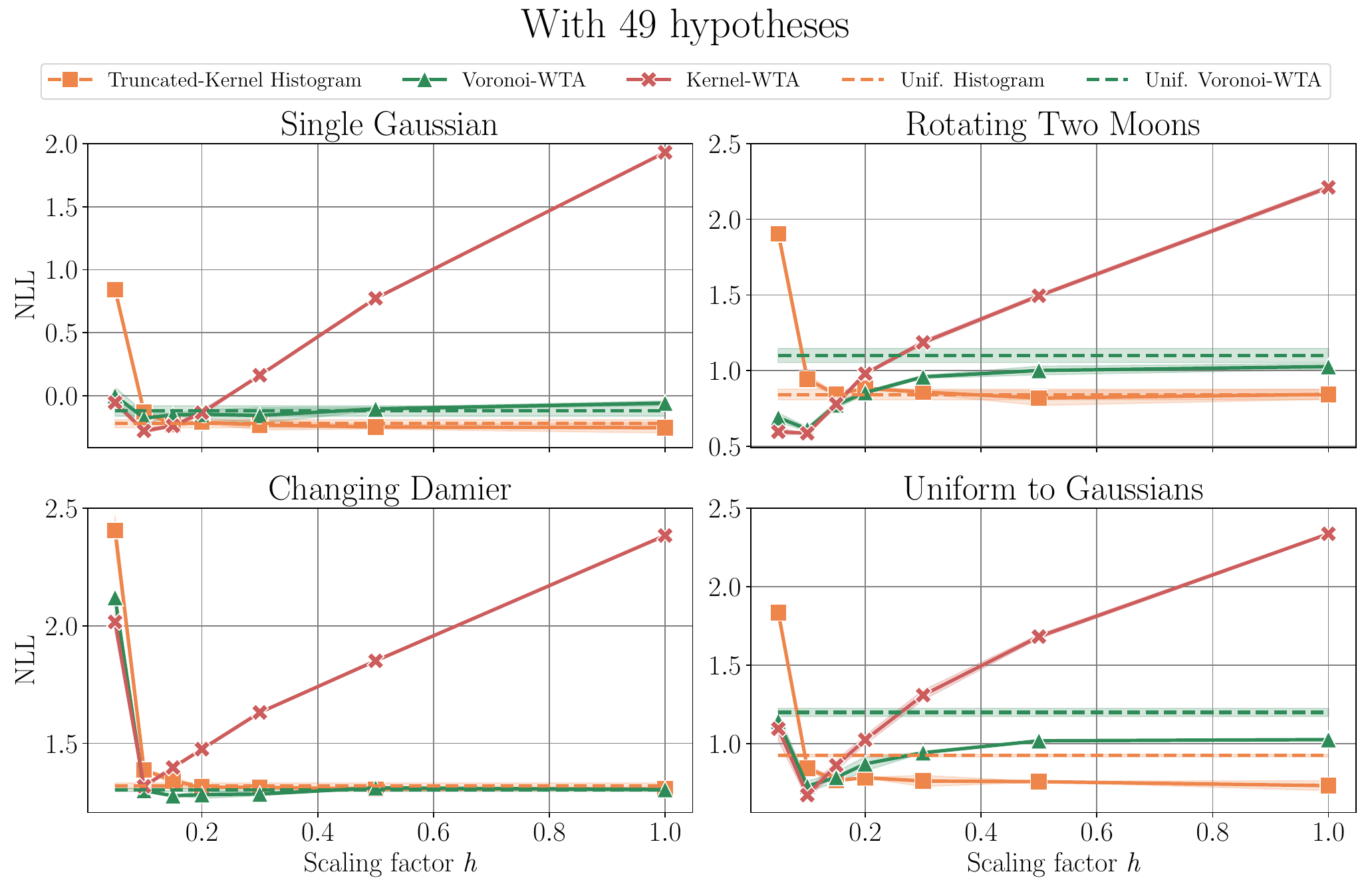}
   \caption{\textbf{NLL \textit{vs. h} with 49 hypotheses.}}
   \label{fig:nll_vs_h_49_hyps_multiple_plots}
\end{figure}

Additional results on the four synthetic datasets \textit{Rotating Two Moons}, \textit{Changing Damier}, \textit{Single Gaussian} and \textit{Uniform to Gaussians} presented in Section \ref{sec:exp-setting} are provided in Figures \ref{fig:nll_vs_h_9_hyps_multiple_plots}, \ref{fig:nll_vs_h_16_hyps_multiple_plots}, \ref{fig:nll_vs_h_25_hyps_multiple_plots}, \ref{fig:nll_vs_h_49_hyps_multiple_plots}. As in Figure \ref{fig:NLLvsh}, the aim here is to demonstrate the resilience of Voronoi-WTA with respect to the choice of the scaling factor $h$ in comparison with different baselines. These include Kernel-WTA, and Truncated-Kernel Histogram, where truncated kernels were placed on the bin position. Here, the truncated kernel variants were computed with normalized Gaussian kernels \eqref{eq:normalized_gaussian_kernel} with $d=2$ here. Uniform kernels were used in the baselines `Unif. Histogram' and `Unif. Voronoi-WTA'. 

At first, several sanity checks can be carried out. One can verify that as $h \rightarrow 0$, Kernel-WTA and Voronoi-WTA are equivalent in all datasets. This outcome is anticipated since, in such regimes, the impact of truncation is virtually negligible, as discussed in Section \ref{sec:truncated-kernels}. Similarly, when $h$ approaches infinity, both Voronoi-WTA and the Truncated-Kernel Histogram tend to align with their respective uniform versions -- Uniform Voronoi-WTA and Uniform Histogram. This behavior is expected due to the bounded nature of the output space. Note that we empirically observe a convex U-shape for the NLL curves on the validation set of Kernel-WTA and Voronoi-WTA as a function of $h$, allowing us to use adaptive grid search algorithms, such as the Golden Section Search \cite{kiefer1953sequential}, which have a fast convergence rate in this setting. 

Additionally, observations consistent with those discussed in Section \ref{fig:NLLvsh} have been made. 
These findings highlight the robustness of Voronoi-WTA against variations in the $h$ parameter, demonstrating its superior performance over Kernel-WTA, at larger values of $h$.
In contrast, Voronoi-WTA's advantages over the Truncated-Kernel Histogram become more pronounced at smaller $h$ values.
Except for the Changing Damier dataset, where the Histogram shows an immediate advantage, the results align with the analysis of Section \ref{sec:quantitative-results}, indicating that the Histogram's performance suffers from suboptimal hypothesis placement in this regime. However, it is important to note that as $h$ increases, the Truncated-Kernel Histogram sometimes matches or exceeds the performance of other methods at a fixed $h$. Likewise, when $\nhyp$ is large, the Histogram achieves greater resolution and becomes competitive, thus compensating the naive placement of the hypotheses.

The slight deterioration in performance when considering uniform kernels compared with Gaussian kernels is attributed to the so-called 
`compactification' issue, as discussed in \citet{polianskii2022voronoi}. Indeed, in scenarios with large cells in the Voronoi tesselation,\footnote{See for instance the hypotheses placement of a WTA-based trained model on a Gaussian distribution in Figure \ref{fig:qualitative}} the application of a uniform kernel can significantly worsen NLL results. This effect is indeed less pronounced in the histogram method, where the volume of each cell remains constant.

Consistently with observations made from audio data discussed in Section \ref{sec:audio}, Figures \ref{fig:nll_vs_h_9_hyps_multiple_plots}, \ref{fig:nll_vs_h_16_hyps_multiple_plots}, \ref{fig:nll_vs_h_25_hyps_multiple_plots}, and \ref{fig:nll_vs_h_49_hyps_multiple_plots} illustrate that the performance gap between Kernel-WTA and Voronoi-WTA widens as $h$ increases and the number of hypotheses grows. It is important to acknowledge a limitation: in scenarios where the hypotheses are few and sufficiently spaced apart, the impact of truncation becomes minor. Under these conditions, Kernel-WTA is likely to offer comparable performance to Voronoi-WTA for a wide range of $h$ values.

\subsection{UCI datasets}
\label{app:uci-datasets}
We conducted additional experiments on the UCI Regression Datasets \cite{ucidataset}, which are a standard benchmark to evaluate conditional density estimators. Table \ref{tab:datasets-description} provides the sizes of the datasets. 

\textbf{Experimental setup.} All the estimators mentioned in our manuscript (Mixture Density Network, Histogram-based methods, and WTA-based methods) are trained and evaluated on these datasets, and results are provided in Table \ref{tab:uci-nll} and Table \ref{tab:uci-rmse}. The results were computed following the same experimental protocol from \citet{hernandez2015probabilistic}. In particular, each dataset is divided into 20 train-test folds, except the protein dataset, which is divided into 5 folds, and the Year Prediction MSD dataset for which a single train-test split is used. Moreover, we use the same neural network backbone for each baseline: a one-hidden layer MLP with ReLU activation function, containing 50 hidden units except for the Protein and Year datasets, for which 100 hidden units were utilized.
Each model was trained using the Adam optimizer over $1,000$ epochs with a constant learning rate of $0.01$.
Our data loading pipeline for the UCI datasets was adapted from the open-sourced implementation of \citet{han2022card}. In the results presented in Table \ref{tab:uci-nll} and Table \ref{tab:uci-rmse}, we follow the convention of highlighting the best models for each dataset in bold, based on the mean value of the metrics. Additionally, any model whose confidence interval overlaps with this best mean is also bolded.

\textbf{Baselines.} Those tables include results from three baselines reported from Table 1 of \citet{lakshminarayanan2017simple} which we use as references for those benchmarks: `PBP' stands for Probabilistic Back Propagation \cite{hernandez2015probabilistic}, and `MC-dropout' corresponds to Monte Carlo Dropout \cite{gal2016dropout}.
The Histogram NLL was computed with truncated kernels (TK-NLL for Truncated-Kernel-Histogram), following the same tuning protocol for $h$ as Voronoi-WTA (V-WTA) and Kernel-WTA (K-WTA).
MDN corresponds to a mixture density network with Gaussian kernels as in Appendix \ref{sec:syn-baselines}. The multi-hypotheses baselines (MDN, Histogram, and WTA-based methods) were trained with $K = 5$. 
In this setup, the regular grid of the histogram was defined with $f_{\theta}^k(x) = \frac{2(k-3)}{5}$ for $k \in \{1,\dots,5\}$.

\textbf{Metrics.} The computed metrics correspond to first the RMSE, which is defined as $\mathrm{RMSE} = \sqrt{\frac{1}{N} \sum_{i} \ell(\hat{y}_i, y_i)}$, where $\hat{y}_i$ denotes the estimated conditional mean, which was estimated with $\sum_{k=1}^{\nhyp} \gamma_k(x) z_k(x)$ for the WTA variants, and $\sum_{k=1}^{\nhyp} \pi_k(x) \mu_k(x)$ for MDN. NLL has been calculated in the same way as in the previous sections, with $\mathrm{dim}(\mathcal{Y}) = 1$.

\textbf{Evaluation details.} Our density estimators were trained according to the procedures outlined in Section \ref{sec:experiments}. Post-training, the scaling factor $h$ was tuned based on the average NLL over the validation set (20 \% of the training data) using a golden section search \cite{kiefer1953sequential} (with tolerance set to  $0.1$ and the search interval bounded by $[0.1, 2]$), following a similar protocol as \citet{gal2016dropout}. In this setup, as per the guidelines in \citet{hernandez2015probabilistic}, we normalized both input and output variables for training using the means and standard deviations from the training data. For evaluation, we restored the original scale of the output predictions with the transformation $f_{\theta}^k(x) \mapsto \mu_{\mathrm{train}} + \sigma_{\mathrm{train}} f_{\theta}^k(x)$ where $\mu_{\mathrm{train}}$ and $\sigma_{\mathrm{train}}$ represent the empirical mean and standard deviation of the response variable across the training set. This transformation also applies to the predicted means of the MDN, while the predicted standard deviations were simply multiplied by $\mu_{\mathrm{train}}$. 

Results confirm the claims made in our manuscript and provide the following insights: 

\begin{itemize}
    \item Voronoi-WTA outperforms the Histogram-based estimator in terms of NLL and performs comparably to Kernel-WTA. This aligns with the findings discussed in \ref{app:additional_results_synth}, where Voronoi-WTA's edge over Kernel-WTA is less pronounced in settings with a limited number of hypotheses and where $h$ has been already optimized.
    Future research will examine how accurately optimizing $h$ during validation affects performance when there is a distribution shift between validation and test samples. Such a study could specifically assess potential performance discrepancies between Voronoi-WTA and Kernel-WTA under these conditions.
    \item WTA-based estimators outperform the Histogram-based estimator in terms of RMSE.
    \item Voronoi-WTA is competitive with MDN for NLL. This is a promising result as the NLL is optimized only during validation for Voronoi-WTA, and during training for MDN. Moreover, we faced stability issues when training MDN (\textit{e.g.,} numerical overflows in log-likelihood computation), that we did not encounter with Voronoi-WTA.
\end{itemize}

Looking at Table \ref{tab:uci-nll} and Table \ref{tab:uci-rmse}, we can see that for data-intensive tasks (Protein, Year) Voronoi-WTA is on par with standard baselines in those benchmarks, and occasionally outperforms them, both in terms of NLL and RMSE.
However, for tasks with limited data available, it seems that WTA is not the most suitable method. This underperformance in the small data regime could be expected: because of the competitive nature of the WTA training scheme, each prediction model only sees a fraction of the data.

\begin{table}
    \caption{\textbf{UCI Regression benchmark datasets.} $N$ is the number of data samples and $\mathrm{dim}(\mathcal{X})$ the input dimension. Here, the output space $\mathcal{Y}$ is one dimensional.}
    \resizebox{\textwidth}{!}{
    \begin{tabular}{l|cccccccccc}
    \toprule \text {Dataset } & \text {Boston} & \text {Concrete} & \text {Energy} & \text { Kin8nm } & \text {Naval} & \text {Power} & \text {Protein} & \text {Wine} & \text {Yacht} & \text {Year} \\
\midrule$\left(N; \mathrm{dim}(\mathcal{X})\right)$ & (506;13) & (1030;8) & (768;8) & (8192;14) & (11934;16) & (9568;4) & (45730;9) & (1599;11) & (308;6) & (515345;90)\\
\bottomrule
\end{tabular}}
    \label{tab:datasets-description}  
\end{table}

\begin{table}[h]
    \centering
    \caption{\textbf{UCI regression benchmark datasets comparing NLL with 5 hypotheses.} $^{\star}$ corresponds to reported results from \citet{lakshminarayanan2017simple}. `--' corresponds to cases where MDN has not converged. Best results are in \textbf{bold}. $\pm$ represents the standard deviation over the splits (non-applicable for the year dataset).}
    \resizebox{\columnwidth}{!}{
    \begin{tabular}{l  ccc|cccc}
    \toprule
     & \multicolumn{7}{c}{NLL ($\downarrow$)}\\
     \cmidrule(l{2pt}r{2pt}){2-8}
    Datasets & PBP$^{\star}$ & MC Dropout$^{\star}$ & Deep Ensembles$^{\star}$ & MDN & TK-Hist & K-WTA & V-WTA\\
    \midrule
    Boston & \textbf{2.57 $\pm$ 0.09} & \textbf{2.46 $\pm$ 0.25} & \textbf{2.41 $\pm$ 0.25} & 2.95 $\pm$ 0.31 & 2.83 $\pm$ 0.17 & \textbf{2.48 $\pm$ 0.16} & \textbf{2.48 $\pm$ 0.19} \\
Concrete & 3.16 $\pm$ 0.02 & \textbf{3.04 $\pm$ 0.09} & \textbf{3.06 $\pm$ 0.18} & 3.96 $\pm$ 0.24 & 3.47 $\pm$ 0.12 & \textbf{3.09 $\pm$ 0.10} & \textbf{3.08 $\pm$ 0.12} \\
Energy & 2.04 $\pm$ 0.02 & 1.99 $\pm$ 0.09 & \textbf{1.38 $\pm$ 0.22} & \textbf{1.25 $\pm$ 0.25} & 2.39 $\pm$ 0.11 & 2.27 $\pm$ 1.22 & 2.22 $\pm$ 1.20 \\
Kin8nm & -0.90 $\pm$ 0.01 & -0.95 $\pm$ 0.03 & \textbf{-1.20 $\pm$ 0.02} & -0.87 $\pm$ 0.05 & -0.70 $\pm$ 0.03 & -0.73 $\pm$ 0.03 & -0.85 $\pm$ 0.05 \\
Naval & -3.73 $\pm$ 0.01 & -3.80 $\pm$ 0.05 & \textbf{-5.63 $\pm$ 0.05} & \textbf{-5.47 $\pm$ 0.29} & -3.06 $\pm$ 0.02 & -1.94 $\pm$ 0.00 & -3.52 $\pm$ 0.38 \\
Power & 2.84 $\pm$ 0.01 & \textbf{2.80 $\pm$ 0.05} & \textbf{2.79 $\pm$ 0.04} & 3.02 $\pm$ 0.07 & 3.18 $\pm$ 0.02 & \textbf{2.81 $\pm$ 0.05} & \textbf{2.85 $\pm$ 0.06}\\
Protein & 2.97 $\pm$ 0.00 & 2.89 $\pm$ 0.01 & 2.83 $\pm$ 0.02 & -- & 2.64 $\pm$ 0.01 & \textbf{2.39 $\pm$ 0.03} & \textbf{2.42 $\pm$ 0.04} \\
Wine & 0.97 $\pm$ 0.01 & 0.93 $\pm$ 0.06 & 0.94 $\pm$ 0.12 & \textbf{-1.53 $\pm$ 0.76} & 0.46 $\pm$ 0.10 & 0.42 $\pm$ 0.18 & 0.37 $\pm$ 0.17 \\
Yacht & 1.63 $\pm$ 0.02 & 1.55 $\pm$ 0.12 & \textbf{1.18 $\pm$ 0.21} & 2.43 $\pm$ 0.72 & 2.80 $\pm$ 0.23 & 2.23 $\pm$ 0.52 & 2.05 $\pm$ 0.46 \\
Year & 3.60 $\pm$ NA & 3.59 $\pm$ NA & 3.35 $\pm$ NA & -- & 3.57 $\pm$ NA & \textbf{3.26 $\pm$ NA} & 3.29 $\pm$ NA \\

\bottomrule
    \end{tabular}
    }
    \label{tab:uci-nll}
    \end{table}

\begin{table}[h]
    \centering
    \caption{\textbf{UCI regression benchmark datasets comparing RMSE with 5 hypotheses.} $^{\star}$ corresponds to reported results from \citet{lakshminarayanan2017simple}. `--' corresponds to cases where MDN has not converged. Best results are in \textbf{bold}. $\pm$ represents the standard deviation over the splits (non-applicable for the year dataset).}
    \resizebox{\columnwidth}{!}{
    \begin{tabular}{l  ccc|cccc}
    \toprule
    & \multicolumn{7}{c}{RMSE ($\downarrow$)}\\
     \cmidrule(l{2pt}r{2pt}){2-8}
    Datasets & PBP$^{\star}$ & MC Dropout$^{\star}$ & Deep Ensembles$^{\star}$ & MDN & TK-Hist & K-WTA & V-WTA\\
    \midrule
    Boston & \textbf{3.01 $\pm$ 0.18} & \textbf{2.97 $\pm$ 0.85} & \textbf{3.28 $\pm$ 1.00} & \textbf{3.65 $\pm$ 1.15} & 5.53 $\pm$ 1.26 & \textbf{3.54 $\pm$ 1.16} & \textbf{3.54 $\pm$ 1.16} \\
Concrete & \textbf{5.67 $\pm$ 0.09} & \textbf{5.23 $\pm$ 0.53} & 6.03 $\pm$ 0.58 & 7.52 $\pm$ 0.96 & 9.03 $\pm$ 0.68 & 6.02 $\pm$ 0.65 & 6.02 $\pm$ 0.65 \\
Energy & 1.80 $\pm$ 0.05 & \textbf{1.66 $\pm$ 0.19} & 2.09 $\pm$ 0.29 & 2.35 $\pm$ 0.45 & 3.89 $\pm$ 0.48 & 2.53 $\pm$ 0.99 & 2.53 $\pm$ 0.99 \\
Kin8nm & 0.10 $\pm$ 0.00 & 0.10 $\pm$ 0.00 & 0.09 $\pm$ 0.00 & \textbf{0.08 $\pm$ 0.00} & 0.14 $\pm$ 0.01 & 0.10 $\pm$ 0.01 & 0.10 $\pm$ 0.01 \\
Naval & 0.01 $\pm$ 0.00 & 0.01 $\pm$ 0.00 & \textbf{0.00 $\pm$ 0.00} & \textbf{0.00 $\pm$ 0.00} & 0.01 $\pm$ 0.00 & \textbf{0.00 $\pm$ 0.00} & \textbf{0.00 $\pm$ 0.00} \\
Power & \textbf{4.12 $\pm$ 0.03} & \textbf{4.02 $\pm$ 0.18} & \textbf{4.11 $\pm$ 0.17} & \textbf{4.11 $\pm$ 0.18} & 7.55 $\pm$ 0.17 & \textbf{4.18 $\pm$ 0.16} & \textbf{4.18 $\pm$ 0.16} \\
Protein & 4.73 $\pm$ 0.01 & \textbf{4.36 $\pm$ 0.04} & 4.71 $\pm$ 0.06 & -- & 4.47 $\pm$ 0.02 & \textbf{4.39 $\pm$ 0.10} & \textbf{4.39 $\pm$ 0.10} \\
Wine & \textbf{0.64 $\pm$ 0.01} & \textbf{0.62 $\pm$ 0.04} & \textbf{0.64 $\pm$ 0.04} & \textbf{0.65 $\pm$ 0.04} & 0.67 $\pm$ 0.04 & \textbf{0.63 $\pm$ 0.04} & \textbf{0.63 $\pm$ 0.04} \\
Yacht & \textbf{1.02 $\pm$ 0.05} & \textbf{1.11 $\pm$ 0.38} & 1.58 $\pm$ 0.48 & 4.08 $\pm$ 1.57 & 8.27 $\pm$ 2.83 & 3.28 $\pm$ 1.39 & 3.28 $\pm$ 1.39 \\
Year & 8.88 $\pm$ NA & \textbf{8.85 $\pm$ NA} & 8.89 $\pm$ NA & -- & 9.31 $\pm$ NA & 9.09 $\pm$ NA & 9.09 $\pm$ NA \\

\bottomrule
        \end{tabular}
        }
        \label{tab:uci-rmse}
        \end{table}

\subsection{Audio data experiments}
\label{app:audio}
\subsubsection{Setup}
\label{app:setup}
This section describes in greater detail the experimental setup from Section \ref{sec:audio}. The audio data experiments are based on the protocol of \citet{schymura2021pilot, letzelter2023resilient} which is given as follows. Nevertheless, distinctions are to be made with respect to previous works. While \citet{letzelter2023resilient} study the case of punctual sound source localization \cite{grumiaux2022survey}, we apply our estimators here the more general problem of data uncertainty quantification, due for instance to an actual spatial dispersion of sound sources or annotation errors. This extension is described in greater detail in the paragraph `Synthetic perturbations' given below.

\textbf{Dataset preprocessing.} In our experiments, we used the ANSYN dataset \cite{adavanne2018sound}, which contains spatially localized sound events under anechoic conditions. We conformed to the dataset processing techniques as detailed in works by \citet{schymura2021pilot, letzelter2023resilient}. We employed the first-order Ambisonics format with four input audio channels. The audio recordings, with a 44.1 kHz sampling rate, were segmented into 30-second durations. These segments were further divided into non-overlapping chunks of 2 s to serve as the basis for training data. Spectrograms were calculated with a Hann window of 0.04 seconds for the Short Term Fourier Transform calculations. This was done with a 50\% overlap between frames and utilizing 2048 points for the Fast Fourier Transform computation. The information input into the models included both the amplitude and phase data, stacked channel-wise.

\textbf{Architecture.} We utilized SeldNet \cite{adavanne2018sound} as backbone (with $\sim 1.6$ M parameters). The data processing starts with the preprocessing of raw audio, which is then fed into the model in the form of spectrograms of a set duration, including phase information. The model then provides localization outputs at the specified resolution, in this case, considering $T=100$ output time steps for each segment. The architecture processes the data through feature extraction modules, including Convolutional Neural Networks (CNNs) and Bi-directional Gated Recurrent Unit (GRU) layers, creating a representation for each time step at the determined output resolution. These intermediate representations are subsequently connected to the final localization predictions via Fully Connected (FC) layers. To suit the Winner-takes-all framework, the terminal FC layers are divided into $K$ separate FC heads, each delivering a two-dimensional output (azimuth and elevation) at each time step. 
Additionally, the system incorporates score heads at the final stage, each yielding a single value between 0 and 1, achieved through a sigmoid activation function. Note that the `Histogram' baseline in the audio experiments of Section \ref{sec:audio} utilizes the same backbone with fixed hypotheses heads. More precisely, denoting $k$ the hypothesis index associated with the histogram grid for row $i \in [\![1,N_{\mathrm{rows}}]\!]$ and column $j \in [\![1,N_{\mathrm{cols}}]\!]$ we have, for every $x \in \cX$: $$f_{\theta}^{k}(x) = \left( -\pi+\left(i-\frac{1}{2}\right) \frac{2 \pi}{N_{\mathrm{rows}}}, -\frac{\pi}{2}+\left(j-\frac{1}{2}\right) \frac{\pi}{N_{\mathrm{cols}}}\right),$$ 
where these coordinates correspond, respectively, to the azimuth and the elevation within the ranges $[-\pi,\pi]$ and $[-\frac{\pi}{2}, \frac{\pi}{2}]$. Note that for the 2D Histogram, $K = N_{\mathrm{rows}} N_{\mathrm{cols}}$ with the notations of Table \ref{tab:cvde_vs_kde}. The results of for the 2D Histogram were computed with $N_{\mathrm{rows}} = N_{\mathrm{cols}}$, except when $K = 20$ (in Figure \ref{fig:app-nll-h-ansyn} and Table \ref{tab:quantization_error}) where we set $N_{\mathrm{rows}} = 5$ and $N_{\mathrm{cols}} = 4$. 

\textbf{Training details}. The trainings were conducted using the
AdamW optimizer \cite{loshchilov2018decoupled}, with a batch size of 32, an initial learning rate of $0.05$, and following the scheduling scheme from \citet{vaswani2017attention}. The WTA model was trained using the multi-target version of the
Winner-takes-all loss (Equation 2 and 5 of \citet{letzelter2023resilient}), using confidence weight $\beta = 1$. Note that as the predictions and the targets belong to the unit sphere, the underlying loss $\ell$ used was the spherical distance $\ell(\hat{\by}, \by) = \arccos [\hat{y}^{\top} y]$ where $y, \hat{y} \in \mathbb{S}^2 \subseteq \mathbb{R}^3$.

\textbf{Synthetic perturbation.} To properly evaluate the ability of the baseline methods to predict conditional distributions, we propose a protocol that involves injecting heteroscedastic noise.
\begin{enumerate}
     \item During training, we performed \textit{class-conditioned perturbation} of the target source positions. More precisely, for each of the $n_{\mathrm{c}} = 11$ classes in the dataset
     (\texttt{speech, phone, keyboard, doorslam, laughter, keysDrop, pageturn, drawer, cough, clearthroat, knock}), we randomly perturbed the target position of the sound sources using a distinct standard deviation assigned to each class. These perturbations were drawn from normal distributions on the spherical coordinates, with angular standard deviations (in degrees) in the set $\{5 + 5c\;|\;c \in [\![0,n_{\mathrm{c}} - 1]\!] \}$.
    \item Once trained, the hope is that at inference time, the models have understood the ambiguity in the data, such that they can infer, given a new input audio snippet, the spatial spread of the sound source. 
    \item We evaluate the quality of the estimated distribution with regard to the ground-truth distribution, which is known here.

\end{enumerate}

\textbf{Evaluation details.} For preserving the geometry of the sphere, the likelihoods of \naive\ and \ours\ were computed using isotropic von Mises-Fisher kernels. For data that lives on a 2-dimensional sphere, \textit{i.e.,} for $f_{\theta}^k(x),y \in \mathbb{S}^2 \subseteq \mathbb{R}^3$, such kernels write in the form
$$
K_h(f_{\theta}^k(x),y) = \frac{\kappa}{4 \pi \sinh \kappa} \exp \left(\kappa y^{\top} f_{\theta}^k(x)\right)\,,
$$
where $h > 0$ is the scaling factor associated with the kernel defined as $h = \frac{1}{\sqrt{\kappa}}$. 
The kernel is thus defined with a single concentration parameter $\kappa > 0$.
The volume $V(f_\theta^k(x), K_h) \triangleq \int_{\cY_\theta^k(x)} K_h \left(f_\theta^k(x), \tilde{\by}\right) \mathrm{d} \tilde{\by}$ of the predicted distribution of Voronoi-WTA in each curved cell $k$ was computed noting that 
\begin{equation}\int_{\cY_\theta^k(x)} K_h \left(f_\theta^k(x), \tilde{\by}\right) \mathrm{d} \tilde{\by} \simeq \mathcal{S}(\cY_\theta^k(x)) \mathbb{E}_{Y \sim \mathcal{U}(\cY_\theta^k(x))}[K_h (f_\theta^k(x), Y)]\,,
\label{eq:mc-approx}
\end{equation}
where $\mathcal{S}(\cY_\theta^k(x))$ is the surface of cell $k$. Note that we also have \begin{equation}
    \mathcal{S}(\cY_\theta^k(x)) = 4 \pi \mathbb{E}_{Y \sim \mathcal{U}(\mathbb{S}^2)}[\mathds{1}(Y \in \cY_\theta^k(x))]\,.
    \label{eq:vol-approx}
    \end{equation} 

In practice, the expectations in \eqref{eq:mc-approx} and \eqref{eq:vol-approx} were computed using monte-carlo estimates, \textit{i.e.},
$$ \mathbb{E}_{Y \sim \mathcal{U}(\cY_\theta^k(x))}[K_h (f_\theta^k(x), Y)] \simeq \frac{1}{N^{\prime}} \sum_{y_i \sim \mathcal{U}(\cY_\theta^k(x))} K_h (f_\theta^k(x), y_i)\,,$$
$$\mathbb{E}_{Y \sim \mathcal{U}(\mathbb{S}^2)}[\mathds{1}(Y \in \cY_\theta^k(x))] \simeq \frac{1}{N^{\prime}} \sum_{y_i \sim  \mathcal{U}(\mathbb{S}^2)} \mathds{1}(y_i \in \cY_\theta^k(x))\,,$$
where $N^{\prime}$ is the number of sampled points. The uniform sampling on the unit sphere $\mathcal{U}(\mathbb{S}^2)$ has been performed by sampling the azimuth and elevation angles with $\phi \sim \mathcal{U}([0,2\pi])$ and $\theta \sim \mathrm{arccos}[\mathcal{U}([-1,1])]-\pi/2$ \cite{weisstein2002sphere}. 

The $\mathrm{NLL}$ defined in \eqref{eq:nll} was itself computed using a single sample from the target distribution $\rho_x$ for each input, which can be assumed to be known in the synthetic data perturbation setup (which is as a mixture of Gaussians here; see Figure \ref{fig:qualitative_plot_audio} for an illustration).

For adapting the quantization error to the spherical geometry, \eqref{eq:quantization_error} was generalized with 
\begin{align}
    \distortion(\mathcal{Z}) = \int_\cY \min_{z\in\cZ} \; \mathrm{dist}(y,z)^2 \;\rho_x(\by) \mathrm{d}\by\,,
\end{align}
where $\mathrm{dist}$ is the spherical distance defined as $\mathrm{dist}(y,z) = \arccos [y^{\top} z]$ for $y,z \in\mathbb{S}^2 \subseteq \mathbb{R}^3$.

Our implementation was based on \citet{schymura2021pilot, schymura2021exploiting, letzelter2023resilient, polianskii2022voronoi}. 

\subsubsection{Additional results}
\label{app:additional-results-appendix}

\begin{figure}
    \centering
    \includegraphics[width=\columnwidth]{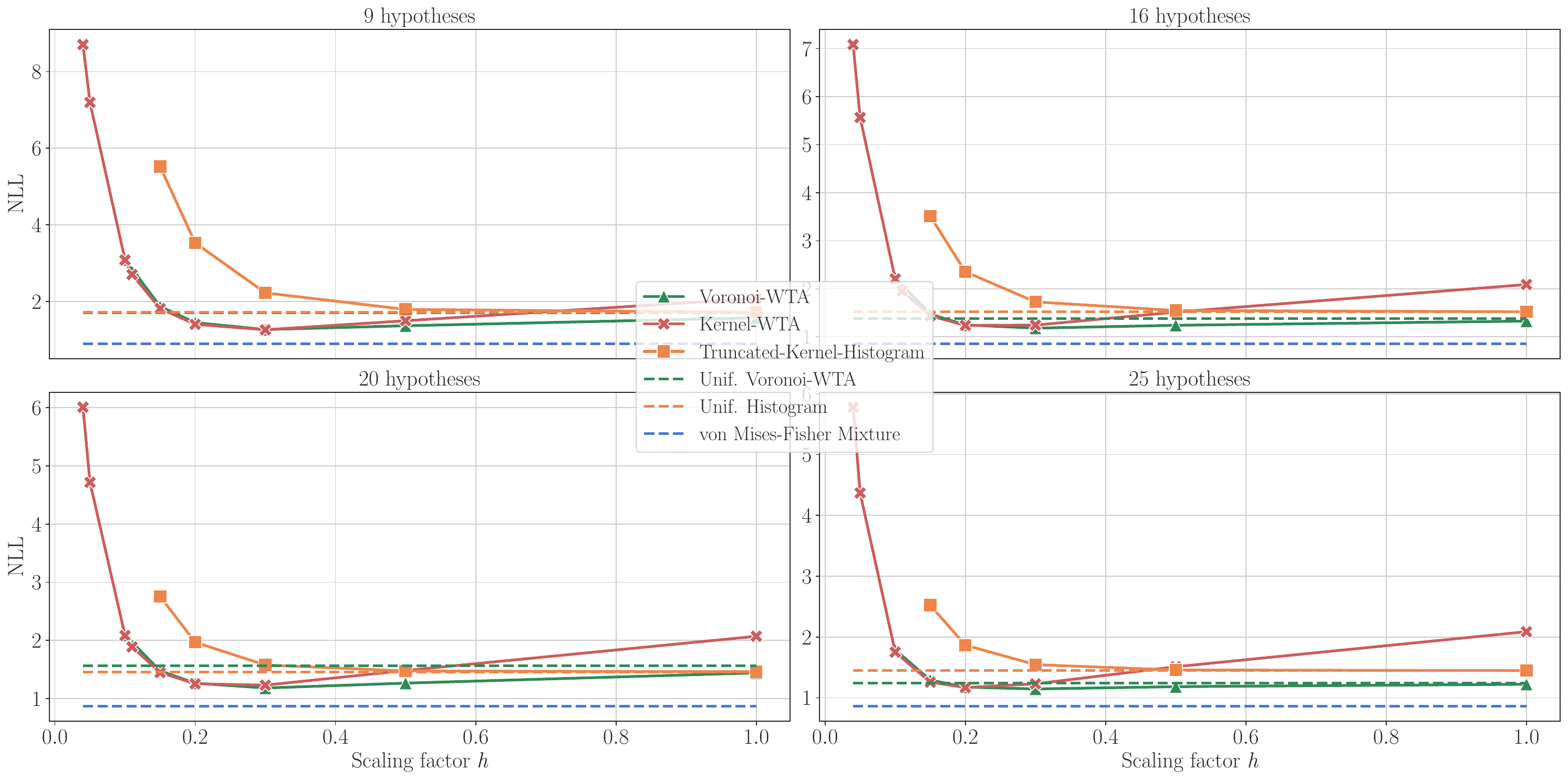}
    \caption{\textbf{$\mathrm{NLL}$ \textit{vs.} $h$ results on ANSYN with 9, 16, 20 and 25 hypotheses.} $\mathrm{NLL}$ on the test set of the spatial audio dataset ANSYN (see Section \ref{sec:audio}) as a function of the scaling factor $h$. `Von Mises mixture' corresponds to a von Mises-Fisher-based mixture density network. The legend is the same as in Figure \ref{fig:NLLvsh}. `Truncated-Histogram' corresponds to the standard Histogram where truncated kernels are placed on the fixed hypotheses, instead of standard uniform ones. `Unif. Histogram' and `Unif. Voronoi WTA' corresponds to uniform truncated kernels, which correspond to the limit of their von Mises-Fisher truncated variant as $h \rightarrow \infty$. As expected, `Kernel-WTA' and `Voronoi-WTA' coincide as $h \rightarrow 0$ (see Section \ref{sec:truncated-kernels}). Here, dashed lines correspond to baselines that are independent of $h$.}
    \label{fig:app-nll-h-ansyn}
\end{figure}

Additional results complementing those in Table \ref{tab:cvde_vs_kde} in the ANSYN audio dataset are presented in Figure \ref{fig:app-nll-h-ansyn} and in Table \ref{tab:quantization_error}, corresponding to $9, 16, 20$ and $25$ hypotheses, respectively. 

In Figure \ref{fig:app-nll-h-ansyn} each subplot illustrates the Negative Log-Likelihood ($\mathrm{NLL}$) on the test set as a function of the scaling factor $h$, across different baselines. The legends in the figures follow the format of Figure \ref{fig:NLLvsh}: `WTA' denotes methods trained using the Winner-takes-all scheme, `Histogram' refers to the Histogram baseline, `Voronoi' denotes the application of truncated kernels based on the `Histogram' or `WTA' hypotheses, and `Kernel-WTA' is defined in Section \ref{sec:Kernel-WTA}. `Unif.' indicates the utilization of uniform kernels instead of von Mises-Fisher kernels, when using truncated kernels estimators. Here, we introduce an additional baseline for further discussion: a mixture density network based on the von Mises-Fisher distribution (M-vMF). It is based on the same training loss as \eqref{eq:mdn}, but considering instead a mixture of von Mises-Fisher distribution $$\hat{\rho}_{\theta}(y | x) = \sum_{k=1}^{K} \pi_k(x) \frac{\kappa_k(x)}{4 \pi \sinh \kappa_k(x)} \exp \left(\kappa_k(x) y^{\top} \mu_k(x)\right),$$ with parameters $\{\pi_k(x), \mu_k(x), \kappa_k(x)\}$. Stability issues were observed in M-vMF training, including numerical overflow in $\mathrm{NLL}$ computation. Those were partially reduced, for instance using the following expression, $\mathrm{log} (\mathrm{sinh}(t)) = t + \mathrm{log}(1-e^{-2t}) - \mathrm{log}(2)$ allowing computation stability for large $t \in \mathbb{R}_{+}^{*}$. 

\textbf{NLL Comparison.} First, it is important to highlight the consistency observed in the results as the scaling factor $h$ approaches zero. In this limit, the NLL values for both Voronoi-WTA and Kernel-WTA tend to coincide, an outcome that aligns with expectations; the impact of truncation diminishes in this scenario, as elaborated in Section \ref{sec:quantitative-results}. Analogously to observations made in Figure \ref{fig:NLLvsh}, the NLL values associated with the Histogram baseline (orange squares) demonstrate the fastest divergence as $h \rightarrow 0$, due to the non-optimal positioning of the hypotheses.

As the scaling factor $h$ increases, a quantitative improvement is observed in the $\mathrm{NLL}$ performance of Voronoi-WTA (green triangles) compared to other baseline methods. Notably, we notice that the Kernel Density Estimation (KDE) approaches, namely Kernel-WTA and Unweighted Kernel-WTA, exhibit a performance decline with increasing $h$, in contrast to the truncated kernel versions (Voronoi-WTA and Voronoi-Histogram, with Indian red color in the plots). This divergence is attributed to the dispersion of probability mass beyond the boundaries of the Voronoi cells in KDE-variants, leading to a loss of local geometric properties, as detailed in Section \ref{sec:Kernel-WTA}. 

Furthermore, as outlined in Section \ref{sec:audio}, note that the performance gap between Kernel-WTA and Voronoi-WTA tends to narrow with a decrease in the number of hypotheses. This is because, in scenarios with fewer hypotheses, the impact of kernel truncation becomes less significant. We see that in those settings, the Voronoi-WTA almost reaches the performance of the M-vMF mixture density network, which slightly outperforms the other estimators in terms of NLL, with the gap getting closer when the number of hypotheses is large. This is promising, as von Mises Fisher has three advantages in this context: 1) it optimizes NLL during training; 2) the audio dataset targets are perturbed with synthetic Gaussian angular noise, which is similar to the von Mises Fisher kernel; 3) the M-vMF has more parameters than the Voronoi-WTA method, as it allows for a variable concentration parameter in each cell. For a fairer comparison, experiments on real-world data without synthetic perturbations were performed in Appendix \ref{app:uci-datasets}, showing that Voronoi-WTA can even outperform MDN-based methods in terms of NLL in some settings. 

\textbf{Quantization Error comparison.} Quantization error results are provided in Table \ref{tab:quantization_error}, are consistent with the results of Section \ref{sec:audio}. First, we see that for all methods, the quantization improves with $K$, which is consistent. Secondly, we see an advantage in the quantization error between the WTA-based and the M-vMF methods. This is consistent with the third line of Figure \ref{fig:quantitative}, which shows that WTA tends to outperform MDN in terms of quantization error.

Visualizations of the WTA predictions on audio data are also provided in Figure \ref{fig:qualitative_plot_audio}.

\begin{table}
\caption{\textbf{Quantization Error comparison.}
Quantization Error ($\times 10^{2}$) for the compared estimators on the spatial audio dataset ANSYN (see Section 6.4) with spherical underlying distances (expressed in radians). M-vMF corresponds to a von Mises-Fisher-based mixture density network.}
\centering 
\begin{tabular}{lcccccccccccc}
\toprule \multicolumn{1}{c}{} & \multicolumn{4}{c}{$K$} \\
 \cmidrule(lr){2-5}
 Estimator & 9 & 16 & 20 & 25\\
 \midrule
 V-WTA & \textbf{0.42} & \textbf{0.26} & \textbf{0.27} & \textbf{0.17} \\ 
 K-WTA & \textbf{0.42} & \textbf{0.26} & \textbf{0.27} & \textbf{0.17} \\
 M-vMF & 0.62 & 0.40 & 0.34 & 0.29\\
Hist & 1.23 & 0.72 & 0.55 & 0.47\\
\bottomrule
\end{tabular}
\label{tab:quantization_error}
\end{table}

\begin{figure}
    \centering
    \includegraphics[width=\columnwidth]{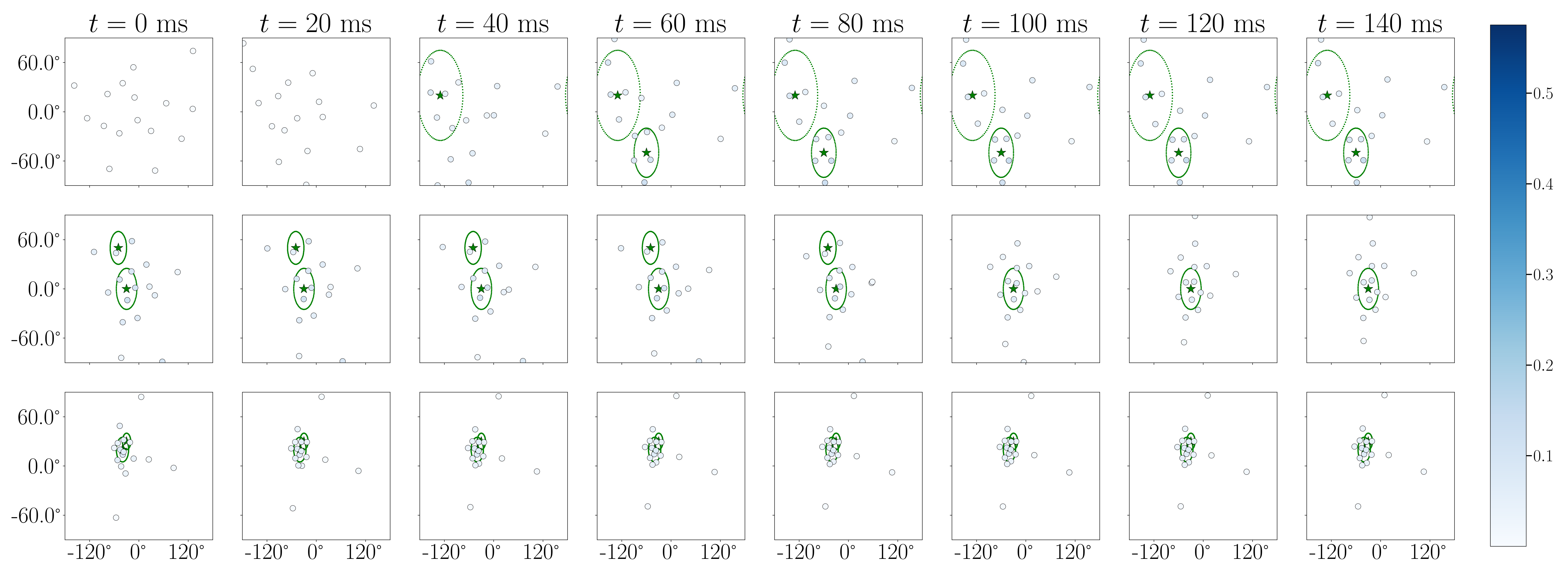}
    \caption{\textbf{Quantifying Uncertainty in Spatial Audio with WTA Learners.} Results for audio clips from the test set of the ANSYN dataset. Each row corresponds to distinct recordings, while each column represents different time frames. The subplot's axes correspond to azimuth and elevation in degrees. The target positions of sound sources, marked by green stars, are within ellipses showing their theoretical spatial dispersion. The WTA model's 16 predictions are indicated by blue shaded circles, and the score confidence is indicated by the shade intensity (the legend is given by the color bar on the \textit{right}). 
    We can see that the prediction dispersion follows closely the theoretical dispersion of the sources. This demonstrates that the model successfully grasps input-dependent uncertainty.} 
    \label{fig:qualitative_plot_audio}
\end{figure}

\subsection{Computation details}

In this research, we utilized the \href{https://www.python.org/}{Python} programming language, along with the \href{https://pytorch.org/Pytorch}{Pytorch} \cite{paszke2019pytorch} deep learning framework. We also employed the \href{https://github.com/facebookresearch/hydra}{Hydra} and \href{https://pypi.org/project/mlflow/}{MLFlow} libraries for experimental purposes. Our coding was inspired by several previous works \cite{ adavanne2019localization, makansi2019overcoming, schymura2021pilot, polianskii2022voronoi, han2022card, letzelter2023resilient}. The training of our neural networks was conducted on NVIDIA A100 GPUs.

\end{document}